\documentclass[letterpaper]{article}
\usepackage{aaai25}
\usepackage{hyperref}

\usepackage{xcolor}
\definecolor{navy}{rgb}{0.0, 0.0, 0.5}

\hypersetup{
  citecolor  = black,
  urlcolor   = navy,
  colorlinks = true,
}

\usepackage{amssymb,amsthm,amsmath}
\newtheorem{theorem}{Theorem}
\newtheorem{example}{Example}

\newtheorem{lemma}{Lemma}
\newtheorem{claim}{Claim}
\newtheorem{remark}{Remark}
\newtheorem{definition}{Definition}
\newtheorem{corollary}{Corollary}

\usepackage{times}
\usepackage{helvet}
\usepackage{courier}
\usepackage{graphicx}
\urlstyle{rm}

\usepackage{natbib}
\usepackage{caption}
\frenchspacing
\setlength{\pdfpagewidth}{8.5in}
\setlength{\pdfpageheight}{11in}

\nocopyright

\usepackage[frozencache]{minted}
\usepackage{thm-restate}

\pdfinfo{
/TemplateVersion (2025.1)
}

\setcounter{secnumdepth}{2}

\title{Revelations: A Decidable Class of POMDPs with Omega-Regular Objectives}
\author {
    Marius Belly\textsuperscript{\rm 1},
    Nathanaël Fijalkow\textsuperscript{\rm 1},
    Hugo Gimbert\textsuperscript{\rm 1},\\
    Florian Horn\textsuperscript{\rm 2},
    Guillermo A. Pérez\textsuperscript{\rm 3},
    Pierre Vandenhove\textsuperscript{\rm 1}\thanks{Authors are listed in alphabetical order.}
}
\affiliations {
    \textsuperscript{\rm 1}CNRS, LaBRI, Universit\'e de Bordeaux, France\\
    \textsuperscript{\rm 2}CNRS, IRIF, Universit\'e de Paris, France\\
    \textsuperscript{\rm 3}University of Antwerp -- Flanders Make, Antwerp, Belgium
}

\usepackage{tikz}
\usepackage{cleveref}
\usepackage{booktabs}
\usepackage{listings}
\input{commands.tex}

\begin{document}

\maketitle

\begin{abstract}
Partially observable Markov decision processes (POMDPs) form a prominent model for uncertainty in sequential decision making. We are interested in constructing algorithms with theoretical guarantees to determine whether the agent has a strategy ensuring a given specification with probability~$1$. This well-studied problem is known to be undecidable already for very simple omega-regular objectives, because of the difficulty of reasoning on uncertain events.
We introduce a revelation mechanism which restricts information loss by requiring that almost surely the agent has eventually full information of the current state. Our main technical results are to construct exact algorithms for two classes of POMDPs called \emph{weakly} and \emph{strongly revealing}. Importantly, the decidable cases reduce to the analysis of a finite belief-support Markov decision process. This yields a conceptually simple and exact algorithm for a large class of POMDPs.
\end{abstract}

\begin{links}
    \link{Code}{https://github.com/gaperez64/pomdps-reveal}
\end{links}

\section{Introduction}
Partially observable Markov decision processes (POMDPs) form a prominent model for uncertainty in sequential decision making.
They were defined in the 1960s~\cite{Astrom:1965} for operations research and introduced in artificial intelligence by the seminal paper of~\citet{Kaelbling.Littman.ea:1998}.
We consider POMDPs from a model-based point of view common in planning and in formal methods. Our goal is to construct exact (as opposed to approximate) algorithms that take as an input a complete description of the POMDP and construct a strategy ensuring a given specification.
A long line of work has established that most formulations of this problem are
undecidable.
For instance, even in the extreme case where the agent has no information and
the goal is to reach a target state with arbitrarily high probability, complex
convergence phenomena occur, implying strong undecidability
results~\cite{Madani.Hanks.ea:2003,Gimbert.Oualhadj:2010,Fijalkow:2017}.

In this work, we are interested in constructing \emph{almost-sure
strategies}, meaning strategies ensuring their specifications with probability~$1$.
We consider the class of omega-regular objectives (all expressible as \emph{parity objectives}), which is a robust class including properties expressible in Linear Temporal Logic~\cite{Pnueli:1977,Giacomo.Vardi:2013}.
Determining whether there exists an almost-sure strategy against the subclass of CoB{\"u}chi objectives (requiring to avoid a target from some point onwards) is undecidable~\cite{Chatterjee.Chmelik.ea:2016,Bertrand.Genest.ea:2017}.
There is a vast body of work towards approximate and practical
solutions: for instance, using interpolation in the belief space~\cite{Lovejoy:1991}, approximation of the value function~\cite{Hauskrecht:2000}, or Monte Carlo tree search approaches~\cite{Silver.Veness:2010}.
This is orthogonal to the current paper since we focus on exact algorithms.

\textbf{Our starting point} is a simple approach to construct almost-sure strategies:
from the POMDP, we build a Markov decision process (MDP) whose states are \emph{supports of the beliefs} of the POMDP.
In other words, we store information about which states we can be in, but abstract away the probabilities.
The \emph{belief-support MDP} serves as a finite abstraction of the POMDP; one could expect that there exists an almost-sure strategy in the POMDP if and only if there exists one in the corresponding belief-support MDP.
Unfortunately, this abstraction is neither sound nor complete; we present a simple counterexample in Figure~\ref{fig:simpleUnsoundIncomplete}.

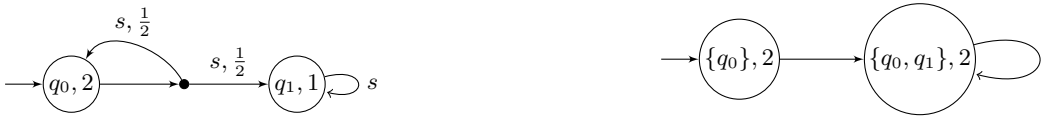
\begin{figure*}[t]
	\centering
	\begin{minipage}{.5\textwidth}
		\centering
		\begin{tikzpicture}[every node/.style={font=\small,inner sep=1pt}]
			\draw ($(0,0)$) node[rond] (q1) {$\state_0, 2$};
			\draw ($(q1.west)-(0.5,0)$) edge[-latex'] (q1);
			\draw ($(q1)+(3,0)$) node[rond] (q2) {$\state_1, 1$};
			\draw ($(q1)!.5!(q2)$) node[dot] (qmid) {};
			\draw (q2) edge[-latex',out=-30,in=30,loop right] node[right=2pt] {$\sig$} (q2);
			\draw (q1) edge[-latex'] (qmid);
			\draw (qmid) edge[-latex',out=120,in=60] node[above=2pt] {$\sig, \frac{1}{2}$} (q1);
			\draw (qmid) edge[-latex'] node[above=2pt] {$\sig, \frac{1}{2}$} (q2);
		\end{tikzpicture}
	\end{minipage}%
	\begin{minipage}{.5\textwidth}
		\centering
		\begin{tikzpicture}[every node/.style={font=\small,inner sep=1pt}]
			\draw ($(0,0)$) node[rond] (q1) {$\set{\state_0}, 2$};
			\draw ($(q1.west)-(0.5,0)$) edge[-latex'] (q1);
			\draw ($(q1)+(2.4,0)$) node[rond] (q2) {$\set{\state_0, \state_1}, 2$};
			\draw (q2) edge[-latex',out=-30,in=30,loop right] (q2);
			\draw (q1) edge[-latex'] (q2);
		\end{tikzpicture}
	\end{minipage}
	\caption{We consider the POMDP on the LHS: there is a single signal $\sig$, so no information is ever given about the exact state we are in (a behavior the revelation mechanisms forbid!).
	Yet, almost surely, we reach $\state_1$.
	The \emph{priorities} indicated on states constitute a parity condition inducing the objective ``eventually never visiting $\state_1$'', which clearly cannot be ensured almost surely.
	We represent the \emph{belief-support MDP} on the RHS: the two states are $\{\state_0\}$ and $\set{\state_0, \state_1}$, and only the state $\set{\state_0, \state_1}$ is visited infinitely often.
	To assign priorities to the states of this MDP, there are two natural candidates: ``maximal priority semantics'' and ``minimal priority semantics'', meaning that we assign either the maximal or minimal priority from the states in the belief support.
	In this figure, we use the maximal priority semantics: the priority of $\set{\state_0, \state_1}$ is thus $2$, so the belief-support MDP is winning. This means that the analysis of the belief-support MDP is not sound in general.
	By tweaking the priorities in this example, one can show that both priority semantics are neither sound nor complete.}
	\label{fig:simpleUnsoundIncomplete}
\end{figure*}

The fundamental question we ask in this paper is whether \textbf{there are natural sufficient conditions which make the belief-support abstraction correct}.
Conceptually, the failure of this abstraction is due to information loss and its accumulation over time.

We introduce a \textbf{revelation mechanism} which restricts information loss by requiring that, almost surely, the agent has eventually full information of the current state.
Intuitively, by forbidding information loss from accumulating for an unbounded amount
of time, the revelation mechanism removes the convergence issues leading to undecidability.
Practically, we conjecture that revelation is a commonly occurring phenomenon
in partial observability; a canonical example is systems with a
small probability of resetting infinitely often, and where this reset is observable.
We leave to future work to investigate this question further.
Other approaches to restrict information loss have been proposed; we refer to the related works (Section~\ref{sec:relatedWorks}) for an additional discussion.

\begin{figure}[t]
\centering
\includegraphics[width=0.9\columnwidth]{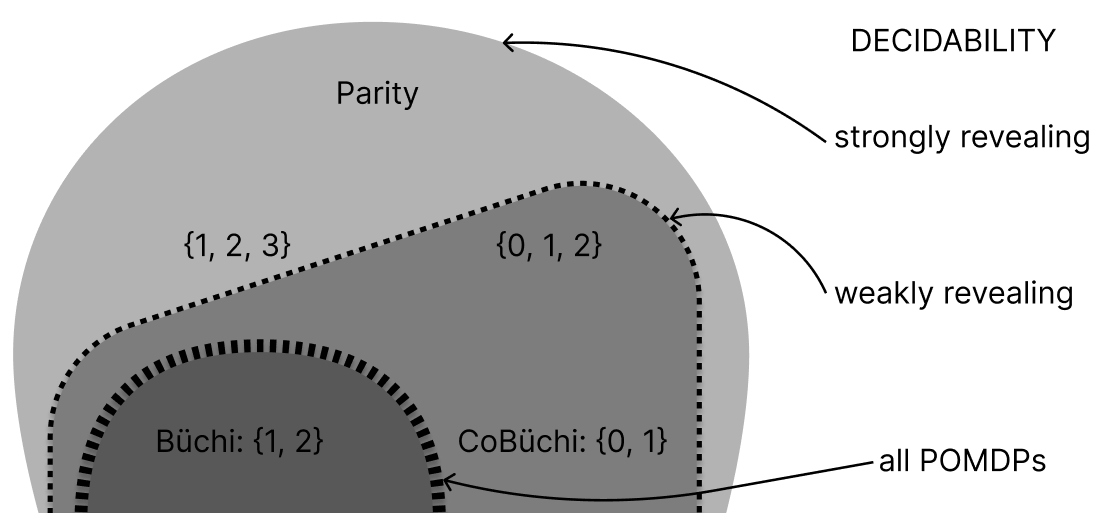}
\caption{Summary of our results: decidable subclasses of the \emph{parity} objective depending on the revelation mechanism.}
\label{fig:summary}
\end{figure}

\begin{figure}[t]
  \centering
  \includegraphics[width=0.81\columnwidth]{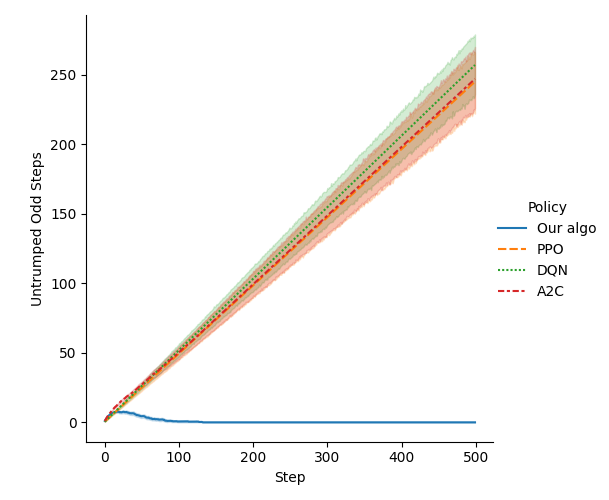}
  \caption{Omega-regular specifications have
  a natural interpretation in terms of \emph{bad} events that must all be
\emph{trumped} by future \emph{good} events. Along a simulation of the POMDP,
one can keep track of the number of steps from the last bad
event that has not yet been trumped (i.e., lower is better). Here, we depict
this value, per step (from $1$ to $500$) over $500$ simulations of a revealing
version of the classical tiger POMDP~\cite{tiger94}. A2C, DQN, and PPO
	are (\texttt{MlpPolicy}) strategies obtained from the \emph{stable-baselines} library~\cite{Raffin.Hill.ea:2021}, trained
(for a total of $10$k time steps) with default parameter values using a simple
reward scheme: a good event yields a reward of~$100$; a bad one,~$-1$. In the
simulations, the trained models are queried for deterministic action
predictions.
The example used will be discussed in Section~\ref{sec:stronglyRevealing}, Example~\ref{ex:revealingTiger}.}
\label{fig:drl-vs-ours}
\end{figure}

\paragraph*{Our contributions.}
\begin{itemize}
	\item We study two properties of POMDPs based on the revelation mechanism, called \emph{weak} and \emph{strong revelations}.
	\item We obtain decidability (and undecidability) results for both classes. Importantly, the decidable cases reduce to the analysis of the finite belief-support MDP.
	A summary of our contributions for POMDPs is provided in Figure~\ref{fig:summary}.
	We also briefly consider the class of \emph{two-player games of partial information}, to show that our revealing mechanisms do not suffice for decidability on this larger class.
	\item We provide a simple implementation of the algorithm as a proof of
    concept. We provide a comparison
    between our algorithm and off-the-shelf deep reinforcement learning (DRL)
    trained via an observation wrapper.
	As we will show in the paper, the MDP induced by the belief supports carries sufficient information to play in revealing POMDPs; hence, we used a wrapper implementing a subset construction on the fly to generate the current belief support, and focused on algorithms intended for MDPs.
    Spending moderate effort on reward engineering and hyperparameter tuning, we have been unable to match the performance of our algorithm using such DRL approaches (see Figure~\ref{fig:drl-vs-ours}).
\end{itemize}
This yields a conceptually simple and exact algorithm for a large class of POMDPs.
The importance of our results can be appreciated by the following remark:
instead of a subclass of POMDPs, the revelation mechanism can be seen as new
semantics for \emph{all} POMDPs. In that sense, we obtain decidability results
for an \emph{optimistic} semantics of POMDPs which, to the best of our
knowledge, has not been done before. We refer to Section~\ref{sec:optimistic_semantics} for more details on this point of view.

\paragraph*{Outline.}
We define our stochastic models in Section~\ref{sec:preliminaries}.
We then introduce the belief-support MDP in Section~\ref{sec:beliefSupportMDP}.
Sections~\ref{sec:weaklyRevealing} and~\ref{sec:stronglyRevealing} are devoted to our revealing mechanisms and to our (un)decidability results.
We conclude with additional related works and perspectives in Sections~\ref{sec:relatedWorks} and~\ref{sec:perspectives}.
To highlight our results, we defer most proofs to the appendix.

This article constitutes the extended version (including all proofs, additional examples and explanations) of an article with the same name accepted in the proceedings of the AAAI'25 conference~\cite{aaaiPaper}.

\section{Preliminaries}
\label{sec:preliminaries}
A \emph{(discrete) probability distribution} on a finite set $X$ is a function $d\colon X \to [0, 1]$ such that $\sum_{x\in X} d(x) = 1$.
The set of all probability distributions on $X$ is denoted $\dist{X}$.
The \emph{support} $\supp{d}$ of a probability distribution $d$ is the set $\set{x\in X \mid d(x) > 0}$.
We let $\card{X}$ denote the number of elements in a set $X$.

\subsection{POMDPs}

A \emph{partially observable Markov decision process} (POMDP) is a tuple $\pomdp = \pomdpFull$ such that $\states$ is a finite set of \emph{states}, $\actions$ is a finite set of \emph{actions}, $\signals$ is a finite state of \emph{signals}, $\transitions\colon \states\times\actions \to \dist{\signals\times\states}$ is the \emph{transition function}, and $\initState\in\states$ is an \emph{initial state}.

A \emph{play} of a POMDP $\pomdp = \pomdpFull$ is an infinite sequence $\play = \state_0\act_1\sig_1\state_1\act_2\sig_2\ldots \in (\states\cdot\actions\cdot\signals)^\omega$ such that, for all $i\ge 0$, $\transitions(\state_i, \act_{i+1})(\sig_{i+1}, \state_{i+1}) > 0$.
A \emph{history}~$\hist$ of a POMDP is a finite prefix of a play ending in a state (it is an element of $(\states\cdot\actions\cdot\signals)^*\cdot\states$).
If $\hist = \state_0\act_1\sig_1\state_1\ldots\act_n\sig_n\state_n$, we write $\last(\hist)$ for $\state_n$.
In practice, states are not fully observable; we define an \emph{observable history} as the projection of a history to the subsequence in $(\actions\cdot\signals)^*$.
We write $\obs(\hist)$ for the observable history derived from a history $\hist$, i.e., the same sequence with the states removed.

For $\state$ a state of a POMDP $\pomdp = \pomdpFull$, we define $\pomdp^\state$ to be the POMDP $(\states, \actions, \signals, \transitions, \state)$ with only a change of initial state.
We let $\leastProb_\pomdp = \min \set{\transitions(\state,\act)(\sig, \state') \mid \state,\state'\in\states,\, \act\in\actions,\, \sig\in\signals,\, \text{and}\ \transitions(\state,\act)(\sig, \state') > 0}$ denote the least non-zero probability occurring in $\pomdp$.

A \emph{Markov decision process} (MDP) is a tuple $\mdp = \mdpFull$ where $\transitions\colon \states\times\actions \to \dist{\states}$.
Formally, an MDP $\mdp = \mdpFull$ can be seen as a POMDP $\pomdp = \pomdpFull$ such that $\signals = \set{\sig_{\state} \mid \state\in\states}$ and for all $\state, \state', \state''\in\states$ and $\act\in\actions$, $\transitions(\state, \act)(\sig_{\state''}, \state') > 0$ if and only if $\state' = \state''$.
In practice, it means that the last signal always uniquely determines the current state.
MDPs have ``complete observation'', whereas POMDPs have ``partial observation''.
For a POMDP $\pomdp = \pomdpFull$, we define the \emph{underlying MDP of $\pomdp$} to be the MDP~$(\states, \actions, \transitions', \initState)$ with $\transitions'(\state, \act)(\state') = \sum_{\sig\in\signals} \transitions(\state, \act)(\sig, \state')$.

\begin{remark}
	The observable information in POMDPs is here provided through \emph{signals} that appear along transitions.
	This contrasts with state-based \emph{observations} that partition the state space, which are also frequently used to model POMDPs.
	Both models are polynomially equivalent: a POMDP with observations can be transformed into an equivalent POMDP with signals on the same state space, while the converse requires an increase of the state space linear in $\card{\signals}$.
	Both choices are convenient, but using signals make the definition of \emph{strongly revealing} (Definition~\ref{def:stronglyRevealing}) more natural, which is why we opted for this convention.
\end{remark}

\paragraph*{Strategies.}
Let $\pomdp = \pomdpFull$ be a POMDP.
An \emph{(observation-based) strategy in $\pomdp$} is a function that makes decisions based on the current observable history, i.e., it is a function $\strat\colon (\actions\cdot\signals)^* \to \dist{\actions}$.
We can define strategies in MDPs similarly (i.e., assuming that $\signals$ gives the information of the current state), but we assume for convenience that a strategy is a function $\strat\colon (\actions\cdot\states)^* \to \dist{\actions}$ in this case.
An observable history $\act_1\sig_1\ldots\act_n\sig_n$ is \emph{consistent with a strategy $\strat$} if for all $1\le i < n$, $\strat(\act_1\sig_1\ldots\act_i\sig_i)(\act_{i+1}) > 0$.

A strategy $\strat$ is \emph{pure} if for all observable histories $\sigHist\in(\actions\cdot\signals)^*$, $\strat(\sigHist)$ is a Dirac distribution;
in other words, if $\strat$ is a function $(\actions\cdot\signals)^* \to \actions$.
We let $\strats{\pomdp}$ denote the set of strategies in POMDP $\pomdp$ and $\stratsPure{\pomdp}$ denote the set of pure strategies in $\pomdp$.

For an MDP $\mdp$, a strategy $\strat$ in $\mdp$ is \emph{memoryless} if its decisions are only based on the current state: i.e., if for all histories $\hist_1, \hist_2$, $\last(\hist_1) = \last(\hist_2)$ implies $\strat(\hist_1) = \strat(\hist_2)$.
We only define the memoryless notion for MDPs.

\paragraph*{Probability measure induced by a strategy.}
Let $\pomdp = \pomdpFull$ be a POMDP.
For a history $\hist$ of $\pomdp$, we define $\Cyl(\hist)$ (the \emph{cylinder of $\hist$}) to be the set of all plays starting with $\hist$, i.e., $\hist(\actions\cdot\signals\cdot\states)^\omega$.
Given a strategy $\strat$, we can define a probability measure~$\prob{\strat}{\pomdp}{\cdot}$ on infinite plays.
This function is naturally defined over cylinders by induction.
We define $\prob{\strat}{\pomdp}{\Cyl(\initState)} = 1$, and $\prob{\strat}{\pomdp}{\Cyl(\state)} = 0$ for $\state\in\states$, $\state\neq\initState$.
For a history $\hist = \hist'\act\sig\state$, we define $\prob{\strat}{\pomdp}{\Cyl(\hist)} = \prob{\strat}{\pomdp}{\Cyl(\hist')}\cdot\strat(\obs(\hist'))(\act)\cdot\transitions(\last(\hist'), \act)(\sig, \state)$.
By Ionescu-Tulcea extension theorem~\cite{Klenke:2007}, this function can be uniquely extended to a probability distribution $\prob{\strat}{\pomdp}{\cdot}$ over the Borel sets of infinite plays induced by all cylinders.

We also use this probability distribution to measure sets of infinite sequences in~$\states^\omega$, by associating a set $\objective \subseteq \states^\omega$ with the set $\bigcup_{\state_0\state_1\ldots\in\objective} \state_0\actions\signals\state_1\actions\signals\state_2\ldots \subseteq (\states\times\actions\times\signals)^\omega$.
Similarly, we use this probability distribution to measure events based only on signals, by associating a set $\sigEvents \subseteq \signals^\omega$ with the set $\bigcup_{\sig_1\sig_2\ldots\in\sigEvents} \states\actions\sig_1\states\actions\sig_2\states\ldots \subseteq (\states\times\actions\times\signals)^\omega$.

\paragraph*{Objectives.}
Let $\pomdp = \pomdpFull$ be a POMDP.
An \emph{objective} $\objective\subseteq \states^\omega$ is a measurable set of infinite sequences of states.
Note that observing an infinite sequence of signals (but not the states) may not always be sufficient to determine whether a play satisfies an objective.

Given a set $\reach\subseteq \states$, the \emph{reachability objective} $\Reach(\reach) = \set{\state_0\state_1\ldots \in \states^\omega\mid \exists i\ge 0, \state_i\in\reach}$ is the set of plays that visit a state in $\reach$ at least once.
For $k\in\IN$, we write $\Reach^{\le k}(\reach) = \set{\state_0\state_1\ldots \in \states^\omega\mid \exists i, 0\le i\le k, \state_i\in\reach}$ for the set of plays that reach $\reach$ in at most $k$ steps.
Given a set $\safe\subseteq \states$, the \emph{safety objective} $\Safety(\safe)$ is the set of plays that never visit any state in $\safe$.

Given a \emph{priority function} $\pri\colon \states \to \set{0, \ldots, d}$ (where $d\in\IN$), the \emph{parity objective} $\Parity(\pri) = \set{\state_0\state_1\ldots \in \states^\omega\mid \text{$\limsup_{i\ge 0} \pri(\q_i)$ is even}}$ is the set of infinite plays whose highest priority seen infinitely often is even.
A \emph{B\"uchi objective} is a parity objective $\Parity(\pri)$ such that $\pri\colon\states\to\set{1, 2}$, and a \emph{CoB\"uchi objective} is a parity objective $\Parity(\pri)$ such that $\pri\colon\states\to\set{0, 1}$.
For $\states'\subseteq\states$, we write $\Buchi(\states')$ for the set of infinite plays that visit $\states'$ infinitely often~;
it is equal to $\Parity(\pri)$ for the priority function $\pri$ such that $\pri(\state) = 2$ if $\state\in\states'$, and $\pri(\state) = 1$ otherwise.
We also write $\coBuchi(\states')$ for the set of infinite plays that visit $\states'$ only finitely often~; it is equal to $\Parity(\pri)$ for the priority function $\pri$ such that $\pri(\state) = 1$ if $\state\in\states'$, and $\pri(\state) = 0$ otherwise.

For an objective $\objective$, a strategy $\strat$ is \emph{almost sure} if $\prob{\strat}{\pomdp}{\objective} = 1$, and is \emph{positively winning} if $\prob{\strat}{\pomdp}{\objective} > 0$.
We say that an objective $\objective$ has \emph{value~$1$} in a POMDP $\pomdp$ if $\sup_{\strat\in\strats{\pomdp}} \prob{\strat}{\pomdp}{\objective} = 1$.

\subsection{Beliefs and belief supports}

Let $\pomdp = \pomdpFull$ be a POMDP.
A \emph{belief} $\belief\in\dist{\states}$ is a probability distribution on $\states$.
A \emph{belief support} $\beliefSupp \in \powerSet{\states} \setminus \set{\emptyset}$ is the support of a belief.
For brevity, we write $\powerSetNonEmpty{\states}$ for $\powerSet{\states} \setminus \set{\emptyset}$.
At every step, beliefs and belief supports can be updated when playing an action and observing a signal.
We show how to do so for belief supports: we define a function $\beliefUpd\colon \powerSetNonEmpty{\states} \times \actions\times\signals \to \powerSetNonEmpty{\states}$ that updates the belief support.
For $\beliefSupp\in\powerSetNonEmpty{\states}$, $\act\in\actions$, $\sig\in\signals$, we define
$
    \beliefUpd(\beliefSupp, \act, \sig) = \{\state'\in\states \mid \exists \state\in\beliefSupp, \transitions(\state, \act)(\sig, \state') > 0\}.
$
We extend this function in a natural way to a function $\beliefUpdStar\colon \powerSetNonEmpty{\states}\times (\actions\cdot\signals)^* \to \powerSetNonEmpty{\states}$.
Objectives $\Reach(B)$ and $\Buchi(B)$ can be naturally extended to sets of belief supports $B\subseteq \powerSetNonEmpty{\states}$ (see Appendix~\ref{app:beliefSupportMDP}).

Beliefs carry more information than belief supports, as they contain the exact probability of being in a particular state, while belief supports only contain the qualitative information of the possible current states.
Observe that when the belief support is a singleton (i.e., $\beliefSupp = \set{\state}$ for some $\state\in\states$), knowing the precise belief does not yield more information than knowing the belief support, as all the probability mass is in one of the states.
Our ``revealing'' restrictions on POMDPs defined later will exploit this fact.

\section{The belief-support MDP}
\label{sec:beliefSupportMDP}
For a POMDP $\pomdp = \pomdpFull$, the \emph{belief-support MDP of $\pomdp$} is the MDP $\beliefMDP = (\powerSetNonEmpty{\states}, \actions, \transitionsMDP, \{\initState\})$ where for $\beliefSupp, \beliefSupp'\in\powerSetNonEmpty{\states}$ and $\act\in\actions$,
$
	\transitionsMDP(\beliefSupp, \act)(\beliefSupp') > 0
$
if and only if there is $\sig\in\signals$ such that $\beliefUpd(\beliefSupp, \act, \sig) = \beliefSupp'$.
We assume the distribution to be uniform over successors with positive probability.

We can show that for multiple simple objectives, the POMDP and its belief-support MDP behave in a similar way.
For example, sets of belief supports that can be reached with a positive probability are the same in the POMDP and its belief-support MDP (Appendix~\ref{app:beliefSupportMDP}, Lemma~\ref{lem:bijectionStrategies}); if a set of belief supports is reachable almost surely in the POMDP, it is also the case in the belief-support MDP (Appendix~\ref{app:beliefSupportMDP}, Lemma~\ref{lem:bijectionStrategiesAS}).

There is a natural way to lift a strategy in the belief-support MDP to a strategy in the POMDP.
We define a notation to go from a sequence of signals to the induced sequence of belief supports.
Let $\hist = \act_1\sig_1\ldots\act_n\sig_n\in(\actions\cdot\signals)^*$ be a possible observable history in $\pomdp$.
For $1\le i\le n$, let $\beliefSupp_i = \beliefUpdStar(\{\initState\}, \act_1\sig_1\ldots\act_i\sig_i)$ be the belief support after $i$ steps.
We define $B_\hist$ to be the history $\act_1\beliefSupp_1\ldots\act_n\beliefSupp_n$ of $\beliefMDP$.
Let $\strat_\beliefUpd \in \strats{\beliefMDP}$ be a strategy in the belief-support MDP of a POMDP $\pomdp$.
We define a strategy $\liftStrat$ in $\pomdp$ derived from the strategy $\strat_\beliefUpd$: for $\hist\in(\actions\cdot\signals)^*$, we fix $\liftStrat(\hist) = \strat_\beliefUpd(B_\hist)$.

\section{Weakly revealing POMDPs}
\label{sec:weaklyRevealing}
We define here our first \emph{revealing} property for POMDPs, which requires that, infinitely often and almost surely, the current state can be deduced by looking at the previous sequence of signals.
Formally, we write $\singletons^\pomdp = \set{\set{\state} \mid \state\in\states}$ for the set of singleton belief supports of a POMDP $\pomdp = \pomdpFull$.
An observable history $\hist\in(\actions\cdot\signals)^*$ such that $\beliefUpdStar(\set{\state_0}, \hist)\in\singletons^\pomdp$ is called a \emph{revelation}.

\begin{definition}[Weakly revealing] \label{def:weaklyRevealing}
	A POMDP $\pomdp$ is \emph{weakly revealing} if, for all strategies $\strat\in\strats{\pomdp}$, we have $\prob{\strat}{\pomdp}{\Buchi(\singletons^\pomdp)} = 1$; i.e., for all strategies, infinitely many revelations occur almost surely.
\end{definition}

In particular, POMDPs that ``reset'' infinitely often, and whose reset can be observed with a dedicated signal, are weakly revealing.
We will use one such example in Figure~\ref{fig:incompleteCE}.

One can give probabilistic bounds on the occurrence of a revelation for a weakly revealing POMDP (see Lemma~\ref{lem:pomdpReach} in Appendix~\ref{app:boundsReachability} with $\reach = \singletons^\pomdp$): starting from any reachable belief, a revelation occurs within $2^{\card{\states}}-1$ steps with probability at least $\leastProb_\pomdp^{2^{\card{\states}}-1}$.

The bound is asymptotically tight: there is a weakly revealing POMDP with $n + 2$ states, $1$ action, and $n$ signals where we need at least $2^n - 1$ steps before observing a revelation with positive probability.
Details are provided in Example~\ref{ex:expLowerBound}, Appendix~\ref{app:weaklyRevealing}.

\subsection{Soundness of the belief-support MDP}

In this section, we show that, for \emph{weakly revealing} POMDPs, the existence of an almost-sure strategy in the belief-support MDP (with an adequate priority function) implies the existence of an almost-sure strategy in the POMDP.

For the priority function of the belief-support MDP, we consider the ``maximal priority'' semantics.
Formally, let $\pomdp = \pomdpFull$ be a POMDP, and $\beliefMDP$ be its belief-support MDP.
Let~$\pri\colon \states \to \set{0,\ldots,n}$ be a priority function on $\pomdp$, inducing the objective $\Parity(\pri)$.
We extend this function to the belief-support MDP: for $\beliefSupp \in \powerSetNonEmpty{\states}$, we define
\[
    \priMDP(\beliefSupp) = \max \set{\pri(\state) \mid \state\in\beliefSupp}.
\]

Without any assumption, the belief-support MDP may be unsound, already for B\"uchi objectives; there may be an almost-sure strategy in the belief-support MDP, but not in the POMDP.
An example illustrating this was given in Figure~\ref{fig:simpleUnsoundIncomplete}.
Surprisingly, it is sound for CoB\"uchi objectives without any assumption (see Lemma~\ref{lem:soundForCoBuchi} in Appendix~\ref{app:weaklyRevealing}).
Using ``$\max$'' (and not ``$\min$'') turns out to be the right choice in our setting.
Intuitively, under the right revealing assumptions and the right strategies, if a belief support is visited infinitely often, then all its states will be visited infinitely often, so the maximal priority of the belief support is the one that matters given the parity objective.
Without any assumption, both $\max$ and $\min$ are unsound and incomplete in general.

Under the weakly revealing semantics, almost-sure strategies of the belief-support MDP carry over to the POMDP for all parity objectives.
In other words, the analysis of the belief-support MDP is sound.
We recall that pure memoryless strategies suffice to reach the optimal value for parity objectives in MDPs~\cite{Chatterjee.Henzinger:2012}.

\begin{restatable}{proposition}{soundness} \label{prop:soundness}
    Let $\pomdp = \pomdpFull$ be a weakly revealing POMDP with priority function~$\pri$, and let $\beliefMDP$ be its belief-support MDP with priority function $\priMDP$.
    Assume there is an almost-sure strategy $\strat_\beliefUpd$ for $\Parity(\priMDP)$ in $\beliefMDP$; by~\cite{Chatterjee.Henzinger:2012}, we may assume $\strat_\beliefUpd$ to be pure and memoryless.
    Then, $\liftStrat$ is an almost-sure strategy for $\Parity(\pri)$ in $\pomdp$.
\end{restatable}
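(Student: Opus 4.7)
The plan is to analyze the finite Markov chain on pairs $(\state, \beliefSupp) \in \states \times \powerSetNonEmpty{\states}$ obtained by tracking, under $\liftStrat$ in $\pomdp$, the (hidden) current state $\state$ and the current belief support $\beliefSupp$. Since $\strat_\beliefUpd$ is pure memoryless on belief supports and $\liftStrat$ depends only on the belief-support history, this pair process is Markovian. By construction it preserves the invariant $\state \in \beliefSupp$, and by the correspondence of Lemmas~\ref{lem:bijectionStrategies}--\ref{lem:bijectionStrategiesAS} its projection to the second coordinate is distributed as the Markov chain of $\strat_\beliefUpd$ on $\beliefMDP$.

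Almost surely, the pair trajectory is absorbed in some bottom strongly connected component (BSCC) $\mathcal{T}$ and visits every element of $\mathcal{T}$ infinitely often. Let $\mathcal{R} \subseteq \powerSetNonEmpty{\states}$ be the projection of $\mathcal{T}$ to the belief-support coordinate; it is a BSCC of the belief-support chain that is reached with positive probability. Since $\strat_\beliefUpd$ is almost surely winning for $\Parity(\priMDP)$, the value $p^* := \max\set{\priMDP(\beliefSupp) \mid \beliefSupp \in \mathcal{R}}$ is even. The invariant $\state \in \beliefSupp$ implies $\pri(\state) \leq \priMDP(\beliefSupp) \leq p^*$ for every $(\state, \beliefSupp) \in \mathcal{T}$, so almost surely the maximal priority visited infinitely often in $\pomdp$ is at most $p^*$.

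\textbf{The main obstacle} is to show that this upper bound is attained, i.e., that some $(\state^*, \beliefSupp^*) \in \mathcal{T}$ achieves $\pri(\state^*) = p^*$. Pick $\beliefSupp^* \in \mathcal{R}$ realising the maximum and $\state^* \in \beliefSupp^*$ with $\pri(\state^*) = p^*$. This is where the weakly revealing hypothesis is used crucially: under $\liftStrat$, revelations occur infinitely often almost surely, so since almost every trajectory is eventually trapped in $\mathcal{T}$, the projection $\mathcal{R}$ must contain a singleton $\set{\state^\circ}$, and in particular $(\state^\circ, \set{\state^\circ}) \in \mathcal{T}$. By strong connectivity of $\mathcal{R}$, there is an observable history $\hist = \act_1\sig_1\ldots\act_k\sig_k$ consistent with $\strat_\beliefUpd$ from $\set{\state^\circ}$ and satisfying $\beliefUpdStar(\set{\state^\circ}, \hist) = \beliefSupp^*$. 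A standard belief-update argument then shows that, conditioned on starting in $\state^\circ$ and observing $\hist$, every element of $\beliefSupp^*$ (and in particular $\state^*$) occurs as the actual state at step $k$ with positive probability. This yields a positive-probability path from $(\state^\circ, \set{\state^\circ})$ to $(\state^*, \beliefSupp^*)$ in the pair chain, so by the BSCC property $(\state^*, \beliefSupp^*) \in \mathcal{T}$ and is visited infinitely often almost surely. Combined with the upper bound, $\limsup_i \pri(\state_i) = p^*$ almost surely, which is even, so $\liftStrat$ is almost sure for $\Parity(\pri)$.
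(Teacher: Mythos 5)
Your proof is correct in substance and relies on the same underlying mechanism as the paper's proof: weak revelation forces singleton belief supports to recur under $\liftStrat$, and from a singleton every state of every subsequently reached belief support is itself attained with positive probability, so the recurrent behaviour realizes exactly the maximal priority of the recurrent belief supports, which is even because $\strat_\beliefUpd$ wins almost surely. The packaging is genuinely different, though, and arguably cleaner: you analyze the finite product Markov chain on pairs $(\state,\beliefSupp)$ and invoke the standard BSCC dichotomy, which absorbs into the Markov property the ``the continuation from a singleton is independent of the past'' argument that the paper carries out by hand via end components of $\beliefMDP$ and infinitely-often lower-bounded conditional probabilities. The paper's route isolates a reusable transfer statement (its implication for $\Buchi(B)$) and leans on \Cref{lem:bijectionStrategiesAS}; yours buys a shorter, more self-contained recurrence argument.

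Two steps need tightening. First, the second coordinate of your pair chain is \emph{not} distributed as the chain of $\strat_\beliefUpd$ on $\beliefMDP$: $\transitionsMDP$ is by definition uniform over possible successor supports, whereas the pair chain's transition probabilities depend on the hidden state. Only the \emph{supports} of the one-step transitions coincide, which is what \Cref{lem:bijectionStrategies} actually gives you --- fortunately that is all your argument uses. Second, and more importantly, at the point where you assert that $\mathcal{R}$ is a BSCC of the $\strat_\beliefUpd$-induced chain (in order to conclude that $p^*$ is even), the closedness of $\mathcal{R}$ is not yet justified: a $\transitionsMDP$-successor of $\beliefSupp\in\mathcal{R}$ is witnessed by a signal emittable from \emph{some} state $r\in\beliefSupp$, and a priori $\mathcal{T}$ might contain $(\state,\beliefSupp)$ only for other states $\state$, so the projection of a BSCC need not be closed. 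Without closedness, $\mathcal{R}$ need not be an inf-set of $\strat_\beliefUpd$ with positive probability, and the evenness of $p^*$ does not follow. The repair is exactly your own later singleton argument: since $(\state^\circ,\set{\state^\circ})\in\mathcal{T}$ and every $\beliefSupp\in\mathcal{R}$ is the deterministic belief-support update of some observable history realized from $(\state^\circ,\set{\state^\circ})$ inside $\mathcal{T}$, every pair $(r,\beliefSupp)$ with $r\in\beliefSupp\in\mathcal{R}$ lies in $\mathcal{T}$; closedness of $\mathcal{R}$, hence the evenness of $p^*$, follows. So reorder the proof: first establish the singleton and the saturation $\mathcal{T}=\set{(r,\beliefSupp)\mid \beliefSupp\in\mathcal{R},\, r\in\beliefSupp}$, then deduce both that $p^*$ is even and that it is attained infinitely often.
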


The complete proof is in Appendix~\ref{app:weaklyRevealing}.

\subsection{Decidability of parity for priorities \texorpdfstring{$0$}{0}, \texorpdfstring{$1$}{1}, and \texorpdfstring{$2$}{2}}

We show that the existence of an almost-sure strategy in a weakly revealing POMDP implies the existence of an almost-sure strategy in its belief-support MDP when priorities are in $\{0, 1, 2\}$.
This provides a converse to \Cref{prop:soundness} when priorities are restricted to $\{0, 1, 2\}$.
We will see that this is not the case for priorities in~$\set{1, 2, 3}$ in the next section; this result is therefore optimal w.r.t.\ the priority used.
We emphasize that parity objectives with priorities $\{0, 1, 2\}$ encompass both B\"uchi and CoB\"uchi objectives.
This result is false without the weakly revealing assumption; see the simple POMDP in Figure~\ref{fig:simpleUnsoundIncomplete}.
The proof is in Appendix~\ref{app:weaklyRevealing}.

\begin{restatable}{proposition}{completenessZeroOneTwo} \label{prop:completeness012}
    Let $\pomdp = \pomdpFull$ be a weakly revealing POMDP with priority function~$\pri$ with values in $\{0, 1, 2\}$.
    Let $\beliefMDP$ be its belief-support MDP with priority function $\priMDP$.
    If there is an almost-sure strategy for $\Parity(\pri)$ in~$\pomdp$, then there is an almost-sure strategy for $\Parity(\priMDP)$ in~$\beliefMDP$.
\end{restatable}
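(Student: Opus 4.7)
The plan is to show that any almost-sure winning strategy $\strat$ in $\pomdp$ forces the belief-support process to settle, almost surely, in a (random) end-component of $\beliefMDP$ whose maximal priority is necessarily even; then standard parity MDP theory yields an almost-sure winning strategy in $\beliefMDP$.

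First, I would invoke the end-component decomposition of $\beliefMDP$: because the belief-support process induced by $\strat$ has one-step transitions whose supports coincide with the supports of $\transitionsMDP$, the usual MDP argument shows that with probability $1$ the process enters some end-component $L$ of $\beliefMDP$ reachable from $\{\initState\}$, visiting every belief support in $L$ infinitely often and nothing outside $L$ eventually. It then remains to show that every such $L$ arising with positive probability under $\strat$ satisfies $\max_{\beliefSupp \in L} \priMDP(\beliefSupp) \in \set{0, 2}$.

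This is the heart of the argument. The key technical lemma I would establish is: \emph{in a weakly revealing POMDP, if a belief support $\beliefSupp$ is visited infinitely often with positive probability under a strategy, then each $\state \in \beliefSupp$ is visited (as the actual state) infinitely often with positive probability.} The weakly revealing assumption is crucial here: by Lemma~\ref{lem:pomdpReach}, a revelation occurs within $2^{\card{\states}}-1$ steps with uniform positive probability, so one can restrict attention to ``fresh'' visits to $\beliefSupp$, namely those occurring within $2^{\card{\states}}-1$ steps of the previous revelation. At such fresh visits the belief is obtained by iterating the belief update at most $2^{\card{\states}}-1$ times from a Dirac, hence ranges over a finite subset of $\dist{\states}$; this gives a uniform positive lower bound on $\belief(\state)$, after which a conditional Borel--Cantelli argument forces $\state$ to be visited infinitely often. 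Assuming for contradiction that $\max_{\beliefSupp \in L} \priMDP(\beliefSupp) = 1$, witnessed by $\beliefSupp_1 \in L$ with some priority-$1$ state $\state_1 \in \beliefSupp_1$, the lemma forces $\state_1$ to be visited infinitely often in $\pomdp$. But since no belief support in $L$ contains a priority-$2$ state and the play almost surely stays in the union of belief supports in $L$ eventually, no priority-$2$ state is visited infinitely often. Hence $\limsup_n \pri(\state_n) = 1$, violating the assumption that $\strat$ is almost-sure winning for $\Parity(\pri)$. So the maximal priority of $L$ is in $\set{0,2}$, i.e.\ even.

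From there the conclusion is immediate: $L$ is a winning end-component of $\beliefMDP$ reachable from $\{\initState\}$, so standard pure-memoryless determinacy for parity MDPs~\cite{Chatterjee.Henzinger:2012} provides an almost-sure strategy in $\beliefMDP$ that reaches $L$ and cycles through it. The main obstacle is the Borel--Cantelli key lemma: without the revelation mechanism, the mass $\belief(\state)$ could decay to zero along increasingly long histories, so the uniform lower bound on belief weights that makes Borel--Cantelli applicable relies precisely on the periodic collapse of the belief to a Dirac that weakly revealing POMDPs guarantee.
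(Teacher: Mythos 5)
Your key technical lemma is false, and the paper's own Example~\ref{ex:incompleteCE} (Figure~\ref{fig:incompleteCE}) refutes it: under the almost-sure strategy described there, the belief support $\set{\state_2,\state_3}$ is visited infinitely often almost surely (every visit to $\state_2$ occurs at that belief support, and $\state_2$ is visited infinitely often), yet $\prob{\strat}{\pomdp}{\Buchi(\set{\state_3})}=0$ even though $\state_3\in\set{\state_2,\state_3}$. The flaw is your reduction to ``fresh'' visits. Weak revelation guarantees infinitely many revelations, and you assume infinitely many visits to $\beliefSupp$, but nothing forces infinitely many of those visits to occur within $2^{\card{\states}}-1$ steps of a revelation: in Example~\ref{ex:incompleteCE} the visit to $\set{\state_2,\state_3}$ in round $n$ occurs $n+1$ steps after the most recent revelation, so only finitely many visits are ``fresh'', your uniform lower bound on $\belief(\state)$ applies finitely often, and Borel--Cantelli gives nothing. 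Between revelations the mass $\belief(\state_3)$ decays to zero --- precisely the phenomenon the winning strategy exploits.

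This is not a local repair: the whole architecture ``every end component induced by $\strat$ has even maximal $\priMDP$-priority'' is wrong. Reprice Example~\ref{ex:incompleteCE} with $\pri(\state_3)=1$ and priority $0$ elsewhere; the same strategy is almost sure for this CoB\"uchi objective, yet the belief-support process settles in an end component containing $\set{\state_2,\state_3}$, of maximal priority $1$. The proposition still holds there, but only via a \emph{different} strategy of $\beliefMDP$ (never play $\choice$), which cannot be obtained by projecting $\strat$. The paper's proof therefore fixes a state $\state$ whose singleton belief support is visited infinitely often and argues by cases: either some strategy from $\state$ achieves $\Safety(\pri^{-1}(1))$ almost surely, and one plays that (possibly unrelated) safe strategy in $\beliefMDP$; or, by Corollary~\ref{cor:value1Safety}, every strategy from $\state$ reaches a priority-$1$ state with probability bounded away from zero, which forces $\strat$ itself to visit priority-$2$ states --- hence priority-$2$ belief supports --- infinitely often after $\set{\state}$, yielding an end component of $\beliefMDP$ containing $\set{\state}$ and a belief support of priority $2$. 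Your final step (reach a winning end component and apply memoryless determinacy) is fine; what your argument fails to establish is that such an end component exists. A secondary issue: the belief-support process induced by a POMDP strategy is not a strategy of $\beliefMDP$ (it observes signals, and its conditional transition probabilities are not bounded below), so even the claim that it settles almost surely into an end component of $\beliefMDP$ would need separate justification.
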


From the above, we deduce a complexity upper bound; a matching lower bound is proved in the appendix.
\begin{theorem} \label{thm:decidableWeak}
    The existence of an almost-sure strategy for parity objectives with priorities in $\{0, 1, 2\}$ in weakly revealing POMDPs is \EXPTIME-complete.
\end{theorem}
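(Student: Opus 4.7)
The plan is to prove the two complexity bounds separately, with the upper bound being essentially a direct consequence of the preceding propositions, and the lower bound being the main technical burden.

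For the \EXPTIME upper bound, given a weakly revealing POMDP $\pomdp$ with priority function $\pri\colon\states\to\set{0,1,2}$, I would first construct its belief-support MDP $\beliefMDP$ with the maximal-priority function $\priMDP$, which still takes values in $\set{0,1,2}$. This construction runs in exponential time since $\beliefMDP$ has at most $2^{\card{\states}}-1$ states and $\card{\actions}$ actions. By Proposition~\ref{prop:soundness} (soundness, which uses the weakly revealing hypothesis) together with Proposition~\ref{prop:completeness012} (completeness, specific to priorities in $\set{0,1,2}$), the existence of an almost-sure strategy for $\Parity(\pri)$ in $\pomdp$ is equivalent to the existence of an almost-sure strategy for $\Parity(\priMDP)$ in $\beliefMDP$. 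The latter is a parity MDP with only three priorities, which can be decided in polynomial time in the size of $\beliefMDP$ via maximal end-component decomposition combined with almost-sure $\Buchi$/$\coBuchi$ analysis on the induced sub-MDPs~\cite{Chatterjee.Henzinger:2012}, yielding \EXPTIME overall.

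For the matching \EXPTIME-hardness, the plan is to reduce from a known \EXPTIME-complete problem. The natural candidate is almost-sure winning for $\Buchi$ (or reachability) in a general POMDP, which is \EXPTIME-complete. Given a general instance $\pomdp$, I would augment it into a POMDP $\pomdp'$ by interspersing, along every transition, a ``revelation gadget'': with small probability $\epsilon > 0$, the transition is redirected to an auxiliary state emitting a distinguished signal that uniquely identifies it, from which the system returns to the intended successor via an observable reset signal. By construction, these revelations occur infinitely often almost surely under \emph{every} strategy, so $\pomdp'$ is weakly revealing. The priorities of the gadget's auxiliary states are tuned so that they do not affect the $\limsup$ of priorities along infinite plays (e.g., priority $1$ for a $\Buchi$ encoding), hence the almost-sure parity answer in $\pomdp'$ is determined entirely by the $\pomdp$-fragment of plays.

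The main obstacle is proving the correctness of this reduction in both directions. The forward direction (a winning strategy in $\pomdp$ lifts to one in $\pomdp'$) is typically straightforward. The harder direction is to argue that a strategy in $\pomdp'$ cannot exploit the extra observable signals to artificially win: since revelations happen stochastically and independently of the agent's choices, and since the gadget returns control to the intended successor, one must formally argue that the information leaked to the agent does not correspond to information the POMDP $\pomdp$ itself hides, so that any almost-sure winning strategy in $\pomdp'$ can be projected back to one in $\pomdp$. Together with an appropriate choice of $\epsilon$ ensuring the gadget does not perturb the almost-sure value, this will give the required lower bound.
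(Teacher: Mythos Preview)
Your upper bound argument is correct and matches the paper's proof exactly: soundness (Proposition~\ref{prop:soundness}) plus completeness for priorities $\{0,1,2\}$ (Proposition~\ref{prop:completeness012}) reduce the question to the exponential-size belief-support MDP, where parity is polynomial.

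Your lower bound, however, has a genuine gap. The reveal-and-continue gadget you propose leaks the actual current state to the agent and then lets play proceed from that state. This can only \emph{help} the agent, so the backward direction of your reduction (``almost-sure in $\pomdp'$ implies almost-sure in $\pomdp$'') fails in general. A concrete counterexample: take $\pomdp$ with $\initState$ moving uniformly and indistinguishably to $\state_1$ or $\state_2$; from each $\state_i$ there is a waiting action that stays put (same signal in both), while action $a$ wins from $\state_1$ but loses from $\state_2$, and action $b$ does the opposite. In $\pomdp$ there is no almost-sure B{\"u}chi strategy (the agent never learns which of $\state_1,\state_2$ it is in), but in your $\pomdp'$ the agent can simply wait until a revelation occurs---which happens almost surely---and then play the correct action. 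Your proposal explicitly identifies this direction as ``the main obstacle'' and asserts that one must argue the leaked information ``does not correspond to information the POMDP $\pomdp$ itself hides''; the example shows this assertion is false, so the obstacle cannot be overcome along the line you sketch.

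The paper avoids this by a different reduction (its Proposition~\ref{prop:exptimeHardnessRevealing}): it reduces from almost-sure \emph{safety} in general POMDPs, and the gadget adds, besides revealing transitions, a positive-probability \emph{reset to the initial state} $\initState$ at every step, encoding the objective as $\coBuchi(\safe)$. The reset is the key. In the backward direction, if no almost-sure safety strategy exists in $\pomdp$, then by the reachability bound (Lemma~\ref{lem:pomdpReach}) every strategy reaches $\safe$ from $\initState$ with probability $\ge\alpha>0$ within $n$ steps; since resets occur infinitely often, and after each reset there is a fixed positive probability of running $n$ steps with neither reset nor revelation (behaving exactly like $\pomdp$), $\safe$ is visited infinitely often almost surely. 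In the forward direction, an almost-sure safety strategy of $\pomdp$ is replayed after each reset, and revelations are handled by overapproximating the belief support---harmless precisely because the objective is safety. Thus both the choice of source problem (safety rather than B{\"u}chi) and the reset mechanism are essential; a pure reveal-and-continue construction does not yield a correct reduction.
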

\begin{proof}
    The \EXPTIME algorithm is a consequence of the results from this section: by \Cref{prop:soundness} (soundness of the belief-support MDP) and \Cref{prop:completeness012} (completeness), we reduce the problem to the existence of an almost-sure strategy for a parity objective with priorities in $\set{0, 1, 2}$ in an MDP of size exponential in $\card{\states}$.
    The existence of an almost-sure strategy for parity objectives is decidable in polynomial time in MDPs~\cite[Theorem~10.127]{Baier.Katoen:2008}.
    \Cref{prop:soundness} also constructs an almost-sure strategy in~$\pomdp$.

    The \EXPTIME-hardness is proved in \Cref{prop:exptimeHardnessRevealing} (Appendix~\ref{app:stronglyRevealing}), already for CoB\"uchi objectives (i.e., with priorities in $\{0, 1\}$) and for the more restricted class of strongly revealing POMDPs.
\end{proof}

\begin{remark} \label{rem:memoryLowerBound}
    The algorithm also gives an upper bound on the size of the strategies for parity objectives with priorities in $\{0, 1, 2\}$ in weakly revealing POMDPs.
    As we reduce to the analysis of an exponential-size MDP and that memoryless strategies suffice for parity objectives in MDPs, given \Cref{prop:soundness}, it means that a strategy of exponential size suffices in the POMDP.
    We can also prove an exponential lower bound: see \Cref{ex:expLowerBound} (Appendix~\ref{app:weaklyRevealing}).
\end{remark}

\subsection{Undecidability of parity for priorities \texorpdfstring{$1$}{1}, \texorpdfstring{$2$}{2}, and \texorpdfstring{$3$}{3}} \label{subsec:weakUndecidable}
The previous section suggests that analyzing the belief-support MDP is a sound and complete approach for weakly revealing POMDPs with parity objectives with priorities in $\set{0, 1, 2}$.
One may wonder whether it is complete for any priority function, i.e., if the existence of an almost-sure strategy in a weakly revealing POMDP implies the existence of an almost-sure strategy in its belief-support MDP.
Unfortunately, this fails to hold in general, already for priority functions taking values in $\set{1, 2, 3}$.
We discuss one such example below.

\begin{figure*}[t]
	\centering
	\begin{minipage}{.5\textwidth}
		\centering
		\begin{tikzpicture}[every node/.style={font=\small,inner sep=1pt}]
			\draw ($(0,0)$) node[rond] (qinit) {$\state_0, 1$};
			\draw ($(qinit.west)-(0.5,0)$) edge[-latex'] (qinit);
			\draw ($(qinit)+(3.5,1.)$) node[rond] (q1) {$\state_1, 1$};
			\draw ($(qinit)+(3.5,-1.)$) node[rond] (q2) {$\state_1', 1$};
			\draw ($(q1)+(2,0)$) node[rond] (q3) {$\state_3, 3$};
			\draw ($(q2)+(2,0)$) node[rond] (q4) {$\state_2, 2$};
			\draw ($(q1)+(-1.5,-.5)$) node[dot] (qmid) {};
			\draw ($(q2)+(-1.5,0.5)$) node[dot] (qmid2) {};
			\draw (qinit) edge[-latex',out=45,in=160] (qmid);
			\draw (q1) edge[-latex'] node[above=2pt] {$\choice, \sig_2$} (q3);
			\draw (q2) edge[-latex'] node[above=2pt] {$\choice, \sig_2$} (q4);
			\draw (q3) edge[-latex',out=145,in=70] node[above=2pt] {$\sig_0$} (qinit);
			\draw (q4) edge[-latex',out=-145,in=-70] node[below=2pt] {$\sig_0$} (qinit);
			\draw (q1) edge[-latex',out=175,in=50] node[above=2pt] {$a$} (qmid);
			\draw (q2) edge[-latex'] node[above=1pt,xshift=2pt] {$a$} (qmid2);

			\draw (qmid) edge[-latex'] node[below=2pt,xshift=2pt] {$\sig_0, \frac{1}{2}$} (qinit);
			\draw (qmid) edge[-latex',out=0,in=-120] node[below=1pt,xshift=4pt] {$\sig_1, \frac{1}{4}$} (q1);
			\draw (qmid) edge[-latex'] node[right=6pt,yshift=-4pt] {$\sig_1, \frac{1}{4}$} (q2);
			\draw (qmid2) edge[-latex'] node[below=3pt,xshift=-2pt] {$\sig_0, \frac{1}{2}$} (qinit);
			\draw (qmid2) edge[-latex',out=-60,in=-180] node[below=4pt,xshift=-2pt] {$\sig_1, \frac{1}{2}$} (q2);
		\end{tikzpicture}
	\end{minipage}%
	\begin{minipage}{.5\textwidth}
		\centering
		\begin{tikzpicture}[every node/.style={font=\small,inner sep=1pt}]
			\draw ($(0,0)$) node[rond] (qinit) {$\set{\state_0}, 1$};
			\draw ($(qinit.west)-(0.5,0)$) edge[-latex'] (qinit);
			\draw ($(qinit)+(1.5,0)$) node[dot] (qmid1) {};
			\draw ($(qinit)+(3,0)$) node[rond,font=\tiny] (q1) {$\set{\state_1, \state_1'}, 1$};
			\draw ($(q1)+(2,0)$) node[rond,font=\tiny] (q3) {$\set{\state_2, \state_3}, 3$};
			\draw ($(q1)+(0,-1.5)$) node[dot] (qmid) {};

			\draw (qinit) edge[-latex'] (qmid1);
			\draw (qmid1) edge[-latex'] node[above=2pt] {$\frac{1}{2}$} (q1);
			\draw (qmid1) edge[-latex',out=120,in=30] node[above right] {$\frac{1}{2}$} (qinit);
			\draw (q1) edge[-latex'] node[above=2pt] {$\choice$} (q3);
			\draw (q1) edge[-latex'] node[left=2pt] {$a$} (qmid);
			\draw (qmid) edge[-latex',out=180,in=-60] node[below left] {$\frac{1}{2}$} (qinit);
			\draw (qmid) edge[-latex',out=0,in=-60] node[right] {$\frac{1}{2}$} (q1);
			\draw (q3) edge[-latex',out=120,in=60] (qinit);
		\end{tikzpicture}
	\end{minipage}
	\caption{The POMDP $\pomdp$ from~\Cref{ex:incompleteCE} (depicted on the left) with an almost-sure strategy, but whose belief-support MDP (depicted on the right) has no winning strategy.
		Notation $\state, k$ inside a circle depicts a state $\state$ with priority~$k$.
		Transitions from states involving a bullet $\bullet$ indicate a probabilistic transition.
		In POMDPs, we always write the signals along transitions.
		Actions are omitted when they all induce the same transition from a given state, and probabilities equal to $1$ are omitted.}
	\label{fig:incompleteCE}
\end{figure*}
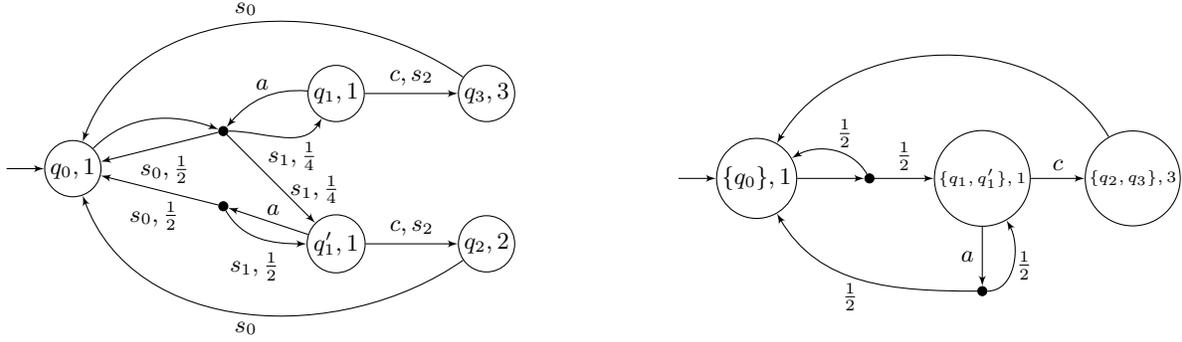

\begin{example} \label{ex:incompleteCE}
	Consider the POMDP $\pomdp$ in Figure~\ref{fig:incompleteCE}.
	This POMDP is weakly revealing, as state $\state_0$ is visited infinitely often for any strategy and is revealed through signal $\sig_0$.
	The only choice in this POMDP is in states $\state_1$ and $\state_1'$: whether to play~$a$ and move to $\state_0$ or $\set{\state_1, \state_1'}$, or to play $\choice$ and go to $\state_2$ or~$\state_3$.
	Observe that when the game starts in $\state_0$, the only reachable belief supports are $\set{\state_0}$, $\set{\state_1, \state_1'}$, and $\set{\state_2, \state_3}$, which all have a maximal odd priority.
	Hence, the belief-support MDP with priority function $\priMDP$ trivially has no almost-sure (and even positively) winning strategy.
	However, we show that there is an almost-sure strategy in $\pomdp$.

	The only way to win in this POMDP is to visit $\state_2$ infinitely often while visiting~$\state_3$ only finitely often.
	To do so, observe that when $a$ is played multiple times in a row and only receives signal $\sig_1$, the probability to be in $\state_1'$ becomes arbitrarily close to $1$.
	Formally, if $\strat_a$ is the strategy that only plays $a$, we have that for $n > 0$,
	\[
	\prob{\strat_a}{\pomdp}{\states^n\state_1'\mid (\sig_1)^n} =
	1 - \prob{\strat_a}{\pomdp}{\state_0(\state_1)^n\mid (\sig_1)^n} = 1 - \frac{1}{2^n}.
	\]

	For $n > 0$, let $\strat_{n}$ be the strategy that plays only $a$ until $\sig_1$ has been seen $n$ times \emph{in a row}, and when that is the case, plays $\choice$.
	Let us divide a play in this POMDP into rounds $1$, $2$,\ldots; every time we go back to $\state_0$ after visiting $\state_2$ or $\state_3$, we move to the next round.
	Consider the strategy that plays $\strat_{n}$ in round $n$.
	This strategy ensures that infinitely many rounds happen, because at each round $n$, it will eventually succeed in seeing $n$ occurrences of~$\sig_1$ in a row.
	At each round $n$, $\choice$ is eventually played with probability~$1$.
	By the above equation, $\state_3$ is seen with probability $\frac{1}{2^n}$ and $\state_2$ is seen with probability~$1 - \frac{1}{2^n}$.
	State $\state_2$ is clearly seen infinitely often almost surely, as the probability it is seen at each round is lower bounded by $\frac{1}{2}$.
	However, the probability that $\state_3$ is never seen anymore after round $n$ is equal to $\prod_{i = n}^\infty (1 - \frac{1}{2^i})$, which is positive and increases as $n$ grows to $\infty$.
	We deduce that the probability that $\state_3$ is seen at most finitely often is~$1$.

	Observe that no finite-memory strategy wins in this POMDP: such a strategy would need to play $\choice$ infinitely often, but could only do so after infixes of bounded length.
	Hence, the probability to reach $\state_3$ would be lower-bounded every time $\choice$ is played.
\end{example}

Generalizing the above example, we show that if we allow $\pri$ to take values in $\set{1, 2, 3}$, the existence of almost-sure strategies in weakly revealing POMDPs is undecidable.
We provide here a proof sketch; a full proof is in Appendix~\ref{app:weaklyRevealing}.

\begin{restatable}{theorem}{undecidableParity} \label{thm:undecidableParity}
	The existence of an almost-sure strategy in weakly revealing POMDPs with a parity objective with priorities in $\set{1, 2, 3}$ is undecidable.
	The same holds for the existence of a positively winning strategy.
\end{restatable}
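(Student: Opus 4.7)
The plan is to reduce from the undecidable value-$1$ problem for probabilistic automata~\cite{Gimbert.Oualhadj:2010}: given a probabilistic automaton $\atmtn$ with accepting set $\finalStates$, decide whether, for every $\varepsilon>0$, some word $w$ is accepted by $\atmtn$ with probability at least $1-\varepsilon$. Given such an $\atmtn$, I build a weakly revealing POMDP $\pomdp'$ with a priority function $\pri$ taking values in $\set{1,2,3}$ such that $\pomdp'$ admits an almost-sure (resp.\ positively) winning strategy for $\Parity(\pri)$ if and only if $\atmtn$ has value~$1$.

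\textbf{Construction.} The POMDP $\pomdp'$ follows the template of Example~\ref{ex:incompleteCE}: plays are a loop of ``rounds'', each of which simulates $\atmtn$ unobservably and then \emph{commits} to an accept/reject verdict. The state space is $\states^\atmtn \uplus \set{\mathsf{init},\mathsf{good},\mathsf{bad}}$ with priorities $\pri(\mathsf{init})=1$, $\pri(\mathsf{good})=2$, $\pri(\mathsf{bad})=3$, and priority $1$ on every automaton state; the initial state is $\mathsf{init}$. Only two signals are used: $\sig_{\mathsf{reset}}$, emitted whenever the play enters $\mathsf{init}$, and $\sig_{\mathsf{play}}$ on every other transition. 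From $\mathsf{init}$, a \emph{launch} action deterministically reaches the initial state of $\atmtn$; from any automaton state $\q$, each simulation letter $a\in\actions^\atmtn$ behaves like $\atmtn$ with probability $1-p$ (emitting $\sig_{\mathsf{play}}$) and forces a reset to $\mathsf{init}$ with the remaining probability $p$ (emitting $\sig_{\mathsf{reset}}$), for a fixed $p\in(0,1)$; a dedicated \emph{commit} action leads to $\mathsf{good}$ if $\q\in\finalStates$ and to $\mathsf{bad}$ otherwise, both transitioning next to $\mathsf{init}$. The only reachable singleton belief support is $\set{\mathsf{init}}$, and the forced-reset mechanism guarantees that it is visited infinitely often almost surely under every strategy, so $\pomdp'$ is weakly revealing.

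\textbf{Correctness.} If $\atmtn$ has value~$1$, select words $w_n$ whose acceptance probability is at least $1-2^{-n}$. Consider the strategy that, in round $n$, simulates $w_n$ letter by letter and commits upon reaching the end, restarting round $n$ from $\mathsf{init}$ whenever a forced reset interrupts the simulation. Because $p$ is a fixed constant and $|w_n|$ is finite, round $n$ eventually reaches its commit step with probability~$1$; conditional on this, $\mathsf{bad}$ is visited with probability at most $2^{-n}$. Borel-Cantelli then yields $\mathsf{bad}$ visited only finitely often almost surely, while $\mathsf{good}$ is visited exactly once per successful round and therefore infinitely often almost surely; hence $\Parity(\pri)$ holds almost surely. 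Conversely, if the value of $\atmtn$ is strictly below~$1$, fix $\varepsilon>0$ such that every word is accepted with probability at most $1-\varepsilon$. For any strategy $\strat'$ in $\pomdp'$, either \emph{commit} is played only finitely often almost surely, so $\mathsf{good}$ occurs only finitely often and $\Parity(\pri)$ fails a.s.; or \emph{commit} is played infinitely often on a positive-probability event, and on that event each commit step results in $\mathsf{bad}$ with conditional probability at least $\varepsilon$ (because, conditional on the word played since the last reset, the acceptance probability is at most $1-\varepsilon$). The conditional Borel-Cantelli lemma (Lévy's extension) then forces $\mathsf{bad}$ to appear infinitely often a.s.\ on that event. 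In either case $\Parity(\pri)$ fails almost surely, so $\pomdp'$ has no positively winning (and \emph{a fortiori} no almost-surely winning) strategy, yielding both undecidability claims.

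\textbf{Main obstacle.} The delicate step is reconciling the weakly revealing property with the simulation of $\atmtn$. The forced-reset trick is required to deal with strategies that never commit and would otherwise remain inside $\states^\atmtn$ indefinitely, but it shortens every simulation attempt by a geometric tail that could in principle leak probability towards $\mathsf{bad}$. One must check that the ``retry until commit'' subroutine in round $n$ terminates almost surely (which follows from $p$ being a fixed positive constant and $|w_n|$ being finite) and that forced resets never transit through $\mathsf{good}$ or $\mathsf{bad}$, so they introduce neither priority~$2$ nor priority~$3$ and remain parity-neutral. The formal verification of these points, together with the argument that $\set{\mathsf{init}}$ is reached infinitely often almost surely from every reachable belief support, is deferred to the appendix.
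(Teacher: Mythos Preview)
Your proposal is correct and takes essentially the same approach as the paper: both reduce from the value-$1$ problem for probabilistic automata by embedding $\atmtn$ inside a POMDP equipped with a forced-reset mechanism (yielding weak revelations via the recurrent state $\mathsf{init}\leftrightarrow\initState$) and a commit action that visits a priority-$2$ or priority-$3$ state according to whether the current automaton state lies in $\finalStates$, and both use the same Borel--Cantelli argument on a sequence of words $(w_n)$ with acceptance probability $\ge 1-2^{-n}$ for the positive direction. The only differences are cosmetic---your reset probability $p$ versus the paper's $\tfrac{1}{2}$, your two-signal alphabet versus the paper's four, and a dedicated \emph{launch} action---and one harmless inaccuracy: $\{\mathsf{init}\}$ is not the \emph{only} reachable singleton belief support (e.g., $\{\initState^\atmtn\}$ is reached right after launching), but this does not affect the weakly-revealing argument.
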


Our proof uses a reduction from the value-$1$ problem in \emph{probabilistic automata}.
A \emph{probabilistic automaton}~\cite{Rabin:1963} is a tuple $\atmtn = (\states, \actions, \transitions, \initState)$.
One can define their semantics through POMDPs: they behave like POMDPs in which we assume that the signals bring no information ($\signals$ is a singleton).
No useful information is provided by the signals along a play (beyond the number of steps played); pure strategies therefore correspond to words on alphabet $\actions$.

Intuitively, the proof expands on the POMDP in Figure~\ref{fig:incompleteCE} by replacing states $\state_1$, $\state'_1$ by a copy of a probabilistic automaton $\atmtn$: the transition from $\initState$ goes to the initial state of $\atmtn$, and playing $\choice$ goes to $\state_2$ if the current state is a final state of $\atmtn$, and to $\state_3$ otherwise.
We keep a positive probability to go back to $\initState$ at any point to make it weakly revealing.
The idea of playing $n$ times $a$ in a row in the example is replaced by a (possible) sequence of words that have a probability arbitrarily close to~$1$ to reach a final state.
One can show that there is an almost-sure strategy in this POMDP if and only if $\atmtn$ has value~$1$ w.r.t.\ its final states.

\subsection{Decidability of the weakly revealing property} \label{subsec:weaklyRevealingDecidability}
To conclude this section, we discuss the complexity of deciding whether a POMDP is weakly revealing.
The property of weak revelations is defined with a B\"uchi condition about the \emph{belief supports}, which makes it different from the well-studied B\"uchi conditions on the \emph{states} of a POMDP.
Objectives related to reaching \emph{sets} of states rather than just states are sometimes called \emph{synchronization objectives}~\cite{DMS19,Doy23}.
However, to the best of our knowledge, they have not been studied for belief supports in POMDPs.

A direct argument shows that the weakly revealing property is decidable in $2$-\EXPTIME.
To see it, extend the POMDP with the information of the current belief support: this creates an exponential POMDP with state space $\states \times \powerSetNonEmpty{\states}$.
This extended POMDP has no positively winning strategy for $\coBuchi(\states\times\singletons^\pomdp)$ if and only if it is weakly revealing.
As the existence of a positively winning strategy for coB\"uchi objectives in POMDPs is \EXPTIME-complete~\cite{Chatterjee.Chmelik.ea:2016}, the complexity of this algorithm is doubly exponential.

We show that there is a better algorithm: deciding whether a POMDP is weakly revealing is \EXPTIME-complete.
The \EXPTIME-hardness is obtained by a reduction from the existence of a positive strategy for (state-based) safety objectives in POMDPs.
The membership in \EXPTIME is more complex: we study the complexity of the existence of a strategy that achieves $\Safety(B)$, where $B$ is a set of \emph{belief supports}, with a positive probability.
We show that this problem admits an \EXPTIME algorithm by reducing to an exponential-sized deterministic reachability game.
We then show that deciding whether a POMDP is weakly revealing can be reduced to polynomially many queries to this algorithm.
All details are in Appendix~\ref{app:weaklyRevealingProperty}; the result we obtain is the following.

\begin{restatable}{theorem}{complexityWeak} \label{thm:complexityWeak}
	Deciding whether a POMDP is weakly revealing is \textsf{EXPTIME}-complete.
\end{restatable}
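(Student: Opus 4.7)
The plan is to prove the two bounds separately, after reformulating the weakly-revealing property as a qualitative event on belief-support trajectories. Concretely, $\pomdp$ is weakly revealing iff no strategy $\strat$ satisfies $\prob{\strat}{\pomdp}{\coBuchi(\singletons^\pomdp)} > 0$, i.e., no strategy achieves with positive probability the event of eventually avoiding singleton belief supports forever. Because the current belief support is a deterministic function of the observable history, this is a qualitative question about the belief-support dynamics; and positive-probability analysis of events measurable w.r.t.\ the belief-support trajectory depends only on the supports of the transition distributions, not on their exact numerical values.

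For the \EXPTIME upper bound, the plan is to reduce to a positive-probability safety problem on the belief-support MDP $\beliefMDP$, and thence to a deterministic reachability game of the same (exponential) size. First I would design a subroutine for the following subproblem: given a set $B \subseteq \powerSetNonEmpty{\states}$ of belief supports and a starting belief support, decide whether some strategy achieves $\Safety(B)$ with positive probability. Using a standard greatest-fixed-point computation on the graph of $\beliefMDP$ (in polynomial time in $\card{\beliefMDP}$), one extracts the subset $X \subseteq B$ of belief supports from which the agent can guarantee to stay in $B$ forever against the adversarial resolution of signals; positive safety from $\beliefSupp$ then reduces to graph-reachability from $\beliefSupp$ to $X$ in $\beliefMDP$. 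Since $\beliefMDP$ has size $2^{\card{\states}}$, the subroutine runs in \EXPTIME. The weakly-revealing question is then decided by invoking this subroutine polynomially many times, instantiating $B = \powerSetNonEmpty{\states}\setminus\singletons^\pomdp$ and testing which of the reachable belief supports from $\{\initState\}$ are positive-safe.

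For the \EXPTIME lower bound, the plan is to reduce from the existence of a positive-winning strategy for a state-based safety objective in a POMDP. Given $\pomdp = \pomdpFull$ with target set $T\subseteq\states$ to be avoided, I construct $\pomdp'$ by: making every state of $T$ absorbing; equipping each transition \emph{into} $T$ with a fresh state-revealing signal (one per state of $T$); and replacing every other signal by a single uninformative one (together with, if needed, a small probabilistic branching gadget inside $\states\setminus T$ that preserves safety and guarantees that belief supports in $\states\setminus T$ stay non-singleton after a bounded number of steps). Then the belief support of $\pomdp'$ becomes and stays a singleton exactly after $T$ is entered, so $\pomdp'$ produces infinitely many singleton belief supports almost surely iff $T$ is reached almost surely, iff no strategy of $\pomdp$ avoids $T$ with positive probability.

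The main obstacle is achieving \EXPTIME rather than $2$-\EXPTIME. The naive approach — taking the product of $\pomdp$ with its current belief support, obtaining an exponential-size POMDP, and invoking the \EXPTIME-complete algorithm for positive-coB\"uchi in POMDPs — already loses one exponential to the subset construction, and then another exponential to the POMDP algorithm. The key observation that saves the extra factor is that the event ``eventually always non-singleton belief support'' depends only on belief supports, and that positive-probability analysis of such events is determined by the support-level structure of $\pomdp$. This permits working directly in the exponential graph underlying $\beliefMDP$ with polynomial-time graph algorithms, rather than solving a probabilistic problem in an exponential POMDP.
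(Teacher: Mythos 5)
Your reduction of the weakly revealing property to positive-probability safety over belief supports, and the overall target of an exponential-size deterministic game, match the paper's strategy. But the core of your upper bound---the claim that positive safety from $\beliefSupp$ is equivalent to graph-reachability, inside $\beliefMDP$, of the region $X$ from which the agent \emph{surely} avoids $B$ against an adversary who may pick any signal consistent with the current belief support---is incorrect, and this is precisely the difficulty the paper's construction is designed to overcome. The completeness direction fails: a strategy can achieve $\Safety(B)$ with positive probability even though the single-support adversary can force a visit to $B$ from every reachable support. Concretely, take states $\state_0,\state_a,\state_b$, one action $\act$; from $\state_0$ the play moves to $\state_a$ or $\state_b$ with probability $\tfrac12$ each under one signal; from $\state_a$, signal $\sig$ occurs surely and loops on $\state_a$; from $\state_b$, signal $\sig$ occurs with probability $\tfrac12$ (loop on $\state_b$) and signal $\sig'$ with probability $\tfrac12$ (loop on $\state_b$). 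Then $\beliefUpd(\set{\state_a,\state_b},\act,\sig)=\set{\state_a,\state_b}$ while $\beliefUpd(\set{\state_a,\state_b},\act,\sig')=\set{\state_b}\in\singletons^\pomdp$; conditioned on the true state being $\state_a$, signal $\sig'$ never occurs and no revelation ever happens, so the POMDP is \emph{not} weakly revealing---yet your adversary can always choose $\sig'$, so $X$ is empty among reachable supports and your algorithm declares the POMDP weakly revealing. The hidden flaw in the would-be Borel--Cantelli argument is that each adversary move has positive probability \emph{given the current belief}, but that probability is weighted by the belief mass on the states able to emit the chosen signal, and this mass can tend to $0$ while the support stays constant; so the probability of the adversary's escape path is not uniformly bounded below. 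The paper repairs this by playing the game on \emph{pairs} $(\beliefSupp_1,\beliefSupp_2)$ with $\beliefSupp_2\subseteq\beliefSupp_1$, where $\beliefSupp_2$ starts as a singleton $\set{\state}$ and the adversary may only choose signals consistent with $\beliefSupp_2$: all of its choices are then consistent with a single actual-state trajectory, so the probability of its escape path \emph{conditioned on the true state being $\state$} is at least $\leastProb_\pomdp^{N}$ for $N$ the game size, and summing over $\state\in\beliefSupp_1$ weighted by the belief recovers a uniform lower bound. Your opening observation that the answer depends only on the supports of the transition probabilities is true, but it is a consequence of this two-component construction, not a licence for the one-component fixed point.

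Two further, smaller problems. In the lower bound, replacing all non-revealing signals by a single uninformative one changes the positive-safety answer of the original POMDP (a controller may need those signals to steer away from $T$), so the reduction as described is not correct; the paper instead keeps all signals and runs two indistinguishable parallel copies of the state space, which preserves the strategy correspondence while ensuring no revelation can occur before $T$ is hit. Finally, when you assemble the subroutine calls, ``testing which reachable belief supports are positive-safe'' is not well-posed for non-singleton supports, since the continuation probabilities depend on the belief and not only on its support; the paper's characterization restarts the analysis only from reachable \emph{singleton} supports $\set{\state}$ (where belief and support coincide) and asks for positive safety of $\singletons^\pomdp$ from $\pomdp^\state$ after the first step, which is what makes the reduction to polynomially many subroutine calls sound.
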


\section{Strongly revealing POMDPs}
\label{sec:stronglyRevealing}
In this section, we introduce \emph{strongly revealing POMDPs}, a stronger property entailing that infinitely many revelations occur in a POMDP almost surely.
We show that the existence of almost-sure strategies is decidable for strongly revealing POMDPs with arbitrary parity objectives.

We define a notion of \emph{revealing signals}: for $\state$ a state of a POMDP $\pomdp = \pomdpFull$, we define $\revealing(\state) = \{\sig\in\signals \mid \forall r, r'\in\states, r'\neq \state \Longrightarrow \transitions(r, \act)(\sig, r') = 0\}$ to be the set of signals that indicate surely that the next state is $\state$.
For convenience, we define $\Succ(\state, \act) = \set{\state'\in\states \mid \exists \sig\in\signals,\transitions(\state, \act)(\sig, \state') > 0}$ and $\Succ(\state, \act, \sig) = \set{\state'\in\states \mid \transitions(\state, \act)(\sig, \state') > 0}$.

\begin{definition} \label{def:stronglyRevealing}
    POMDP $\pomdp = \pomdpFull$ is \emph{strongly revealing} if any transition between two states for a given action in the underlying MDP of $\pomdp$ can also happen with a revealing signal.
    Formally, $\pomdp$ is strongly revealing if for all $\state, \state'\in\states$ and $\act\in\actions$,
    if
       $\state'\in\Succ(\state, \act)$,
    then there is $\sig\in\revealing(\state')$ such that $\state'\in\Succ(\state, \act, \sig)$.
\end{definition}

Under this definition, the set of belief supports $\singletons^\pomdp$ is visited infinitely often from the initial state for any given strategy, so a strongly revealing POMDP is in particular weakly revealing.
Observe that the weakly revealing POMDP from Figure~\ref{fig:incompleteCE} is not strongly revealing: for instance, $\state_1'\in\Succ(\state_1, \act)$, but there is no revealing signal that could for sure reveal $\state_1'$ after $\state_1$.
The strongly revealing property can be decided in polynomial time in the size of a POMDP by simply analyzing every transition.

\begin{example} \label{ex:revealingTiger}
	We give an example of a strongly revealing POMDP inspired from the \emph{tiger} of~\cite{tiger94}, depicted in Figure~\ref{fig:tiger}.
	This example is the one used in Figure~\ref{fig:drl-vs-ours} in the introduction; the code to generate this example in our tool is also provided in Appendix~\ref{app:tiger}.

	In the tiger environment, an agent has to open the left or the right door, with action $\act_\mathsf{L}$ or $\act_\mathsf{R}$, respectively.
	One of them has a (deadly) tiger behind it. Fortunately, the agent can choose to wait and listen (action $\act_\mathsf{?}$) to help its decision.
	Listening results in a signal that is biased towards the reality, i.e., the
	signal can be $\sig_\mathsf{L}$ or $\sig_\mathsf{R}$ and the former is more likely if the tiger really is on the left, and vice versa.

	We present our version of the tiger environment in which listening guarantees one will eventually discern behind which door there is a tiger. This is
	achieved by adding new revealing signals $\act_\mathsf{L!}$ or $\act_\mathsf{R!}$ which,
	importantly, can only be obtained when the tiger is on the left or on the
	right, respectively. To keep things interesting, these signals can only
	be obtained with a small probability (yet, them being there already
	ensures that the POMDP is \emph{strongly revealing}). We also add
	signals for death ($\sig_\bot$) and victory ($\sig_\top$), which are missing from the original tiger environment.

	The addition of the signals $\sig_\mathsf{L!}$ and $\sig_\mathsf{R!}$ makes the objective easier to satisfy and therefore changes the semantics of the POMDP; indeed, it is possible to gather more information than in the original tiger environment. However, revealing the deadlock states $\top$ and $\bot$ through signals $\sig_\top$ and $\sig_\bot$ is necessary to make the POMDP strongly revealing but does not make the objective easier to satisfy, as no strategy can exit these states anyway.
\end{example}

\begin{figure}
	\centering
	\begin{tikzpicture}[every node/.style={font=\small,inner sep=1pt}]
		\draw (0,0) node[dot] (initDot) {};
		\draw ($(initDot.west)-(0.5,0)$) edge[-latex'] (initDot);
		\draw ($(initDot)+(1,1.)$) node[rond] (qL) {$\state_\mathsf{L}, 1$};
		\draw ($(initDot)+(1,-1.)$) node[rond] (qR) {$\state_\mathsf{R}, 1$};

		\draw (initDot) edge[-latex'] (qL);
		\draw (initDot) edge[-latex'] (qR);

		\draw ($(qR)+(-3,0)$) node[dot] (dotR) {};
		\draw (qR) edge[-latex'] node[above=2pt] {$\act_\mathsf{?}$} (dotR);
		\draw (dotR) edge[-latex',bend right] node[below=2pt] {$\sig_\mathsf{R}, .8\mid\sig_\mathsf{L}, .15\mid\sig_\mathsf{R!}, .05$} (qR);

		\draw ($(qL)+(-3,0)$) node[dot] (dotL) {};
		\draw (qL) edge[-latex'] node[below=2pt] {$\act_\mathsf{?}$} (dotL);
		\draw (dotL) edge[-latex',bend left] node[above=2pt] {$\sig_\mathsf{L}, .8\mid\sig_\mathsf{R}, .15\mid\sig_\mathsf{L!}, .05$} (qL);

		\draw ($(qL)+(2,0)$) node[rond] (top) {$\top, 2$};
		\draw ($(qR)+(2,0)$) node[rond] (bot) {$\bot, 1$};
		\draw (top) edge[-latex',loop right] node[right=2pt] {$\sig_\top$} (top);
		\draw (bot) edge[-latex',loop right] node[right=2pt] {$\sig_\bot$} (bot);

		\draw (qL) edge[-latex',bend right] node[above right,xshift=7pt,yshift=-5pt] {$a_\mathsf{L}, \sig_\bot$} (bot);
		\draw (qL) edge[-latex'] node[above=2pt] {$a_\mathsf{R}, \sig_\top$} (top);
		\draw (qR) edge[-latex',bend left] node[below right,xshift=7pt,yshift=5pt] {$a_\mathsf{L}, \sig_\top$} (top);
		\draw (qR) edge[-latex'] node[below=2pt] {$a_\mathsf{R}, \sig_\bot$} (bot);
	\end{tikzpicture}
	\caption{Strongly revealing \emph{tiger} (Example~\ref{ex:revealingTiger}).}
	\label{fig:tiger}
\end{figure}

\subsection{Decidability of parity with strong revelations}

The soundness of the analysis of the belief-support MDP for strongly revealing POMDPs follows from \Cref{prop:soundness}; it remains to show completeness (proof in Appendix~\ref{app:stronglyRevealing}).

\begin{restatable}{proposition}{strongComplete} \label{prop:strongComplete}
    Let $\pomdp = \pomdpFull$ be a strongly revealing POMDP with a parity objective specified by priority function $\pri$, and let $\beliefMDP$ be its belief-support MDP with priority function~$\priMDP$.
    If there is an almost-sure strategy for $\Parity(\pri)$ in $\pomdp$, then there is an almost-sure strategy for $\Parity(\priMDP)$ in $\beliefMDP$.
\end{restatable}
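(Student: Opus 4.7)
The plan is to isolate a structural property of end components (ECs) in $\beliefMDP$ that holds for strongly revealing POMDPs and to use it to transfer any almost-sure POMDP winning strategy to $\beliefMDP$. The key lemma I would establish first is: \emph{for every end component $\ec$ of $\beliefMDP$, every $\beliefSupp \in \ec$, and every $\state \in \beliefSupp$, the singleton $\set{\state}$ also belongs to $\ec$.} The proof fixes $\beliefSupp \in \ec$ and selects a direct EC-predecessor $\beliefSupp_0 \in \ec$ of $\beliefSupp$ under some EC-action $\act_0$, so that $\beliefSupp \subseteq \Succ(\beliefSupp_0, \act_0)$. Strong revelation (\Cref{def:stronglyRevealing}) then provides, for every $\state' \in \Succ(\beliefSupp_0, \act_0)$, a signal $\sig \in \revealing(\state')$ with $\transitions(\state'', \act_0)(\sig, \state') > 0$ for some $\state'' \in \beliefSupp_0$; hence $\beliefUpd(\beliefSupp_0, \act_0, \sig) = \set{\state'}$ is a positive-probability successor of $\beliefSupp_0$ under $\act_0$ in $\beliefMDP$, so $\set{\state'} \in \ec$ by EC-closure. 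Applied to every $\state \in \beliefSupp \subseteq \Succ(\beliefSupp_0, \act_0)$ this yields $\set{\state} \in \ec$, and consequently $\bigcup_{\beliefSupp \in \ec} \beliefSupp = \set{\state : \set{\state} \in \ec}$, so $\max_{\beliefSupp \in \ec} \priMDP(\beliefSupp) = \max \set{\pri(\state) : \set{\state} \in \ec}$.

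Given an almost-sure winning strategy $\strat$ for $\Parity(\pri)$ in $\pomdp$, I would construct $\strat_\beliefUpd$ in $\beliefMDP$ by aggregating $\strat$'s action distribution across all observable histories consistent with a given belief-support history, the reverse operation of the lift $\liftStrat$ from \Cref{sec:beliefSupportMDP}. Via the qualitative correspondence between $\pomdp$ and $\beliefMDP$ already noted in \Cref{sec:beliefSupportMDP}, almost-sure behaviors of belief-support sequences under $\strat$ carry over to almost-sure behaviors in $\beliefMDP$ under $\strat_\beliefUpd$. Under $\strat$ the POMDP almost surely satisfies $\Parity(\pri)$, so the set $T$ of POMDP states visited infinitely often has $\max_{\state \in T} \pri(\state)$ even; and under $\strat_\beliefUpd$, almost surely the set $V$ of belief supports visited infinitely often is contained in a sub-EC of $\beliefMDP$ to which the structural lemma applies.

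It remains to identify the two lim-sup priorities. The structural lemma yields $\bigcup_{\beliefSupp \in V} \beliefSupp = \set{\state : \set{\state} \in V}$. For the coupling with the POMDP trajectory: if $\set{\state} \in V$ then the singleton is visited infinitely often, which forces the POMDP to be in state $\state$ at those same moments, so $\state \in T$; conversely, if $\state \in T$ then the belief support at each of the infinitely many visits to $\state$ contains $\state$ and eventually lies in $V$, so $\state \in \bigcup_{\beliefSupp \in V} \beliefSupp$. Thus $T = \bigcup_{\beliefSupp \in V} \beliefSupp$ almost surely, and therefore $\max_{\beliefSupp \in V} \priMDP(\beliefSupp) = \max_{\state \in T} \pri(\state)$ is even almost surely, making $\strat_\beliefUpd$ almost-sure winning for $\Parity(\priMDP)$ in $\beliefMDP$. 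The main obstacle is the structural lemma itself: its proof relies crucially on strong revelation, and \Cref{ex:incompleteCE} illustrates exactly why the lemma (and hence the reduction) fails without it, since $\set{\state_1, \state_1'}$ sits in an EC of that $\beliefMDP$ while $\set{\state_1}$ does not.
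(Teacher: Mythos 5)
Your structural lemma is correct, and it is a nice way to package the role of strong revelation: from any EC-predecessor $\beliefSupp_0$ of $\beliefSupp$ under an EC-action $\act_0$, every $\state \in \beliefSupp$ admits a revealing signal, so $\set{\state}$ is a positive-probability successor of $(\beliefSupp_0, \act_0)$ in $\beliefMDP$ and lies in the EC by closure. (Minor caveat: you need every state of the EC to have an incoming EC-edge, which holds for the ECs arising as $\mathsf{inf}(\play)$ via \Cref{thm:ec}, the only ones you use.) The paper exploits exactly the same phenomenon, but packaged differently: it works with the end components of the \emph{underlying MDP} of $\pomdp$ that the given almost-sure strategy $\strat$ ends up in (these have even maximal priority because $\strat$ wins), and it builds a \emph{new} strategy in $\beliefMDP$ that reaches a singleton of such an EC almost surely (\Cref{lem:bijectionStrategiesAS}), then repeatedly attempts a path to the maximal-priority state hoping for a revelation at every step, falling back on a safety strategy (whose existence is proved by contradiction with the EC property) whenever a revelation fails.

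The genuine gap is in your second half, the transfer from $\strat$ to the aggregated $\strat_\beliefUpd$. The qualitative correspondence of \Cref{sec:beliefSupportMDP} (\Cref{lem:bijectionStrategies}) only says that the same \emph{finite} belief-support cylinders have positive probability under $\strat$ and $\strat_\beliefUpd$; it does not transfer almost-sure statements, let alone tail events such as ``the set of belief supports visited infinitely often has even maximal priority.'' Indeed $\beliefMDP$ re-weights transitions uniformly, and even for plain reachability the almost-sure transfer requires the restart construction of \Cref{lem:bijectionStrategiesAS} (and fails in the converse direction, \Cref{fig:ASnotPreserved}); nothing analogous is provided here for parity. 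Relatedly, your identification $T = \bigcup_{\beliefSupp \in V} \beliefSupp$ treats a run of $\pomdp$ under $\strat$ and a run of $\beliefMDP$ under $\strat_\beliefUpd$ as if they lived on a common probability space (``forces the POMDP to be in state $\state$ at those same moments''), but no coupling is constructed, and these are different processes with different transition probabilities. If instead you read $V$ as the belief supports occurring infinitely often along the POMDP play itself, then the pointwise claims about $T$ and $V$ are fine, but you then need (i) that this induced belief-support sequence almost surely stabilizes into an EC of $\beliefMDP$ --- which does not follow from \Cref{thm:ec}, since the induced process is not a run of $\beliefMDP$ and its transition probabilities are not bounded below (the belief mass on a support state can vanish, as in the tiger example) --- and (ii) a way to turn the resulting almost-sure property of the POMDP-side process into an almost-sure strategy in $\beliefMDP$, which is again the missing transfer. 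Closing either route essentially requires the kind of explicit strategy construction in $\beliefMDP$ that the paper performs; as written, the reduction is asserted rather than proved.
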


We also show a complexity lower bound.
The lower bound holds for CoB\"uchi objectives in strongly revealing POMDPs; as strongly revealing POMDPs are a subclass of weakly ones, the hardness follows for weakly revealing POMDPs.

\begin{restatable}{proposition}{exptimeHardnessRevealing} \label{prop:exptimeHardnessRevealing}
	The following problem is \EXPTIME-hard: given a strongly revealing POMDP with a CoB\"uchi objective, decide whether there is an almost-sure strategy.
\end{restatable}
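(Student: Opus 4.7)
The plan is to reduce from the almost-sure reachability problem for (arbitrary) POMDPs, which is well-known to be \EXPTIME-complete. Given a POMDP $\pomdp = \pomdpFull$ and a target $\reach \subseteq \states$, I would build a strongly revealing POMDP $\pomdp'$ equipped with a CoB\"uchi priority function $\pri$ such that $\pomdp$ admits an almost-sure strategy for $\Reach(\reach)$ iff $\pomdp'$ admits an almost-sure strategy for $\Parity(\pri)$.

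The construction proceeds in two steps. First, I convert reachability to CoB\"uchi: assume without loss of generality that $\reach$ is absorbing (every transition from $\q \in \reach$ is a self-loop emitting a dedicated revealing signal $\sig_\q$), and assign priority $0$ to $\reach$ and priority $1$ to all other states. Because $\reach$ is absorbing, the CoB\"uchi objective "visit priority-$1$ states only finitely often" coincides with $\Reach(\reach)$. Second, I enforce strong revelation: introduce a fresh revealing signal $\sig_{\q'}$ for every state $\q'$, and for each original transition $\transitions(\q, \act)(\sig, \q') = p > 0$, set $\transitions'(\q,\act)(\sig,\q') = (1-\varepsilon) p$ and $\transitions'(\q,\act)(\sig_{\q'},\q') = \varepsilon \cdot \sum_{\sig'} \transitions(\q,\act)(\sig', \q')$ for some small $\varepsilon \in (0,1)$. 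Every successor $\q' \in \Succ(\q, \act)$ is then reached with positive probability via $\sig_{\q'} \in \revealing(\q')$, so $\pomdp'$ is strongly revealing, and the construction is polynomial.

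For the forward direction, any almost-sure reachability strategy $\strat$ in $\pomdp$ yields an almost-sure CoB\"uchi strategy $\strat'$ in $\pomdp'$ by ``flattening'' observable histories: each revealing signal $\sig_{\q'}$ in $\pomdp'$ is replaced by a canonical original signal leading to $\q'$ before querying $\strat$. Since the state-level transition probabilities are preserved (marginalizing over the signal), $\reach$ is reached almost surely under $\strat'$, and thus priority-$1$ states are visited only finitely often.

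The main obstacle is the converse, since a CoB\"uchi strategy in $\pomdp'$ could in principle exploit the additional information carried by the new revealing signals. I would handle this via the belief-support abstraction using \Cref{prop:strongComplete}: if $\pomdp'$ admits an almost-sure CoB\"uchi strategy, then so does its belief-support MDP $\beliefMDP'$. The belief supports reachable in $\beliefMDP'$ are exactly those reachable in the belief-support MDP $\beliefMDP$ of $\pomdp$, together with additional singletons produced by revelations; in particular, revelations carry no strategic information beyond what the current belief support already records. An almost-sure CoB\"uchi strategy in $\beliefMDP'$ therefore induces an almost-sure reachability strategy (for the absorbing target $\reach$) in $\beliefMDP$, which by the soundness and completeness of the belief-support abstraction for almost-sure reachability in general POMDPs (Lemma~\ref{lem:bijectionStrategiesAS}) lifts back to an almost-sure reachability strategy in $\pomdp$. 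This completes the reduction and establishes \EXPTIME-hardness.
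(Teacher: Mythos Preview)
Your converse direction fails: adding revealing signals can create almost-sure reachability where none existed. Consider $\pomdp$ with states $\{\initState,\state_1,\state_2,\top,\bot\}$, target $\reach=\{\top\}$ (absorbing), a single uninformative signal $\sig$ on all non-terminal transitions, and transitions: from $\initState$ any action goes to $\state_1$ or $\state_2$ with probability $\tfrac12$ each; from $\state_1$, action $L\to\top$, $R\to\bot$, $W\to\state_1$; from $\state_2$, $L\to\bot$, $R\to\top$, $W\to\state_2$. In $\pomdp$ the belief support after one step is $\{\state_1,\state_2\}$ and waiting never refines it, so any commitment to $L$ or $R$ reaches $\bot$ with probability $\tfrac12$; there is no almost-sure reach. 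In your $\pomdp'$, however, each $W$ has probability $\varepsilon$ of emitting a revealing signal, so the strategy ``wait until revealed, then play the correct action'' reaches $\top$ almost surely. Hence $\pomdp'$ has an almost-sure CoB\"uchi strategy while $\pomdp$ has no almost-sure reachability strategy, and your reduction is not valid. The sentence ``revelations carry no strategic information beyond what the current belief support already records'' is precisely what breaks: revelations make \emph{new} singleton belief supports reachable in $\beliefMDP'$ that are not reachable in $\beliefMDP$, and from those singletons decisive actions become safe.

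The paper avoids this by reducing from almost-sure \emph{safety} rather than reachability, and by additionally inserting a reset to $\initState$ with constant probability at every step. Revelations can only help a safety strategy (one can overapproximate the belief by pretending an ordinary signal was seen), so the forward direction is immediate. For the converse, the reset gives infinitely many independent ``attempts'' from $\initState$; if no almost-sure safety strategy exists, then by the quantitative bound of Lemma~\ref{lem:pomdpReach} each attempt hits the bad set with uniformly lower-bounded probability, so the bad set is visited infinitely often almost surely and the CoB\"uchi objective fails. Neither ingredient has an analogue in your reachability-based construction.
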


We obtain as before the decidability of the problem by reducing to the analysis of the belief-support MDP.
The proof is similar to the one of Theorem~\ref{thm:decidableWeak}, simply replacing the use of \Cref{prop:completeness012} by \Cref{prop:strongComplete}.
\begin{theorem} \label{thm:stronglyDecidable}
    The existence of an almost-sure strategy for parity objectives in strongly revealing POMDPs is \EXPTIME-complete.
\end{theorem}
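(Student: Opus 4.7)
The plan is to mimic the proof of Theorem~\ref{thm:decidableWeak}, which established the analogous result in the weakly revealing setting for priorities in $\{0,1,2\}$, but now exploiting the stronger completeness result (Proposition~\ref{prop:strongComplete}) that holds under strong revelation for \emph{arbitrary} parity objectives. The upper bound is a reduction to the belief-support MDP, and the lower bound is already in hand.

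For the \EXPTIME{} upper bound, I would proceed as follows. Given a strongly revealing POMDP $\pomdp = \pomdpFull$ with priority function $\pri$, construct its belief-support MDP $\beliefMDP = \beliefMDPFull$ together with the lifted priority function $\priMDP(\beliefSupp) = \max\{\pri(\state) \mid \state \in \beliefSupp\}$. The state space $\powerSetNonEmpty{\states}$ has size $2^{\card{\states}}-1$, so $\beliefMDP$ has exponential size in $\card{\states}$ and can be built in exponential time. Then decide whether there is an almost-sure strategy for $\Parity(\priMDP)$ in $\beliefMDP$, which is doable in polynomial time in the size of $\beliefMDP$, hence in \EXPTIME{} overall~\cite{Baier.Katoen:2008}. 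To justify correctness of this reduction I would invoke two facts: (i) since every strongly revealing POMDP is weakly revealing, Proposition~\ref{prop:soundness} gives soundness: an almost-sure strategy $\strat_\beliefUpd$ in $\beliefMDP$ (pure and memoryless by~\cite{Chatterjee.Henzinger:2012}) lifts to an almost-sure strategy $\liftStrat$ in $\pomdp$; (ii) Proposition~\ref{prop:strongComplete} gives completeness for the full parity class in the strongly revealing case. Consequently, the two problems are equivalent, and the algorithm is not only a decision procedure but also outputs an almost-sure strategy via the lifting.

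For the \EXPTIME{} lower bound, I would directly invoke Proposition~\ref{prop:exptimeHardnessRevealing}, which already establishes \EXPTIME-hardness for strongly revealing POMDPs with CoB\"uchi objectives (a special case of parity with priorities in $\{0,1\}$). Since CoB\"uchi parity objectives are a restriction of the class considered here, hardness transfers directly.

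Overall, the only nontrivial ingredient is the combination of soundness and completeness for the belief-support abstraction, both of which are available off the shelf: soundness via Proposition~\ref{prop:soundness} (through the implication strongly revealing $\Rightarrow$ weakly revealing, which was already observed after Definition~\ref{def:stronglyRevealing}), and completeness via Proposition~\ref{prop:strongComplete}. The main ``obstacle'' has therefore been absorbed into Proposition~\ref{prop:strongComplete}; once that is in place, the present theorem is a short assembly of previously stated results, with no additional technical content beyond the exponential blow-up of the subset construction and the polynomial-time parity-MDP subroutine.
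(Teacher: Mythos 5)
Your proposal is correct and follows essentially the same route as the paper: soundness via Proposition~\ref{prop:soundness} (using that strongly revealing implies weakly revealing), completeness via Proposition~\ref{prop:strongComplete}, a polynomial-time parity solver on the exponential-size belief-support MDP for the upper bound, and Proposition~\ref{prop:exptimeHardnessRevealing} for the lower bound. No gaps to report.
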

\begin{proof}
   Previous propositions show that the problem is in \EXPTIME: by Proposition~\ref{prop:soundness} (soundness of the belief-support MDP) and Proposition~\ref{prop:strongComplete} (completeness), we can reduce the problem to the existence of an almost-sure strategy for a parity objective in an MDP of size exponential in $\card{\states}$.
   The existence of an almost-sure strategy for parity objectives is decidable in polynomial time in MDPs~\cite[Theorem~10.127]{Baier.Katoen:2008}.

   The \EXPTIME-hardness follows from Proposition~\ref{prop:exptimeHardnessRevealing}, already for CoB\"uchi objectives.
\end{proof}

\subsection{Undecidability of CoB\"uchi games with strong revelations} \label{sec:games}
We study here whether the revealing semantics helps in \emph{zero-sum games} of partial information with revealing semantics.
In general, such games with CoB\"uchi objectives are undecidable (they encompass POMDPs) while B\"uchi games are decidable for almost-sure strategies~\cite{Bertrand.Genest.ea:2017}.
We show a negative result: the existence of an almost-sure strategy in two-player CoB\"uchi \emph{games} with partial information is undecidable, even when restricted to games satisfying a natural extension of the strongly revealing property.

Two-player games of partial information are tuples $\game = \gameFull$, where $\states$ is a finite set of states, $\actions_1$ and $\actions_2$ are finite sets of actions, $\signals$ is a finite set of signals, $\transitions\colon \states\times\actions_1\times\actions_2\to\dist{\signals\times\states}$ is the transition function, and $\initState\in\states$ is an initial state.
At each round, two players Player~1 and Player~2 respectively choose an action in $\actions_1$ and $\actions_2$.
Histories and plays are defined as for POMDPs.
For $i\in\{1, 2\}$, a strategy of Player~$i$ is a function $\strat_i\colon (\actions_i \times \signals)^* \to \dist{\actions_i}$.
Given two strategies $\strat_1$ and $\strat_2$ for the two players, we can define as for POMDPs a probability measure $\prob{\strat_1, \strat_2}{\pomdp}{\cdot}$ on plays.
A strategy $\strat_1$ of Player~$1$ is \emph{almost sure} for an objective $\objective$ if for all strategies $\strat_2$ of Player~2, $\prob{\strat_1, \strat_2}{\pomdp}{\objective} = 1$.
To express CoB\"uchi objectives, we consider as before a priority function $\pri\colon\states\to \{0, 1\}$.

A game $\game$ is \emph{strongly revealing} if for any possible transition between two states with a pair of actions, there is a chance of a signal that reveals the second state; the definition is the same as for POMDPs, assuming $\actions = \actions_1 \times \actions_2$.

\begin{restatable}{theorem}{undecidableGames} \label{thm:undecidableGames}
    The existence of an almost-sure strategy for \Pone
    in strongly revealing CoB\"uchi games is undecidable.
\end{restatable}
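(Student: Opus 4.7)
The plan is to reduce from an undecidable problem—most naturally, the value-$1$ problem for probabilistic automata used already in \Cref{thm:undecidableParity}—exploiting the fact that a second player can encode hidden information that persists despite strong revelations of the game's state. The key observation is that ``strongly revealing'' only constrains the signals accompanying \emph{state} transitions; it imposes nothing on Player~$2$'s \emph{actions}, which Player~$1$ never observes. Consequently, one can design a game in which Player~$2$'s hidden action sequence plays the role of the input word of a probabilistic automaton, even though the current state of the game is revealed infinitely often with positive probability at each step.

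I would construct $\game$ in the spirit of the POMDP of \Cref{ex:incompleteCE}. Given a probabilistic automaton $\atmtn = (\states^\atmtn, \actions^\atmtn, \transitions^\atmtn, \initState^\atmtn)$ with final set $\finalStates$, the game has a reset state $\initState$ of priority~$1$, a simulation region encoding $\atmtn$, and two sinks $\state_\top$ (priority~$0$, self-looping) and $\state_\bot$ (priority~$1$, cycling back to $\initState$). In the simulation region, Player~$2$'s hidden action corresponds to an input letter of $\atmtn$, and the stochastic transition implements $\transitions^\atmtn$ on that letter. Player~$1$ has a ``commit'' action $\choice$ that moves the play to $\state_\top$ if the current state is in $\finalStates$ and to $\state_\bot$ otherwise. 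To enforce the strongly revealing property, I would enrich the signal alphabet with one revealing signal per destination state, issued with small positive probability on any incoming transition; this discloses the current state but nothing about Player~$2$'s action, since many distinct Player~$2$ action sequences can produce the same state trajectory under $\atmtn$'s stochastic dynamics.

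The CoB\"uchi objective is ``visit $\state_\bot$ only finitely often'' (priority~$1$ on $\state_\bot$ and on the simulation region, priority~$0$ only on $\state_\top$). The equivalence argument mirrors \Cref{ex:incompleteCE}. For the ``value~$1$ $\Rightarrow$ almost-sure winning'' direction, Player~$1$ partitions the play into rounds: at round $n$, it waits passively for long enough that, against any Player~$2$ strategy, the probability of acceptance when committing exceeds $1 - 2^{-n}$ (possible because $\atmtn$ has value~$1$), and only then plays $\choice$; a Borel--Cantelli argument ensures $\state_\bot$ is visited only finitely often almost surely. Conversely, an almost-sure Player~$1$ strategy yields, via its interaction with an adversarial Player~$2$, a sequence of input words for $\atmtn$ witnessing value~$1$.

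The main obstacle is to design the revealing signals globally so that every game transition admits one, without collapsing the hidden information in Player~$2$'s action sequence that drives the undecidability. This requires the revealing signals to be indexed only by the destination state—independently of Player~$2$'s action and of the source state—so that observing such a signal gives Player~$1$ no additional clue about Player~$2$'s past choices beyond the current belief support. A secondary delicate point is handling the commit action $\choice$ inside the simulation region: for strong revealing to hold uniformly, every outgoing transition of every state (including $\choice$-transitions from non-final states landing in $\state_\bot$) must carry its own revealing signal. Once this bookkeeping is in place, the reduction transfers the undecidability of the value-$1$ problem to the almost-sure CoB\"uchi problem in strongly revealing partial-information games.
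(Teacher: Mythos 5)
There is a genuine gap, and it is in the heart of the reduction rather than in the bookkeeping you flag as the ``main obstacle''. In your construction \Ptwo chooses the input letters of $\atmtn$, yet you claim that if $\atmtn$ has value~$1$ then \Pone can wait until ``against any Player~2 strategy, the probability of acceptance when committing exceeds $1-2^{-n}$''. This is false: value~$1$ is an \emph{existential} statement (there exists a word whose acceptance probability is close to $1$), not a universal one. An adversarial \Ptwo will simply play letters that keep the distribution of $\atmtn$ away from $\finalStates$ (or stall), so waiting buys \Pone nothing and the commit action $\choice$ lands in $\state_\bot$ with probability bounded away from $0$ no matter how long \Pone waits. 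The converse direction is equally problematic, since \Pone never chooses the word, so its strategy does not produce witnesses of value~$1$. (Giving the letters to \Pone instead does not help either: then \Ptwo is irrelevant, the model collapses to a strongly revealing POMDP, and Theorem~\ref{thm:stronglyDecidable} shows the problem is decidable, so no such reduction can establish undecidability.)

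The paper's proof flips the equivalence and adds machinery you are missing. There, \Ptwo plays the letters and is the player who exploits value~$1$: to produce the bad priority, \Ptwo must make an explicit ``$\finalStates$-guess'' with a dedicated action $\checkmark$, which leads to the priority-$1$ state $\state_\mathsf{bad}$ if correct and to an immediately losing sink if wrong; \Pone wins almost surely if and only if $\atmtn$ does \emph{not} have value~$1$. Moreover, because signals are shared, the revealing signals you add leak the current state to \emph{both} players: a revelation would let \Ptwo make a safe $\checkmark$-guess (and, in your variant, would let \Pone commit safely), destroying the intended hardness. The paper neutralizes this by (i) letting \Pone reset the game by correctly guessing the revealed state (wrong guess loses for \Pone), (ii) duplicating $\atmtn$ into two indistinguishable copies so that \Pone can only guess safely right after an added revelation, and (iii) also letting \Ptwo reset by a correct state guess so the play cannot get trapped in a region from which $\finalStates$ is unreachable. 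Your proposal addresses only how to index the revealing signals, which satisfies Definition~\ref{def:stronglyRevealing} but does not confront the real difficulty: preventing both players from exploiting the revelations while keeping the value-$1$ question embedded in \Ptwo's private knowledge of the belief it has induced.
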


\subsection{Optimistic semantics for POMDPs}
\label{sec:optimistic_semantics}
In our revealing definitions, we adopted the point of view of considering \emph{subclasses} of POMDPs.
A limitation of this point of view is that our results say nothing about POMDPs which are not strongly (nor weakly) revealing.
We argue that another fruitful formulation of our results concerns the class of \emph{all} POMDPs, by defining alternative, \emph{revealing} semantics.

Consider a POMDP $\pomdp$.
Let us define the extended POMDP $\pomdp_\mathsf{sr}$ such that, at each transition, there is a small probability of revealing which state we reach after firing this action, using additional signals $\sig_\state$, one for each state $\state$ of $\pomdp$.

\begin{theorem}
	For any POMDP $\pomdp$, $\pomdp_\mathsf{sr}$ is strongly revealing.
	Moreover, if there is no almost-sure strategy ensuring an omega-regular objective in $\pomdp_\mathsf{sr}$ (which is decidable by Theorem~\ref{thm:stronglyDecidable}), then there is no almost-sure strategy ensuring the same objective in $\pomdp$.
\end{theorem}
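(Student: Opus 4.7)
That $\pomdp_\mathsf{sr}$ is strongly revealing is immediate from its construction. Its underlying MDP coincides with that of $\pomdp$, hence the successor relation $\Succ$ is unchanged. For every $\state,\state'\in\states$ and $\act\in\actions$ with $\state'\in\Succ(\state,\act)$, the added signal $\sig_{\state'}$ is by construction only emitted along transitions whose next state is $\state'$ (so $\sig_{\state'}\in\revealing(\state')$) and with strictly positive probability on any such transition (so $\state'\in\Succ(\state,\act,\sig_{\state'})$). Definition~\ref{def:stronglyRevealing} is therefore satisfied.

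\textbf{Part 2.} The plan is to prove the contrapositive: any almost-sure strategy $\strat$ in $\pomdp$ for $\objective$ lifts to an almost-sure strategy $\strat'$ in $\pomdp_\mathsf{sr}$. The underlying idea is that $\pomdp_\mathsf{sr}$ is \emph{strictly more informative} than $\pomdp$: at each step, with small probability the original $\pomdp$ signal is replaced by the strictly stronger piece of information ``the next state is $\state'$''. An observer in $\pomdp_\mathsf{sr}$ can therefore simulate a $\pomdp$ observer by ``resampling'' the hidden $\pomdp$ signal whenever a revelation occurs. For a general omega-regular $\objective$, we first take a product of $\pomdp$ with a deterministic parity automaton on states; this product commutes with the revealing extension, so it suffices to treat parity objectives $\objective\subseteq\states^\omega$.

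\textbf{Construction of $\strat'$.} Fix the natural coupling between $\pomdp$ and $\pomdp_\mathsf{sr}$: draw a $\pomdp$ transition $(\sig,\state')$ from $\transitions(\state,\act)$, then independently flip a biased coin; if tails (probability close to $1$) output $\sig$ as the $\pomdp_\mathsf{sr}$ signal, else output $\sig_{\state'}$. The $\pomdp_\mathsf{sr}$ observable history is thus a function of the $\pomdp$ observable history and independent Bernoulli randomness. Define $\strat'$ to maintain a sampled $\pomdp$ observable history $\tilde{\hist}$, updated at each step as follows: if the new $\pomdp_\mathsf{sr}$ signal is some $\sig\in\signals$ (the coin came up tails), append $(\act,\sig)$; if it is $\sig_{\state'}$, sample $\tilde{\sig}$ from the conditional law of the $\pomdp$ signal given $\tilde{\hist}$, $\act$ and reaching $\state'$ (computable by Bayes' rule from $\tilde{\hist}$ and the $\pomdp$ transitions), and append $(\act,\tilde{\sig})$. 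Then play $\act=\strat(\tilde{\hist})$.

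\textbf{Correctness and main obstacle.} By induction on time, the joint law of $(\tilde{\hist},\text{state sequence},\text{action sequence})$ under $\strat'$ in $\pomdp_\mathsf{sr}$ equals the law of $(\hist,\text{state sequence},\text{action sequence})$ under $\strat$ in $\pomdp$: the state transitions in $\pomdp_\mathsf{sr}$ coincide with those of $\pomdp$ (same underlying MDP), and the resampling step is designed precisely so that each $\tilde{\sig}_i$ has the correct $\pomdp$ conditional law given the already-sampled prefix. Marginalizing out the revelation coin flips then recovers exactly the $\pomdp$ distribution under $\strat$. Since $\objective\subseteq\states^\omega$ depends only on states, $\prob{\strat'}{\pomdp_\mathsf{sr}}{\objective}=\prob{\strat}{\pomdp}{\objective}=1$. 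The main obstacle is the bookkeeping to show that the conditional sampling is well-defined from the information actually available to $\strat'$ and that it indeed yields the claimed joint law; this is a standard coupling/induction argument that ultimately relies on the independence of the revelation coins from the $\pomdp$ run.
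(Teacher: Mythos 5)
Your Part~1 is fine, and your overall plan for Part~2 (prove the contrapositive by lifting an almost-sure strategy $\strat$ of $\pomdp$ to $\pomdp_\mathsf{sr}$) is exactly the route the paper takes -- the paper itself only states this lifting in one sentence. The problem is the justification you give for the lift. Your central claim -- that under $\strat'$ the joint law of (sampled history, state sequence, action sequence) in $\pomdp_\mathsf{sr}$ equals the law of (history, states, actions) under $\strat$ in $\pomdp$ -- is false. The resampled signal is drawn from the conditional law given only $(\tilde{\hist},\act,\state')$, so it is conditionally \emph{independent} of the hidden past given these data, whereas the true signal is not. Concretely: take states $\state_0,A,B,C,T,T'$; from $\state_0$ a single action goes to $A$ or $B$ with probability $\tfrac12$ each under one common signal $u$; from $A$ the signal is $x$ and from $B$ it is $y$, both leading surely to $C$; from $C$, action $\alpha$ goes to $T$ and $\beta$ to $T'$. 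The strategy $\strat$ that plays $\alpha$ after $x$ and $\beta$ after $y$ is almost sure for $W=$ ``$A$ is followed by $T$, $B$ by $T'$''. In $\pomdp_\mathsf{sr}$ (with your coupling, where the revelation \emph{replaces} the signal), when $\sig_C$ occurs at the second step your strategy resamples $x$ or $y$ uniformly, independently of whether the run went through $A$ or $B$; with probability $\varepsilon/4$ the resulting state sequence is $\state_0\,A\,C\,T'\cdots$, which has probability $0$ under $\strat$ in $\pomdp$. So even the marginal law of the state sequence is not preserved, and the concluding equality $\prob{\strat'}{\pomdp_\mathsf{sr}}{W}=\prob{\strat}{\pomdp}{W}$ does not follow.

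The same example shows a second, related issue: for this (non-prefix-independent) $W$, under the ``replace the signal'' semantics \emph{no} strategy of $\pomdp_\mathsf{sr}$ is almost sure, so the theorem must be read for objectives that are prefix-independent over the states of the POMDP to which the construction is applied (parity objectives given by a priority function on $\states$). Your remark that the product with a deterministic parity automaton ``commutes with the revealing extension'' is precisely where this gets swept under the rug: it does not commute, because in $\pomdp_\mathsf{sr}$-then-product a revelation does not reveal the automaton component (which summarizes the past), while in product-then-$\pomdp_\mathsf{sr}$ it does. What your Bayes-resampling genuinely buys is the weaker invariant that, for each fixed time $i$, the pair $(\tilde{\hist}_i,\state_i)$ has the same law as $(\hist_i,\state_i)$ under $\strat$; turning that into almost-sure satisfaction of a parity objective still needs an extra argument exploiting prefix-independence and handling the infinitely many future revelations. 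Alternatively, the whole difficulty disappears if $\pomdp_\mathsf{sr}$ is set up so that the revealing signal is emitted \emph{alongside} the original signal rather than instead of it: then the lift is simply ``ignore the extra component,'' your law-equality becomes true verbatim, and the statement holds for every objective $W\subseteq\states^\omega$.
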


The contrapositive is easily proved: any almost-sure strategy of $\pomdp$ can be lifted to an almost-sure strategy of $\pomdp_\mathsf{sr}$.
This property justifies the term ``optimistic semantics''. Note that the converse implication cannot hold (as POMDPs with omega-regular objectives are undecidable).

We compare our approach with a well-known, even more optimistic semantics: revealing the exact state at each transition, which corresponds to working on the underlying MDP.
The following example shows that studying the strongly revealing semantics is finer than studying the underlying MDP (i.e., it proves the non-existence of almost-sure strategies for more POMDPs).

\begin{example} \label{ex:optimisticSemantics}
	Consider the POMDP $\pomdp$ with a CoBüchi objective depicted in Figure~\ref{fig:optimisticSemantics}.
	In this POMDP, the only way to win is to get to state $\top$; for this, it is necessary to play $a$ from $\state_a$ or $b$ from $\state_b$.
	However, this is only possible by knowing the exact state ($\state_a$ or $\state_b$) after one move.
	Therefore, there is an almost-sure strategy in the underlying MDP, but not in the strongly revealing $\pomdp_\mathsf{sr}$.
\end{example}

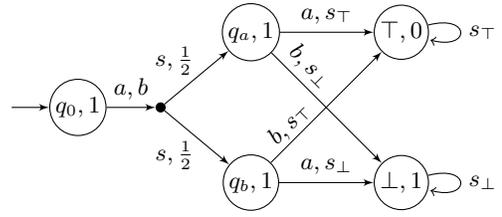
\begin{figure}
	\centering
	\begin{tikzpicture}[every node/.style={font=\small,inner sep=1pt}]
		\draw ($(0,0)$) node[rond] (qinit) {$\state_0, 1$};
		\draw ($(qinit.west)-(0.5,0)$) edge[-latex'] (qinit);
		\draw ($(qinit)+(1.1,0)$) node[dot] (initDot) {};
		\draw ($(initDot)+(1.2,1.)$) node[rond] (qa) {$\state_a, 1$};
		\draw ($(initDot)+(1.2,-1.)$) node[rond] (qb) {$\state_b, 1$};

		\draw (qinit) edge[-latex'] node[above=2pt] {$a, b$} (initDot);
		\draw (initDot) edge[-latex'] node[above left] {$\sig, \frac{1}{2}$} (qa);
		\draw (initDot) edge[-latex'] node[below left] {$\sig, \frac{1}{2}$} (qb);

		\draw ($(qa)+(2,0)$) node[rond] (top) {$\top, 0$};
		\draw ($(qb)+(2,0)$) node[rond] (bot) {$\bot, 1$};

		\draw (qa) edge[-latex'] node[above=2pt] {$a, \sig_\top$} (top);
		\draw (qa) edge[-latex'] node[above,xshift=-10pt,yshift=12pt,rotate=-45] {$b, \sig_\bot$} (bot);
		\draw (qb) edge[-latex'] node[above=2pt,xshift=-10pt,yshift=-12pt,rotate=45] {$b, \sig_\top$} (top);
		\draw (qb) edge[-latex'] node[above=2pt] {$a, \sig_\bot$} (bot);
		\draw (top) edge[-latex',loop right] node[right=2pt] {$\sig_\top$} (top);
		\draw (bot) edge[-latex',loop right] node[right=2pt] {$\sig_\bot$} (bot);
	\end{tikzpicture}
	\caption{POMDP $\pomdp$ with a CoB\"uchi objective such that there is an almost-sure strategy in the underlying MDP, but not in the strongly revealing $\pomdp_\mathsf{sr}$ (Example~\ref{ex:optimisticSemantics}).}
	\label{fig:optimisticSemantics}
\end{figure}

The fact that our approach is finer than considering the underlying MDP could already be guessed from the computational complexity of the problems: solving MDPs with parity objectives is in $\mathsf{P}$~\cite{Baier.Katoen:2008}, while solving strongly revealing POMDPs with parity objectives is $\EXPTIME$-complete (Theorem~\ref{thm:stronglyDecidable}).
If presented with a POMDP with a parity objective beyond a Büchi one, we therefore suggest to try to solve the underlying MDP in polynomial time; if there is no almost-sure strategy, then this is also the case for the original POMDP; otherwise, try to solve the strongly revealing POMDP in exponential time; if there is no almost-sure strategy, then this is also the case for the original POMDP.

\section{Related works}
\label{sec:relatedWorks}
We discuss additional references where a restriction is set to stochastic systems to make them decidable.

The closest idea to our revelations that we know of is in~\cite{Berwanger.Mathew:2017}, defining a class of partial-information multi-player games with \emph{sure} (not just almost-sure) revelations; from any point in the game, a ``revelation'' occurs surely within a bounded number of steps.
This is a yet stronger kind of revelation mechanism under which even parity \emph{games} are decidable.

The \emph{decisiveness property}~\cite{Abdulla.Ben-Henda.ea:2007,Bertrand.Bouyer.ea:2020} is a useful property to decide reachability properties in infinite stochastic systems (without decision-making).
Decisiveness is implied by the existence of a \emph{finite attractor}; there is such an attractor in weakly revealing POMDPs once we fix a finite-memory strategy (as in Proposition~\ref{prop:soundness}).
An extension was considered for infinite MDPs~\cite{Bertrand.Bouyer.ea:2020} to approximate the value of reachability, which is orthogonal to our objectives.

Another path to decidability and strong guarantees is to restrict strategies, such as studying ``memoryless''~\cite{VLB12} or finite-memory~\cite{Chatterjee.Chmelik.ea:2016,ACJK22} strategies in POMDPs.
In our paper, the strategies we consider only use finite memory, as they are memoryless strategies on the belief-support MDP (in our case, they are even shown to be optimal among all strategies under the right assumptions).
The sufficiency of belief-support-based strategies in POMDPs, which was known for almost-sure reachability~\cite{Baier.Groer.ea:2012}, was also exploited to craft efficient algorithms in~\cite{JJS20}; such an approach could speed up our algorithms.

An interesting class of decidable POMDPs with parity objectives is given by the \emph{multi-environment MDPs}~\cite{Raskin.Sankur:2014}, which are POMDPs consisting of multiple copies of the same MDP, where only the transition function changes. The only partial observation comes from not knowing in which copy we play; this is therefore also a restriction on the information loss. This class is incomparable to our classes of revealing POMDPs.

In a quantitative setting, the idea of having actions with some cost (time or energy) that reveal the current state or decrease the uncertainty has appeared multiple times in the literature. Such an idea already appeared in 2011~\cite{BG11} for POMDPs with quantitative reachability objectives. Recently, \emph{active-measuring POMDPs}, with a similar mechanism, have been considered in the online planning community~\cite{Bellinger.Coles.ea:2021,Krale.Simao.ea:2023}.
Despite a different setting (online planning vs.\ model-checking), it carries an intuition similar to our work: precise states can be known, which helps find good strategies.

Also in online planning, the article~\cite{Liu.Chung.ea:2022} considers a subclass of POMDPs restricting information loss that make \emph{reinforcement learning} sample efficient.

\section{Perspectives} \label{sec:perspectives}
We presented classes of POMDPs for which many natural objectives become decidable, and showed that these lie close to undecidability frontiers (priorities $\set{0, 1, 2}$ vs.\ $\set{1, 2, 3}$, POMDPs vs.\ games).

Due to their intrinsic undecidability, POMDPs are not often studied through the prism of exact algorithms.
We believe there is a lot to gain by understanding more closely $(i)$ the \emph{structural properties} of POMDPs that make them decidable for classes of objectives (such as weak/strong revelations), and $(ii)$ the conditions that make \emph{simple strategies} (such as belief-support-based strategies) sufficient.
Our article is a new step towards these goals.

On a more specific note, an interesting step for $(i)$ could involve framing the exact complexity of the existence of strategies for simple objectives involving beliefs and belief supports, as was started in the proof of Theorem~\ref{thm:complexityWeak}.

\section*{Acknowledgments}
This work was partially supported by the SAIF project, funded by the ``France 2030'' government investment plan managed by the French National Research Agency, under the reference ANR-23-PEIA-0006.
Pierre~Vandenhove was funded by ANR project G4S (ANR-21-CE48-0010-01).
This work was sparked by discussions with Guillaume Vigeral and Bruno Ziliotto, following a talk on a related model~\cite{VZ22}.

\bibliography{revealing_pomdps}

\begin{thebibliography}{39}
\providecommand{\natexlab}[1]{#1}

\bibitem[{Abdulla, Ben~Henda, and Mayr(2007)}]{Abdulla.Ben-Henda.ea:2007}
Abdulla, P.~A.; Ben~Henda, N.; and Mayr, R. 2007.
\newblock Decisive {M}arkov Chains.
\newblock \emph{Log. Methods Comput. Sci.}, 3(4).

\bibitem[{Andriushchenko et~al.(2022)Andriushchenko, Ceska, Junges, and
  Katoen}]{ACJK22}
Andriushchenko, R.; Ceska, M.; Junges, S.; and Katoen, J. 2022.
\newblock Inductive synthesis of finite-state controllers for {POMDP}s.
\newblock In Cussens, J.; and Zhang, K., eds., \emph{Uncertainty in Artificial
  Intelligence, Proceedings of the Thirty-Eighth Conference on Uncertainty in
  Artificial Intelligence, {UAI} 2022, 1-5 August 2022, Eindhoven, The
  Netherlands}, volume 180 of \emph{Proceedings of Machine Learning Research},
  85--95. {PMLR}.

\bibitem[{{\AA}str{\"o}m({1965})}]{Astrom:1965}
{\AA}str{\"o}m, K.~J. {1965}.
\newblock {Optimal Control of Markov Processes with Incomplete State
  Information I}.
\newblock \emph{{Journal of Mathematical Analysis and Applications}}, {10}:
  {174--205}.

\bibitem[{Baier, Bertrand, and Gr{\"{o}}{\ss}er(2008)}]{Baier.Bertrand.ea:2008}
Baier, C.; Bertrand, N.; and Gr{\"{o}}{\ss}er, M. 2008.
\newblock On Decision Problems for Probabilistic {B}{\"{u}}chi Automata.
\newblock In Amadio, R.~M., ed., \emph{Foundations of Software Science and
  Computational Structures, {FoSSaCS}}, volume 4962 of \emph{Lecture Notes in
  Computer Science}, 287--301. Springer.

\bibitem[{Baier, Gr{\"{o}}{\ss}er, and Bertrand(2012)}]{Baier.Groer.ea:2012}
Baier, C.; Gr{\"{o}}{\ss}er, M.; and Bertrand, N. 2012.
\newblock Probabilistic {\(\omega\)}-automata.
\newblock \emph{J. {ACM}}, 59(1): 1:1--1:52.

\bibitem[{Baier and Katoen(2008)}]{Baier.Katoen:2008}
Baier, C.; and Katoen, J. 2008.
\newblock \emph{Principles of model checking}.
\newblock {MIT} Press.
\newblock ISBN 978-0-262-02649-9.

\bibitem[{Bellinger et~al.(2021)Bellinger, Coles, Crowley, and
  Tamblyn}]{Bellinger.Coles.ea:2021}
Bellinger, C.; Coles, R.; Crowley, M.; and Tamblyn, I. 2021.
\newblock Active Measure Reinforcement Learning for Observation Cost
  Minimization.
\newblock In Antonie, L.; and Zadeh, P.~M., eds., \emph{Canadian Conference on
  Artificial Intelligence}. Canadian Artificial Intelligence Association.

\bibitem[{Belly et~al.(2025)Belly, Fijalkow, Gimbert, Horn, Pérez, and
  Vandenhove}]{aaaiPaper}
Belly, M.; Fijalkow, N.; Gimbert, H.; Horn, F.; Pérez, G.~A.; and Vandenhove,
  P. 2025.
\newblock Revelations: {A} Decidable Class of {POMDP}s with Omega-Regular
  Objectives.
\newblock In \emph{Proceedings of the 39th Annual AAAI Conference on Artificial
  Intelligence, AAAI 2025 (Accepted)}.

\bibitem[{Bertrand et~al.(2020)Bertrand, Bouyer, Brihaye, and
  Fournier}]{Bertrand.Bouyer.ea:2020}
Bertrand, N.; Bouyer, P.; Brihaye, T.; and Fournier, P. 2020.
\newblock Taming denumerable {M}arkov decision processes with decisiveness.
\newblock \emph{CoRR}, abs/2008.10426.

\bibitem[{Bertrand and Genest(2011)}]{BG11}
Bertrand, N.; and Genest, B. 2011.
\newblock Minimal Disclosure in Partially Observable {M}arkov Decision
  Processes.
\newblock In Chakraborty, S.; and Kumar, A., eds., \emph{{IARCS} Annual
  Conference on Foundations of Software Technology and Theoretical Computer
  Science, {FSTTCS} 2011, December 12--14, 2011, Mumbai, India}, volume~13 of
  \emph{LIPIcs}, 411--422. Schloss Dagstuhl -- Leibniz-Zentrum f{\"{u}}r
  Informatik.

\bibitem[{Bertrand, Genest, and Gimbert(2017)}]{Bertrand.Genest.ea:2017}
Bertrand, N.; Genest, B.; and Gimbert, H. 2017.
\newblock Qualitative Determinacy and Decidability of Stochastic Games with
  Signals.
\newblock \emph{J. {ACM}}, 64(5): 33:1--33:48.

\bibitem[{Berwanger and Mathew(2017)}]{Berwanger.Mathew:2017}
Berwanger, D.; and Mathew, A.~B. 2017.
\newblock Infinite games with finite knowledge gaps.
\newblock \emph{Information and Computation}, 254: 217--237.

\bibitem[{Bloem, Chatterjee, and Jobstmann(2018)}]{Bloem.Chatterjee.ea:2018}
Bloem, R.; Chatterjee, K.; and Jobstmann, B. 2018.
\newblock Graph Games and Reactive Synthesis.
\newblock In Clarke, E.~M.; Henzinger, T.~A.; Veith, H.; and Bloem, R., eds.,
  \emph{Handbook of Model Checking}, 921--962. Springer.

\bibitem[{Cassandra, Kaelbling, and Littman(1994)}]{tiger94}
Cassandra, A.~R.; Kaelbling, L.~P.; and Littman, M.~L. 1994.
\newblock Acting Optimally in Partially Observable Stochastic Domains.
\newblock In Hayes{-}Roth, B.; and Korf, R.~E., eds., \emph{Proceedings of the
  12th National Conference on Artificial Intelligence, Seattle, WA, USA, July
  31 - August 4, 1994, Volume 2}, 1023--1028. {AAAI} Press / The {MIT} Press.

\bibitem[{Chatterjee, Chmelik, and Tracol(2016)}]{Chatterjee.Chmelik.ea:2016}
Chatterjee, K.; Chmelik, M.; and Tracol, M. 2016.
\newblock What is decidable about partially observable {M}arkov decision
  processes with {\(\omega\)}-regular objectives.
\newblock \emph{Journal of Computer and System Sciences}, 82(5): 878--911.

\bibitem[{Chatterjee et~al.(2010)Chatterjee, Doyen, Gimbert, and
  Henzinger}]{Chatterjee.Doyen.ea:2010}
Chatterjee, K.; Doyen, L.; Gimbert, H.; and Henzinger, T.~A. 2010.
\newblock Randomness for Free.
\newblock In Hlinen{\'{y}}, P.; and Kucera, A., eds., \emph{Mathematical
  Foundations of Computer Science 2010, 35th International Symposium, {MFCS}
  2010, Brno, Czech Republic, August 23--27, 2010. Proceedings}, volume 6281 of
  \emph{Lecture Notes in Computer Science}, 246--257. Springer.

\bibitem[{Chatterjee, Doyen, and Henzinger(2013)}]{Chatterjee.Doyen.ea:2013}
Chatterjee, K.; Doyen, L.; and Henzinger, T.~A. 2013.
\newblock A survey of partial-observation stochastic parity games.
\newblock \emph{Formal Methods Syst. Des.}, 43(2): 268--284.

\bibitem[{Chatterjee and Henzinger(2012)}]{Chatterjee.Henzinger:2012}
Chatterjee, K.; and Henzinger, T.~A. 2012.
\newblock A survey of stochastic {\(\omega\)}-regular games.
\newblock \emph{Journal of Computer and System Sciences}, 78(2): 394--413.

\bibitem[{de~Alfaro(1997)}]{Alfaro:1997}
de~Alfaro, L. 1997.
\newblock \emph{Formal verification of probabilistic systems}.
\newblock Ph.D. thesis, Stanford University, {USA}.

\bibitem[{Doyen(2023)}]{Doy23}
Doyen, L. 2023.
\newblock Stochastic Games with Synchronization Objectives.
\newblock \emph{J. {ACM}}, 70(3): 23:1--23:35.

\bibitem[{Doyen, Massart, and Shirmohammadi(2019)}]{DMS19}
Doyen, L.; Massart, T.; and Shirmohammadi, M. 2019.
\newblock The complexity of synchronizing {M}arkov decision processes.
\newblock \emph{J. Comput. Syst. Sci.}, 100: 96--129.

\bibitem[{Fijalkow(2017)}]{Fijalkow:2017}
Fijalkow, N. 2017.
\newblock Undecidability results for probabilistic automata.
\newblock \emph{{ACM} {SIGLOG} News}, 4(4): 10--17.

\bibitem[{Giacomo and Vardi(2013)}]{Giacomo.Vardi:2013}
Giacomo, G.~D.; and Vardi, M.~Y. 2013.
\newblock Linear Temporal Logic and Linear Dynamic Logic on Finite Traces.
\newblock In Rossi, F., ed., \emph{International Joint Conference on Artificial
  Intelligence}, IJCAI'13, 854--860. {IJCAI/AAAI}.

\bibitem[{Gimbert and Oualhadj(2010)}]{Gimbert.Oualhadj:2010}
Gimbert, H.; and Oualhadj, Y. 2010.
\newblock Probabilistic Automata on Finite Words: Decidable and Undecidable
  Problems.
\newblock In Abramsky, S.; Gavoille, C.; Kirchner, C.; auf~der Heide, F.~M.;
  and Spirakis, P.~G., eds., \emph{International Colloquium on Automata,
  Languages and Programming, {ICALP}}, volume 6199 of \emph{Lecture Notes in
  Computer Science}, 527--538. Springer.

\bibitem[{Hauskrecht(2000)}]{Hauskrecht:2000}
Hauskrecht, M. 2000.
\newblock Value-Function Approximations for Partially Observable Markov
  Decision Processes.
\newblock \emph{Journal of Artificial Intelligence Research}, 13: 33--94.

\bibitem[{Junges, Jansen, and Seshia(2021)}]{JJS20}
Junges, S.; Jansen, N.; and Seshia, S.~A. 2021.
\newblock Enforcing Almost-Sure Reachability in {POMDP}s.
\newblock In Silva, A.; and Leino, K. R.~M., eds., \emph{Computer Aided
  Verification -- 33rd International Conference, {CAV} 2021, Virtual Event,
  July 20-23, 2021, Proceedings, Part {II}}, volume 12760 of \emph{Lecture
  Notes in Computer Science}, 602--625. Springer.

\bibitem[{Kaelbling, Littman, and Cassandra(1998)}]{Kaelbling.Littman.ea:1998}
Kaelbling, L.~P.; Littman, M.~L.; and Cassandra, A.~R. 1998.
\newblock Planning and acting in partially observable stochastic domains.
\newblock \emph{Artificial Intelligence}, 101(1): 99--134.

\bibitem[{Klenke(2007)}]{Klenke:2007}
Klenke, A. 2007.
\newblock \emph{Probability Theory: A Comprehensive Course}.
\newblock Springer.

\bibitem[{Krale, Sim{\~{a}}o, and Jansen(2023)}]{Krale.Simao.ea:2023}
Krale, M.; Sim{\~{a}}o, T.~D.; and Jansen, N. 2023.
\newblock Act-Then-Measure: Reinforcement Learning for Partially Observable
  Environments with Active Measuring.
\newblock In Koenig, S.; Stern, R.; and Vallati, M., eds., \emph{Proceedings of
  the Thirty-Third International Conference on Automated Planning and
  Scheduling, Prague, Czech Republic, July 8--13, 2023}, 212--220. {AAAI}
  Press.

\bibitem[{Liu et~al.(2022)Liu, Chung, Szepesv{\'{a}}ri, and
  Jin}]{Liu.Chung.ea:2022}
Liu, Q.; Chung, A.; Szepesv{\'{a}}ri, C.; and Jin, C. 2022.
\newblock When Is Partially Observable Reinforcement Learning Not Scary?
\newblock In Loh, P.; and Raginsky, M., eds., \emph{Conference on Learning
  Theory, 2--5 July 2022, London, {UK}}, volume 178 of \emph{Proceedings of
  Machine Learning Research}, 5175--5220. {PMLR}.

\bibitem[{Lovejoy(1991)}]{Lovejoy:1991}
Lovejoy, W.~S. 1991.
\newblock Computationally Feasible Bounds for Partially Observed Markov
  Decision Processes.
\newblock \emph{Operations Research}, 39(1): 162--175.

\bibitem[{Madani, Hanks, and Condon(2003)}]{Madani.Hanks.ea:2003}
Madani, O.; Hanks, S.; and Condon, A. 2003.
\newblock On the undecidability of probabilistic planning and related
  stochastic optimization problems.
\newblock \emph{Artificial Intelligence}, 147(1-2): 5--34.

\bibitem[{Pnueli(1977)}]{Pnueli:1977}
Pnueli, A. 1977.
\newblock The temporal logic of programs.
\newblock In \emph{Symposium on Foundations of Computer Science}, SFCS'77.

\bibitem[{Rabin(1963)}]{Rabin:1963}
Rabin, M.~O. 1963.
\newblock Probabilistic Automata.
\newblock \emph{Inf. Control.}, 6(3): 230--245.

\bibitem[{Raffin et~al.(2021)Raffin, Hill, Gleave, Kanervisto, Ernestus, and
  Dormann}]{Raffin.Hill.ea:2021}
Raffin, A.; Hill, A.; Gleave, A.; Kanervisto, A.; Ernestus, M.; and Dormann, N.
  2021.
\newblock Stable-Baselines3: Reliable Reinforcement Learning Implementations.
\newblock \emph{Journal of Machine Learning Research}, 22(268): 1--8.

\bibitem[{Raskin and Sankur(2014)}]{Raskin.Sankur:2014}
Raskin, J.; and Sankur, O. 2014.
\newblock Multiple-Environment {M}arkov Decision Processes.
\newblock In Raman, V.; and Suresh, S.~P., eds., \emph{34th International
  Conference on Foundation of Software Technology and Theoretical Computer
  Science, {FSTTCS} 2014, December 15--17, 2014, New Delhi, India}, volume~29
  of \emph{LIPIcs}, 531--543. Schloss Dagstuhl -- Leibniz-Zentrum f{\"{u}}r
  Informatik.

\bibitem[{Silver and Veness(2010)}]{Silver.Veness:2010}
Silver, D.; and Veness, J. 2010.
\newblock Monte-Carlo Planning in Large POMDPs.
\newblock In \emph{Conference on Neural Information Processing Systems,
  {NIPS}}, 2164--2172. Curran Associates, Inc.

\bibitem[{Vigeral and Ziliotto(2022)}]{VZ22}
Vigeral, G.; and Ziliotto, B. 2022.
\newblock Zero-sum stochastic games with intermittent observation of the state.
\newblock Communication at the ``Current Trends in Graph and Stochastic Games''
  workshop (GAMENET'22).

\bibitem[{Vlassis, Littman, and Barber(2012)}]{VLB12}
Vlassis, N.; Littman, M.~L.; and Barber, D. 2012.
\newblock On the Computational Complexity of Stochastic Controller Optimization
  in {POMDP}s.
\newblock \emph{{ACM} Trans. Comput. Theory}, 4(4): 12:1--12:8.

\end{thebibliography}

\newpage
\appendix
\onecolumn
\large

\section{Revealing Tiger Environment} \label{app:tiger}
We provide the code to generate the \emph{strongly revealing tiger} POMDP, used in Figure~\ref{fig:drl-vs-ours} in the introduction, and described in Example~\ref{ex:revealingTiger} and Figure~\ref{fig:tiger}.

For concreteness, we provide the environment encoded in the
\emph{Cassandra} format for POMDPs.
States \texttt{tiger-left}, \texttt{tiger-right}, \texttt{dead}, and \texttt{done} correspond respectively to $\state_\mathsf{L}$, $\state_\mathsf{R}$, $\state_\bot$, and $\state_\top$.
Actions \texttt{listen}, \texttt{open-left}, and \texttt{open-right} correspond respectively to $\act_\mathsf{?}$, $\act_\mathsf{L}$, and $\act_\mathsf{R}$.
Observations \texttt{maybe-left} and \texttt{maybe-right} correspond to signals $\sig_\mathsf{L}$ and $\sig_\mathsf{R}$, \texttt{defo-left} and \texttt{defo-right} correspond to $\sig_\mathsf{L!}$ and $\sig_\mathsf{R!}$, and \texttt{dead-obs} and \texttt{done-obs} correspond to $\sig_\bot$ and $\sig_\top$.
As a basis for comparison, the original
(discounted) tiger environment is given here:
\url{https://www.pomdp.org/examples/tiger.aaai.POMDP}.

\begin{minted}{cfs}
states: tiger-left tiger-right dead done
actions: listen open-left open-right
observations: maybe-left maybe-right defo-left defo-right dead-obs done-obs
start include: tiger-left tiger-right

T:listen
identity

T:open-left
0.00 0.00 1.00 0.00
0.00 0.00 0.00 1.00
0.00 0.00 1.00 0.00
0.00 0.00 0.00 1.00

T:open-right
0.00 0.00 0.00 1.00
0.00 0.00 1.00 0.00
0.00 0.00 1.00 0.00
0.00 0.00 0.00 1.00

O:listen
0.80 0.15 0.05 0.00 0.00 0.00
0.15 0.80 0.00 0.05 0.00 0.00
0.00 0.00 0.00 0.00 1.00 0.00
0.00 0.00 0.00 0.00 0.00 1.00

O:open-left
0.00 0.00 0.00 0.00 1.00 0.00
0.00 0.00 0.00 0.00 0.00 1.00
0.00 0.00 0.00 0.00 1.00 0.00
0.00 0.00 0.00 0.00 0.00 1.00

O:open-right
0.00 0.00 0.00 0.00 0.00 1.00
0.00 0.00 0.00 0.00 1.00 0.00
0.00 0.00 0.00 0.00 1.00 0.00
0.00 0.00 0.00 0.00 0.00 1.00
\end{minted}

\newpage
\section{Additional preliminaries: End components in MDPs} \label{app:endComponents}
In proofs, we will use extensively the notion of \emph{end components} for various MDPs derived from POMDPs. End components were introduced in~\cite{Alfaro:1997} and have been widely used in the literature on MDPs; see the book~\cite[Section~10.6.3]{Baier.Katoen:2008} for a more thorough introduction.

Let $\mdp = \mdpFull$ be an MDP.
For a pair $\ec = \ecFull$ with $\ecStates \subseteq \states$ and $\ecActions\colon \ecStates\to 2^\actions$, we define the graph \emph{induced by $\ecFull$} as the graph $(\ecStates, \edges)$, where $\edges$ is the set of edges $(\state, \state')\in\ecStates\times\ecStates$ for which there is $\act\in\ecActions(\state)$ such that $\transitions(\state, \act)(\state') > 0$.
An \emph{end component} of $\mdp$ is a tuple $\ec = \ecFull$, with $\ecStates \subseteq \states$ and $\ecActions\colon \ecStates\to 2^\actions$, such that for all $\state\in\ecStates$ and $\act\in\ecActions(\state)$, $\supp{\transitions(\state, \act)} \subseteq \ecStates$, and such that the graph induced by $\ecFull$ is strongly connected.

For $\play \in (\actions\cdot\states)^\omega$ a play of $\mdp$, we write $\mathsf{inf}(\play)$ for the pair $(\ecStates, \ecActions)$ such that $\ecStates$ is the set of states that $\play$ visits infinitely often and such that for $\state\in\ecStates$, $\ecActions(\state)$ is the set of actions played infinitely often when in $\state$ through~$\play$.
The main result we need about end components is the following.

\begin{theorem}[Fundamental theorem of end components~\cite{Alfaro:1997}] \label{thm:ec}
	Let $\mdp = \mdpFull$ be an MDP.
	\begin{itemize}
		\item If $\ec = \ecFull$ is an end component of $\mdp$ and $\state\in\ecStates$, there is a strategy $\strat$ such that $\prob{\strat}{\mdp^\state}{\mathsf{inf}(\play) = \ec} = 1$.
		\item For all strategies $\strat$, $\prob{\strat}{\mdp}{\mathsf{inf}(\play)\ \text{is an end component}} = 1$.
	\end{itemize}
\end{theorem}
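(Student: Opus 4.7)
I would prove the two items separately.

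For item~(1), fix an end component $\ec = \ecFull$ and a starting state $\state \in \ecStates$. Enumerate the finitely many pairs $(r_1, a_1), \dots, (r_N, a_N)$ with $r_i \in \ecStates$ and $a_i \in \ecActions(r_i)$. The plan is to design a strategy $\strat$ that proceeds in round-robin phases, targeting $(r_i, a_i)$ at phase $i$. Because the graph induced by $\ecFull$ is strongly connected, from any state of $\ecStates$ one can reach any target $r_i$ within $\card{\ecStates}$ steps using only actions in $\ecActions$, hence with probability at least $\eta := \leastProb_\mdp^{\card{\ecStates}} > 0$; end-component closure then ensures that the play never leaves $\ecStates$. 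In phase $i$, $\strat$ plays such a reaching strategy toward $r_i$, plays $a_i$ on arrival, and advances to phase $i+1$ (cycling mod~$N$). A conditional Borel--Cantelli argument (each phase succeeds with probability $\ge \eta$ given the past) then shows that almost surely every pair $(r_i, a_i)$ is visited infinitely often, giving $\mathsf{inf}(\play) = \ec$ almost surely.

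For item~(2), fix an arbitrary strategy $\strat$ and let $(\ecStates, \ecActions) := \mathsf{inf}(\play)$. There are only finitely many possible values $(R, A)$, so by a union bound it suffices to show that, conditional on $\mathsf{inf}(\play) = (R, A)$ (when this has positive probability), the pair $(R, A)$ satisfies the two end-component conditions almost surely. For \emph{closure}, suppose $r \in R$, $a \in A$, and $\transitions(r, a)(r') > 0$. Each time $(r, a)$ is played, the next state is $r'$ with conditional probability at least $\leastProb_\mdp$, so L\'evy's conditional Borel--Cantelli lemma forces $r'$ to be visited infinitely often on the event that $(r, a)$ is played infinitely often; hence $r' \in R$. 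For \emph{strong connectivity}, after some almost-surely finite time $T$ the play visits no state outside $R$ and takes no action outside $A$; since each state of $R$ is visited infinitely often past~$T$, for any $r, r' \in R$ the finite segment from a visit to $r$ past $T$ to the next visit to $r'$ exhibits a path in the induced graph.

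\emph{Main obstacle.} The technical crux, in both directions, is the use of conditional Borel--Cantelli: for item~(2) in particular, the conditioning event $\{\mathsf{inf}(\play) = (R, A)\}$ is a tail event, and one must verify that the per-step lower bound $\leastProb_\mdp$ on transition probabilities survives the conditioning. The cleanest route is via L\'evy's extension, which conditions on the natural filtration rather than on a tail event, and then translates the pointwise bounds into a divergent sum of conditional probabilities; alternatively, one can work past the almost-surely finite time beyond which the trajectory is confined to $(R, A)$ and reduce to a standard computation on the induced sub-Markov chain.
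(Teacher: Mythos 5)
The paper does not prove this statement at all: it is quoted as the classical ``fundamental theorem of end components'' with a citation to \cite{Alfaro:1997}, so there is no internal proof to compare against; your argument should be judged on its own, and it is correct — essentially the textbook proof. For the first item, your round-robin construction works: since the strategy only ever uses actions from $\ecActions$, closure confines the play to $\ecStates$ no matter how an attempt fails, each attempt at the current target succeeds with probability at least $\leastProb_\mdp^{\card{\ecStates}}$ given the past, and conditional Borel--Cantelli then gives that every pair $(r,a)$ with $r\in\ecStates$, $a\in\ecActions(r)$ occurs infinitely often; together with the confinement to $\ecStates$ and $\ecActions$ this yields $\mathsf{inf}(\play)=\ec$ almost surely. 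Do pin down the phase rule (retry until the target is reached, then play $a_i$ and advance), as your wording leaves it ambiguous whether a failed attempt advances the phase — though either reading goes through. A marginally slicker standard route for this item is the memoryless strategy that randomizes uniformly over $\ecActions(\state)$ at each $\state\in\ecStates$: closure keeps the play in $\ecStates$, and strong connectedness makes the induced finite Markov chain irreducible, so every state is recurrent and every enabled action is taken infinitely often; your phase construction buys nothing extra here but is perfectly sound. For the second item your treatment is the right one: closure follows from L\'evy's extension of Borel--Cantelli applied, for each of the finitely many triples $(r,a,r')$ with $\transitions(r,a)(r')>0$, to the events ``$(r,a)$ is taken at step $n$'' and ``the successor at step $n$ is $r'$'', which — as you note — sidesteps any conditioning on the tail event $\{\mathsf{inf}(\play)=(R,A)\}$, since it exhibits the bad set as a finite union of null events; strong connectivity then follows by reading a path of the induced graph directly off the trajectory after the almost-surely finite time at which the play is confined to $(R,A)$, exactly as you describe.
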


\section{Properties of the belief-support MDP (appendix to Section~\ref{sec:beliefSupportMDP})} \label{app:beliefSupportMDP}

In this section, we provide full statements and proofs for claims in Section~\ref{sec:beliefSupportMDP}.

Observe that there is a syntactic difference in what strategies of a POMDP and strategies in the belief-support MDP can observe: strategies in the belief-support MDP can see the belief supports, but do not know through which signal they were generated (there may be multiple such signals).
Strategies in a POMDP are therefore syntactically more general: they can observe the precise sequence of signals, and thereby compute the belief supports visited.
Despite this difference between the power of strategies in both models, both models behave similarly for multiple simple objectives.

First, there is a correspondence between the sequences of belief supports reached positively in both models.

\begin{lemma} \label{lem:bijectionStrategies}
	Let $\pomdp = \pomdpFull$ be a POMDP and $\beliefMDP$ be its belief-support MDP.
	\begin{itemize}
		\item For all strategies~$\strat\in\strats{\pomdp}$, there is a strategy $\strat_\beliefUpd\in\strats{\beliefMDP}$ such that, for all sequences of belief supports $B\in (\powerSetNonEmpty{\states})^*$,
		\[
		\prob{\strat}{\pomdp}{\Cyl(B)} > 0 \Longleftrightarrow \prob{\strat_\beliefUpd}{\beliefMDP}{\Cyl(B)} > 0.
		\]

		\item For all strategies~$\strat_\beliefUpd\in\strats{\beliefMDP}$, for all sequences of belief supports $B\in (\powerSetNonEmpty{\states})^*$,
		\[
		\prob{\liftStrat}{\pomdp}{\Cyl(B)} > 0 \Longleftrightarrow \prob{\strat_\beliefUpd}{\beliefMDP}{\Cyl(B)} > 0.
		\]
	\end{itemize}
\end{lemma}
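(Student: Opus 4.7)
The plan is to prove both items by induction on the length of $B$, leaning on a single recurring equivalence: $\transitionsMDP(\beliefSupp, \act)(\beliefSupp') > 0$ iff there is $\sig\in\signals$ with $\beliefUpd(\beliefSupp, \act, \sig) = \beliefSupp'$ iff there exist states $\state\in\beliefSupp$, $\state'\in\beliefSupp'$ and a signal~$\sig$ with $\transitions(\state, \act)(\sig, \state') > 0$. The base case (the belief-support history reduced to $\set{\initState}$) is immediate in both models, since both measures assign probability~$1$ to the initial cylinder.

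For item~1, I would build $\strat_\beliefUpd$ from $\strat$ as follows: on each belief-support history $B$ such that some POMDP observable history $\sigHist$ with $B_\sigHist = B$ has $\prob{\strat}{\pomdp}{\Cyl(\sigHist)} > 0$, let the support of $\strat_\beliefUpd(B)$ be the set of actions $\act$ for which some such witness $\sigHist$ satisfies $\strat(\sigHist)(\act) > 0$, distributing mass, e.g., uniformly; on all other belief-support histories, $\strat_\beliefUpd$ is set arbitrarily. The inductive step shows that extending a realizable $B$ by $\act\beliefSupp'$ keeps the cylinder positive in the POMDP iff one can find a witness signal $\sig$ such that $\beliefUpd(\last(B), \act, \sig) = \beliefSupp'$ together with an extended witness history, which by the recurring equivalence amounts exactly to $\strat_\beliefUpd(B)(\act) > 0$ and $\transitionsMDP(\last(B), \act)(\beliefSupp') > 0$.

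For item~2, the lifted strategy $\liftStrat(\sigHist) = \strat_\beliefUpd(B_\sigHist)$ assigns identical action distributions to any two POMDP observable histories that induce the same belief-support trace. Grouping POMDP cylinders by belief-support trace yields
\[
  \prob{\liftStrat}{\pomdp}{\Cyl(B)} = \sum_{\sigHist:\, B_\sigHist = B}\prob{\liftStrat}{\pomdp}{\Cyl(\sigHist)},
\]
and each summand factors into strategic choices (identical to those of $\strat_\beliefUpd$) and POMDP transition probabilities which instantiate the belief-support transitions. Induction on $|B|$, combined with the recurring equivalence, gives the biconditional for the positivity of this sum versus the positivity of $\prob{\strat_\beliefUpd}{\beliefMDP}{\Cyl(B)}$.

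The main obstacle is the asymmetry in item~1: a POMDP strategy $\strat$ can a priori distinguish observable histories that share the same belief-support trace (since it sees signals), whereas $\strat_\beliefUpd$ cannot. Hence the positive support of $\strat_\beliefUpd(B)$ must be defined to \emph{aggregate} over all signal-level witnesses of $B$, and the step ``every positive-probability extension in one model has a compatible witness extension in the other'' requires exhibiting a concrete triple $(\state, \sig, \state')$ that realizes the belief-support transition from some state of the current belief support that is itself reached with positive probability. Once this is made precise via the recurring equivalence, the inductive step is routine on both sides.
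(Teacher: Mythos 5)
Your proposal is correct and follows essentially the same route as the paper's proof: for item~1 you aggregate, over all positive-probability observable histories sharing a given belief-support trace, the actions played by $\strat$ and randomize uniformly over them (the paper's sets $\sigEvents_{\hist_\beliefUpd}$ and $A_{\hist_\beliefUpd}$), maintaining by induction on length the existence of a witness history, and for item~2 you exploit that $\liftStrat$ treats all histories with the same belief-support trace identically. The key subtlety you flag---exhibiting a state of the current support reached with positive probability together with a realizing signal---is exactly the step the paper's induction handles, so the argument goes through as planned.
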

\begin{proof}
	We write $\beliefMDP = \beliefMDPFull$.

	We show the first claim.
	Let $\strat\in\strats{\pomdp}$.
	We want to define a strategy $\strat_\beliefUpd\in\strats{\beliefMDP}$ that imitates what $\strat$ does, but such a strategy cannot observe the signals.
	Let $\hist_\beliefUpd = \act_1\beliefSupp_1\ldots\act_n\beliefSupp_n$ be a history of $\beliefMDP$.
	We define $\sigEvents_{\hist_\beliefUpd}$ as the set $\{\hist\in(\actions\cdot\signals)^* \mid \text{$\hist$ is consistent with $\strat$ and}\ B_\hist = \hist_\beliefUpd\}$.

	We define $\strat_\beliefUpd$ inductively on the length of histories, and show at the same time that $\sigEvents_{\hist_\beliefUpd}$ is non-empty for all histories $\hist_\beliefUpd$ consistent with $\strat_\beliefUpd$.
	After $0$ step, there is a single possible empty history $\hist_\beliefUpd$ consistent with $\strat_\beliefUpd$.
	The set $\sigEvents_{\hist_\beliefUpd}$ contains only the empty history and is indeed non-empty.

	Now, let $\hist_\beliefUpd = \act_1\beliefSupp_1\ldots\act_n\beliefSupp_n$ be a history of length~$n$ consistent with $\strat_\beliefUpd$.
	By induction hypothesis, $\sigEvents_{\hist_\beliefUpd}$ is non-empty.
	We use this fact to define $\strat_\beliefUpd(\hist_\beliefUpd)$.
	Let $A_{\hist_\beliefUpd} = \{\act\in\actions \mid \exists\hist \in \sigEvents_{\hist_\beliefUpd}, \strat(\hist)(\act) > 0\}$, which is also non-empty.
	We define $\strat_\beliefUpd(\hist_\beliefUpd)$ to randomize over all actions in $A_{\hist_\beliefUpd}$.
	Let $\act_{n+1}\in A_{\hist_\beliefUpd}$ and $\beliefSupp_{n+1}$ be such that $\transitionsMDP(\beliefSupp_n, \act_{n+1})(\beliefSupp_{n+1}) > 0$.
	By induction hypothesis, there is $\hist\in \sigEvents_{\hist_\beliefUpd}$ of length $n$ consistent with $\strat$ such that $\strat(\hist)(\act_{n+1}) > 0$.
	Hence, by construction of the belief-support MDP, $\sigEvents_{\hist\act_{n+1}\beliefSupp_{n+1}}$ is non-empty.

	Let $B = \beliefSupp_1\ldots\beliefSupp_n\in (\powerSetNonEmpty{\states})^*$ be a sequence of belief supports.
	We now show that
	\[
	\prob{\strat}{\pomdp}{\Cyl(B)} > 0 \Longleftrightarrow \prob{\strat_\beliefUpd}{\beliefMDP}{\Cyl(B)} > 0.
	\]
	If $\prob{\strat_\beliefUpd}{\beliefMDP}{\Cyl(B)} > 0$, then there is $\hist_\beliefUpd$ consistent with $\strat_\beliefUpd$ such that $\hist_\beliefUpd = \act_1\beliefSupp_1\ldots\act_n\beliefSupp_n$ for some $\act_1,\ldots,\act_n\in\actions$.
	As $\sigEvents_{\hist_\beliefUpd}$ is non-empty, we also have $\prob{\strat}{\pomdp}{\Cyl(B)} > 0$.
	If~$\prob{\strat}{\pomdp}{\Cyl(B)} > 0$, then there is $\hist\in\sigEvents_{\hist_\beliefUpd}$ consistent with $\strat$ such that $B_\hist = \act_1\beliefSupp_1\ldots\act_n\beliefSupp_n$ for some $\act_1,\ldots,\act_n\in\actions$.
	One can show by induction on the length of $\hist$ that~$B_\hist$ is also consistent with $\strat_\beliefUpd$, so $\prob{\strat_\beliefUpd}{\beliefMDP}{\Cyl(B)} > 0$.

	We now show the second claim.
	Let $\strat_\beliefUpd\in\strats{\beliefMDP}$; consider $\liftStrat\in\strats{\pomdp}$ (which was defined in Section~\ref{sec:beliefSupportMDP}).
	Clearly, the tree of belief supports induced by both strategies contains the same nodes (although they are attained with possibly different probabilities).
	Again, this can be shown with a similar (and easier) induction on the length of histories.
	Hence, both strategies see the same cylinders of belief supports with positive probability.
\end{proof}

Second, sets of belief supports reached almost surely in the POMDP can also be reached almost surely in the belief-support MDP.

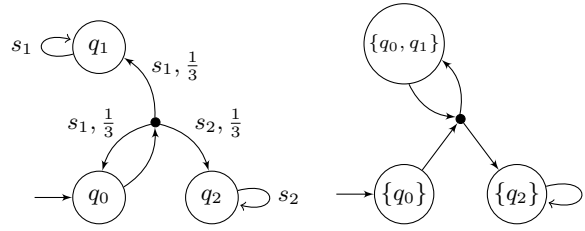
\begin{figure}
	\centering
	\begin{tikzpicture}[every node/.style={font=\small,inner sep=1pt}]
		\draw ($(0,0)$) node[rond] (q0) {$\state_0$};
		\draw ($(q0.west)-(0.5,0)$) edge[-latex'] (q0);
		\draw ($(q0)+(0.75,1)$) node[dot] (q0dot) {};
		\draw ($(q0)+(0,2)$) node[rond] (q1) {$\state_1$};
		\draw ($(q0)+(1.5,0)$) node[rond] (q2) {$\state_2$};

		\draw (q0) edge[-latex',bend right] (q0dot);
		\draw (q1) edge[-latex',loop left] node[left=2pt] {$\sig_1$} (q1);
		\draw (q0dot) edge[-latex',bend right] node[above left] {$\sig_1, \frac{1}{3}$} (q0);
		\draw (q0dot) edge[-latex',bend right] node[above right] {$\sig_1, \frac{1}{3}$} (q1);
		\draw (q0dot) edge[-latex',bend left] node[above right] {$\sig_2, \frac{1}{3}$} (q2);

		\draw (q2) edge[-latex',out=60,in=120,loop right] node[right=2pt] {$\sig_2$} (q2);
	\end{tikzpicture}\hspace{1em}
	\begin{tikzpicture}[every node/.style={font=\small,inner sep=1pt}]
		\draw ($(0,0)$) node[rond] (q0) {$\set{\state_0}$};
		\draw ($(q0.west)-(0.5,0)$) edge[-latex'] (q0);
		\draw ($(q0)+(0.75,1)$) node[dot] (q0dot) {};
		\draw ($(q0)+(0,2)$) node[rond,font=\scriptsize] (q1) {$\set{\state_0, \state_1}$};
		\draw ($(q0)+(1.5,0)$) node[rond] (q2) {$\set{\state_2}$};

		\draw (q0) edge[-latex'] (q0dot);
		\draw (q1) edge[-latex',bend right] (q0dot);
		\draw (q0dot) edge[-latex',bend right] (q1);
		\draw (q0dot) edge[-latex'] (q2);

		\draw (q2) edge[-latex',out=60,in=120,loop right] (q2);
	\end{tikzpicture}%
	\caption{POMDP with a single action (left) and its belief-support MDP (right) such that the belief support $\{\state_2\}$ is reached almost surely in the belief-support MDP, but not in the POMDP.}
	\label{fig:ASnotPreserved}
\end{figure}

\begin{lemma} \label{lem:bijectionStrategiesAS}
	Let $\pomdp = \pomdpFull$ be a POMDP and $\beliefMDP$ be its belief-support MDP.
	Let~$B\subseteq\powerSetNonEmpty{\states}$ be a set of belief supports.

	\begin{itemize}
		\item If there is $\strat\in\strats{\pomdp}$ such that $\prob{\strat}{\pomdp}{\Reach(B)} = 1$, then there is $\strat_\beliefUpd\in\strats{\beliefMDP}$ such that $\prob{\strat_\beliefUpd}{\beliefMDP}{\Reach(B)} = 1$.

		\item If for all $\strat\in\strats{\pomdp}$ it holds that $\prob{\strat}{\pomdp}{\Reach(B)} = 1$, then for all $\strat_\beliefUpd\in\strats{\beliefMDP}$ we have $\prob{\strat_\beliefUpd}{\beliefMDP}{\Reach(B)} = 1$.
	\end{itemize}
\end{lemma}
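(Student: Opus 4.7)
My plan is to reduce both bullets to classical finite-MDP almost-sure reachability theory applied to $\beliefMDP$, combined with the positive-probability cylinder correspondence of Lemma~\ref{lem:bijectionStrategies}. The key fact I would rely on: the almost-sure winning region $W \subseteq \powerSetNonEmpty{\states}$ of $\beliefMDP$ for $\Reach(B)$ is the largest set $S$ containing $B$ such that every $\beliefSupp \in S \setminus B$ admits some action $\act$ with $\supp{\transitionsMDP(\beliefSupp, \act)} \subseteq S$, and from every $\beliefSupp \in S$ one can reach $B$ using only these ``$S$-preserving'' actions; moreover, a pure memoryless strategy witnesses $\Reach(B)$ almost surely from every state of $W$.

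For the first bullet, take $R \subseteq \powerSetNonEmpty{\states}$ to be the finite set of belief supports attained by some positive-probability observable history consistent with $\strat$; note that $\{\initState\} \in R$. I would verify that $R$ has both properties above (with the target restricted to $B \cap R$). For $S$-preservation: given $\beliefSupp \in R \setminus B$ witnessed by $\hist$ with $B_\hist$ ending in $\beliefSupp$, fix any $\act$ with $\strat(\hist)(\act) > 0$; for every feasible signal $\sig$, the extension $\hist\act\sig$ has positive probability under $\strat$, so $\beliefUpd(\beliefSupp, \act, \sig) \in R$. Since feasibility of $\sig$ in the POMDP depends only on $\beliefSupp$, this captures all of $\supp{\transitionsMDP(\beliefSupp, \act)}$, which is thus $\subseteq R$. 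For $B$-reachability: conditioning on reaching $\hist$, $\strat$ still a.s.\ reaches $B$ (otherwise the unconditional probability would drop below 1), so some positive-probability continuation of $\hist$ visits $B$; its belief-support projection lies entirely in $R$ and uses only actions of the form just described. Therefore $R \subseteq W$ and $\{\initState\} \in W$, yielding the required $\strat_\beliefUpd$ (even pure memoryless).

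For the second bullet, I would argue contrapositively: given $\strat_\beliefUpd \in \strats{\beliefMDP}$ with $\prob{\strat_\beliefUpd}{\beliefMDP}{\Reach(B)} < 1$, I construct a POMDP strategy failing $\Reach(B)$ almost surely. By Theorem~\ref{thm:ec}, some end component $\ec = (\ecStates, \ecActions)$ of $\beliefMDP$ with $\ecStates \cap B = \emptyset$ is reached with positive probability under $\strat_\beliefUpd$ along a $B$-avoiding path. Fix a positive-probability $B$-avoiding history $\hist_\beliefUpd$ ending in some $\beliefSupp \in \ecStates$, and define $\strat_\beliefUpd^+$ to mimic $\strat_\beliefUpd$ until $\hist_\beliefUpd$ and thereafter play only actions in $\ecActions(\cdot)$ at states of $\ecStates$; the end-component definition forces $\supp{\transitionsMDP(\beliefSupp', \act)} \subseteq \ecStates$ for each $\act \in \ecActions(\beliefSupp')$, so $\strat_\beliefUpd^+$ stays in $\ec$ forever after $\hist_\beliefUpd$ and hence fails $\Reach(B)$ almost surely. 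Now lift $\strat_\beliefUpd^+$ to $\widehat{\strat_\beliefUpd^+} \in \strats{\pomdp}$: by Lemma~\ref{lem:bijectionStrategies}, the history $\hist_\beliefUpd$ retains positive probability in the POMDP, and after $\hist_\beliefUpd$ the $\ecActions$-actions are equally $\ec$-preserving in the POMDP, since the set of feasible signals from $(\beliefSupp', \act)$ depends only on $\beliefSupp'$. Hence the POMDP trajectory under $\widehat{\strat_\beliefUpd^+}$ avoids $B$ forever with positive probability, contradicting the premise.

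The main obstacle I anticipate is the last transfer step---formally translating the combinatorial end-component definition in $\beliefMDP$ back into concrete POMDP dynamics. It rests on the observation that in the POMDP the set of feasible signals from a state with belief support $\beliefSupp'$ under action $\act$ depends only on $\beliefSupp'$ (every element of $\beliefSupp'$ carries positive mass in any belief supported there), so the set of successor belief supports in the POMDP coincides exactly with $\supp{\transitionsMDP(\beliefSupp', \act)}$ used in the end-component definition.
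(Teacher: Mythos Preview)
Your proposal is correct on both items.

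For the second item your argument is essentially the paper's: both identify, via Theorem~\ref{thm:ec}, a $B$-disjoint end component $\ec$ of $\beliefMDP$ that $\strat_\beliefUpd$ enters with positive probability along a $B$-avoiding prefix, and transfer the failure of $\Reach(B)$ back to $\pomdp$ through the lifted strategy using Lemma~\ref{lem:bijectionStrategies}. You are in fact slightly more careful than the paper: the paper applies Lemma~\ref{lem:bijectionStrategies} directly to the infinite event ``reach $\ec$ avoiding $B$ and then never leave $\ec$'', whereas you first pass to a modified strategy $\strat_\beliefUpd^+$ that \emph{surely} stays in $\ec$ after the finite prefix $\hist_\beliefUpd$, so that only a finite cylinder needs to be transferred and the ``stay in $\ec$'' part follows deterministically from the end-component closure (together with your observation that the set of feasible signals in $\pomdp$ depends only on the current belief support and action).

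For the first item your route genuinely differs from the paper's. The paper is constructive: it observes that from every belief support $\beliefSupp$ in $B_\strat$ (the belief supports positively reached by $\strat$ before hitting $B$), Lemma~\ref{lem:bijectionStrategies} yields a $\beliefMDP$-strategy that positively reaches $B$ within a uniform bound $k$ while staying in $B_\strat$; it then assembles an almost-sure $\beliefMDP$-strategy by replaying these $k$-step strategies in rounds, each round giving a uniformly lower-bounded chance of success. You instead invoke the standard fixpoint characterisation of the almost-sure reachability region in finite MDPs and verify that your set $R$ (which is the paper's $B_\strat \cup (B \cap R)$) satisfies its two defining properties. Your approach is shorter and more conceptual, at the price of importing that characterisation as a black box; the paper's approach is self-contained and exhibits the winning strategy explicitly.
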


The converses of both claims of Lemma~\ref{lem:bijectionStrategiesAS} are false in general; see Figure~\ref{fig:ASnotPreserved}.
The proof uses the notion of \emph{end components}, which is discussed in Appendix~\ref{app:endComponents}.
\begin{proof}
	We first prove the first item.
	Let $\strat\in\strats{\pomdp}$ such that $\prob{\strat}{\pomdp}{\Reach(B)} = 1$.
	Without loss of generality, we modify $\strat$ and $\pomdp$ such that $\strat$ moves to a sink state $\bot$ with a new action $\act_\bot$ and a new signal $\sig_\bot$ whenever a belief support in $B$ is seen.
	Let $B_\strat = \{\beliefSupp\in\powerSetNonEmpty{\states} \mid \beliefSupp \neq \{\bot\}\ \text{and}\ \prob{\strat}{\pomdp}{\Reach(\beliefSupp)} > 0\}$.
	Any belief support $\beliefSupp\in B_\strat$ is such that there is an almost-sure strategy $\strat^\beliefSupp$ from $\beliefSupp$ for $\Reach(B)$ that only visits belief supports in $B_\strat$ (simply take $\strat^\beliefSupp$ to be a continuation of $\strat$ after $\beliefSupp$ is visited).
	We build a strategy $\strat_\beliefUpd\in\strats{\beliefMDP}$ such that $\prob{\strat_\beliefUpd}{\beliefMDP}{\Reach(B)} = 1$.

	By Lemma~\ref{lem:bijectionStrategies}, for all $\beliefSupp\in B_\strat$, there is a strategy $\strat_\beliefUpd^\beliefSupp\in\strats{\beliefMDP}$ that reaches with a positive probability exactly the same belief supports as $\strat^\beliefSupp$.
	In particular, $\prob{\strat_\beliefUpd^\beliefSupp}{\beliefMDP^\beliefSupp}{\Reach(\beliefSupp')} > 0$ implies that $\beliefSupp'\in B_\strat$.
	Moreover, $\prob{\strat_\beliefUpd^\beliefSupp}{\beliefMDP^\beliefSupp}{\Reach(B)} > 0$.
	By the continuity of probabilities and the finiteness of $B_\strat$, there is a uniform $k\in\IN$ such that $\prob{\strat_\beliefUpd^\beliefSupp}{\beliefMDP^\beliefSupp}{\Reach^{\le k}(B)} > 0$ for all~$\beliefSupp$.
	Moreover, as $B_\strat$ is finite, there is $\alpha > 0$ such that $\prob{\strat_\beliefUpd^\beliefSupp}{\beliefMDP^\beliefSupp}{\Reach^{\le k}(B)} \ge \alpha$ for all~$\beliefSupp\in B_\strat$.

	We define the strategy $\strat_\beliefUpd$ as follows: we start by following
	$\strat^{\{\initState\}}_\beliefUpd$ for $k$ steps, which gives a probability $\ge \alpha$ to reach~$B$.
	If $B$ is not reached within $k$ steps, we know that the current belief support $\beliefSupp$ is still in $B_\strat$.
	We then move on to the strategy $\strat^{\beliefSupp}_\beliefUpd$ for $k$ steps and iterate this process.
	Strategy $\strat_\beliefUpd$ has a lower-bounded probability to reach $B$ infinitely many times, and therefore reaches $B$ almost surely.

	We now show the contrapositive of the second item.
	Assume that there exists $\strat_\beliefUpd\in\strats{\beliefMDP}$ such that $\prob{\strat_\beliefUpd}{\beliefMDP}{\Reach(B)} < 1$.
	By Theorem~\ref{thm:ec}, it implies that there is an end component $\ec = \ecFull$ of $\beliefMDP$ such that $\ecStates \cap B = \emptyset$ and $\prob{\strat_\beliefUpd}{\beliefMDP}{\play \notin \Reach(B)\ \text{and}\ \mathsf{inf}(\play) = \ec} > 0$.
	By Lemma~\ref{lem:bijectionStrategies}, the strategy $\liftStrat$ therefore also has a non-zero probability to reach a belief support in $\ec$ without reaching $B$ and to then never visit a belief outside of $\ec$.
	Hence, $\prob{\liftStrat}{\pomdp}{\Reach(B)} < 1$.
\end{proof}

\section{Probabilistic bounds on reachability objectives} \label{app:boundsReachability}
We prove technical lemmas about reachability properties that we will use multiple times in the paper: roughly, if a set of states (or a set of belief supports) is reached with a positive probability by \emph{all} strategies, we can upper bound the number of steps before reaching it and lower bound the probability to reach it.

We will use the result that pure strategies suffice in POMDPs for all objectives, both for almost-sure strategies and for attaining values, which we recall from~\cite{Chatterjee.Doyen.ea:2010}.
\begin{theorem}[{\citet[Theorem~5]{Chatterjee.Doyen.ea:2010}}] \label{thm:pureSuffice}
	Let $\pomdp = \pomdpFull$ be a POMDP and $\objective \subseteq \states^\omega$ be an objective.
	We have that $\sup_{\strat\in\strats{\pomdp}} \prob{\strat}{\pomdp}{\objective} = \sup_{\strat\in\stratsPure{\pomdp}} \prob{\strat}{\pomdp}{\objective}$.
	Moreover, if there exists $\strat\in\strats{\pomdp}$ such that $\prob{\strat}{\pomdp}{\objective} = \sup_{\strat\in\strats{\pomdp}} \prob{\strat}{\pomdp}{\objective}$, then there exists $\strat'\in\stratsPure{\pomdp}$ such that $\prob{\strat'}{\pomdp}{\objective} = \sup_{\strat\in\strats{\pomdp}} \prob{\strat}{\pomdp}{\objective}$.
\end{theorem}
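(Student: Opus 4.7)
The plan is to prove both parts simultaneously via a single derandomization-by-averaging argument: realize any (possibly randomized) strategy as a probability distribution over pure strategies, and conclude by a convexity/Fubini-style identity.

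First I would set up the distribution over pure strategies. Given $\strat\in\strats{\pomdp}$, for each observable history $\sigHist\in(\actions\cdot\signals)^*$ the strategy specifies $\strat(\sigHist)\in\dist{\actions}$. Consider the product space $\Omega = \actions^{(\actions\cdot\signals)^*}$ (whose elements are precisely the pure strategies) equipped with the product measure $\mu = \bigotimes_{\sigHist} \strat(\sigHist)$. Each draw $\omega\sim\mu$ yields a pure strategy by independently choosing, at every observable history, an action according to the randomization prescribed by $\strat$.

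Second, I would establish the coupling identity
\[
\prob{\strat}{\pomdp}{\objective} \;=\; \int_{\Omega} \prob{\omega}{\pomdp}{\objective}\, d\mu(\omega).
\]
This is proved by checking it on cylinders $\Cyl(\hist)$ for $\hist=\state_0\act_1\sig_1\state_1\ldots\act_n\sig_n\state_n$: on the right, by independence of the product measure and Fubini, the integral factors into the same product of strategy probabilities $\strat(\obs(\hist_{<i}))(\act_i)$ and transition probabilities $\transitions(\state_{i-1},\act_i)(\sig_i,\state_i)$ that defines $\prob{\strat}{\pomdp}{\Cyl(\hist)}$ by its inductive definition in Section~\ref{sec:preliminaries}. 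By Ionescu-Tulcea, agreement on cylinders extends to agreement on all Borel sets, in particular on (the lift of) $\objective$. Measurability of $\omega\mapsto\prob{\omega}{\pomdp}{\objective}$ in $\mu$ follows from the same cylinder-by-cylinder analysis and a standard monotone class argument.

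Third, the two claims are immediate consequences. Since $\prob{\omega}{\pomdp}{\objective}\le \sup_{\strat'\in\stratsPure{\pomdp}}\prob{\strat'}{\pomdp}{\objective}$ for every $\omega\in\Omega$, the coupling identity gives $\prob{\strat}{\pomdp}{\objective}\le \sup_{\stratsPure{\pomdp}}$, and taking the supremum over $\strat\in\strats{\pomdp}$ yields the first claim (the reverse inequality being trivial since $\stratsPure{\pomdp}\subseteq\strats{\pomdp}$). For the second claim, assume the common supremum $v$ is attained by some $\strat\in\strats{\pomdp}$. Then $\int \prob{\omega}{\pomdp}{\objective}\,d\mu = v$ while pointwise $\prob{\omega}{\pomdp}{\objective}\le v$, so the integrand equals $v$ for $\mu$-almost every $\omega$. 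In particular the support of $\mu$ contains a pure strategy attaining~$v$.

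The main obstacle is the measure-theoretic bookkeeping in the second step: the space of pure strategies $\Omega$ is uncountable when $(\actions\cdot\signals)^*$ is infinite, so one must verify that the product measure $\mu$ is well-defined, that the map $\omega\mapsto\prob{\omega}{\pomdp}{\objective}$ is $\mu$-measurable, and that Fubini applies when matching the two ways of generating a play (sample $\omega$ then roll the POMDP, versus jointly sample actions and signals under $\strat$). Once this is set up carefully, both conclusions drop out of a one-line convexity observation.
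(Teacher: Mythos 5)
Your argument is correct, but note that the paper does not prove this statement at all: it is imported verbatim as Theorem~5 of \citet{Chatterjee.Doyen.ea:2010} (``randomness for free'' for one-player partial-observation games), so there is no internal proof to compare against. What you give is a self-contained Kuhn-style derandomization that essentially reproves the cited result for POMDPs: since the agent has perfect recall (strategies see the whole observable history), any behavioral strategy $\strat$ is realization-equivalent to the product mixture $\mu=\bigotimes_{\hist}\strat(\hist)$ over pure strategies, and the identity $\prob{\strat}{\pomdp}{\objective}=\int_\Omega\prob{\omega}{\pomdp}{\objective}\,d\mu(\omega)$ reduces both claims to a one-line convexity observation (for the second claim, the integrand is bounded above by the common value $v$ and integrates to $v$, so it equals $v$ on a $\mu$-measure-one, hence non-empty, set of pure strategies). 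The cylinder computation is sound because the observable prefixes $\obs(\hist_{<i})$ of a single history are pairwise distinct coordinates of $\Omega$, so the product measure factorizes exactly into the inductive definition of $\prob{\strat}{\pomdp}{\Cyl(\hist)}$, and the measurability of $\omega\mapsto\prob{\omega}{\pomdp}{A}$ follows by the Dynkin-system argument you sketch. Two cosmetic points: the step extending agreement from cylinders to all Borel sets is the uniqueness theorem for measures agreeing on a generating $\pi$-system (Ionescu-Tulcea is about constructing the measures, not about uniqueness of the extension), and ``the support of $\mu$ contains a pure strategy attaining $v$'' should simply read that the set $\{\omega:\prob{\omega}{\pomdp}{\objective}=v\}$ has $\mu$-measure~$1$ and is therefore non-empty. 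What your route buys is independence from the external reference and transparency of where perfect recall is used; what the paper's route buys is brevity and a statement already established in the literature in the generality needed (arbitrary measurable objectives).
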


Before proving Lemma~\ref{lem:pomdpReach} about POMDPs, we prove a similar (and easier) result about MDPs.

\begin{lemma} \label{lem:mdpReach}
	Let $\mdp = \mdpFull$ be an MDP and $\reach\subseteq \states$.
	Assume that for all $\strat\in\strats{\mdp}$, $\prob{\strat}{\mdp}{\Reach(\reach)} > 0$.
	Then, for all $\strat\in\strats{\mdp}$, $\prob{\strat}{\mdp}{\Reach^{\le \card{\states}}(\reach)} \ge \leastProb^{\card{\states}}_\mdp$.
\end{lemma}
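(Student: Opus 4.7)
My plan is to use a standard attractor construction, working purely with the graph structure of the MDP. Define the non-decreasing sequence $W_0 = \reach$ and
\[
W_{k+1} = W_k \cup \{\state \in \states \mid \forall \act \in \actions,\ \supp{\transitions(\state, \act)} \cap W_k \neq \emptyset\}.
\]
Since this is a monotone sequence in the finite universe $\states$, it stabilizes at some $W^* = W_{k^*}$ with $k^* \le \card{\states}$.

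The central step is the identification
\[
W^* = \{\state \in \states \mid \forall \strat \in \strats{\mdp^\state},\ \prob{\strat}{\mdp^\state}{\Reach(\reach)} > 0\}.
\]
For the inclusion $\subseteq$, one argues by induction on the index $k$ such that $\state \in W_k$: any action chosen by the strategy has a successor in $W_{k-1}$, and the induction hypothesis gives positive probability from there. For the inclusion $\supseteq$, I argue by contrapositive. If $\state \notin W^*$, then by the fixed-point equation $W^* = W^* \cup \{\ldots\}$, there must exist an action $\act_\state$ such that $\supp{\transitions(\state, \act_\state)} \cap W^* = \emptyset$ (otherwise $\state$ would be added at the next iteration). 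Using such actions defines a memoryless strategy $\strat^*$ that, starting from any state in $\states \setminus W^*$, stays in $\states \setminus W^* \supseteq \states \setminus \reach$ forever, contradicting the positive-reachability property. The hypothesis of the lemma then forces $\initState \in W^*$, and hence $\initState \in W_{k^*}$ with $k^* \le \card{\states}$.

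Finally, I strengthen ``positive probability'' to the uniform lower bound $\leastProb_\mdp^{\card{\states}}$ by induction on $k$: for every $\state \in W_k$ and every $\strat \in \strats{\mdp^\state}$, $\prob{\strat}{\mdp^\state}{\Reach^{\le k}(\reach)} \ge \leastProb_\mdp^k$. The base case $k=0$ is trivial. For the inductive step, by definition of $W_k$, whatever action the strategy plays at $\state$ admits some successor in $W_{k-1}$, and by the definition of $\leastProb_\mdp$ that transition has probability at least $\leastProb_\mdp$; the induction hypothesis then contributes a factor $\leastProb_\mdp^{k-1}$ from that successor. Applying this to $\initState \in W_{k^*}$ with $k^* \le \card{\states}$, together with $\leastProb_\mdp \le 1$, yields $\prob{\strat}{\mdp}{\Reach^{\le \card{\states}}(\reach)} \ge \leastProb_\mdp^{k^*} \ge \leastProb_\mdp^{\card{\states}}$, as required.

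The only genuinely delicate part is the escape-strategy construction in the $\supseteq$ direction of the key equality; the other steps are essentially bookkeeping on top of the classical attractor idea.
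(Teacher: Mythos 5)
Your proof is correct, but it takes a genuinely different route from the paper's. The paper turns the MDP into a two-player deterministic reachability game in which an adversary resolves the probabilistic choices, argues by determinacy that the adversary wins $\Reach(\reach)$, invokes the attractor decomposition to get a forcing strategy that reaches $\reach$ within $\card{\states}$ steps without repeating states, and then converts this path into the probability bound $\leastProb_\mdp^{\card{\states}}$ for pure strategies, finally appealing to the fact that pure strategies suffice (Theorem~\ref{thm:pureSuffice}) to cover randomized ones. You instead compute the corresponding attractor fixed point $W_0 \subseteq W_1 \subseteq \cdots$ directly inside the MDP (it is exactly the adversary's attractor in the paper's game) and run a quantitative induction showing $\prob{\strat}{\mdp^\state}{\Reach^{\le k}(\reach)} \ge \leastProb_\mdp^k$ for all $\state \in W_k$ and \emph{all} strategies $\strat$, which handles randomized and history-dependent strategies in one stroke and so avoids both the determinacy argument and the pure-strategy reduction; the escape strategy for $\state \notin W^*$ plays the role of the paper's \Pone safety strategy. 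What the paper's route buys is reuse: the same game construction reappears in Lemma~\ref{lem:safetyBeliefSupports}, whereas your argument is more elementary and self-contained. Two small points to fix: the inclusion in your escape-strategy step should read $\states \setminus W^* \subseteq \states \setminus \reach$ (what you need is that staying outside $W^*$ forces staying outside $\reach$, since $\reach = W_0 \subseteq W^*$), and in the inductive step you should note separately the trivial case $\state \in \reach$ (and, for $\state \in W_k \setminus \reach$, that every action has a successor in $W_{k-1}$ because $\state \in W_j \setminus W_{j-1}$ for some $j \le k$ and $W_{j-1} \subseteq W_{k-1}$); neither affects the validity of the argument.
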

\begin{proof}
	We consider a deterministic variant of the MDP $\mdp$ where an antagonistic opponent resolves the stochastic transitions, by choosing among any transition that had a non-zero probability in the original MDP.
	Formally, we consider the two-player deterministic game $\game = (\states, \states \times \actions, \edges, \initState)$ where the set of edges $\edges = \set{(\state, (\state, \act))\mid\state\in\states, \act\in\actions} \cup \set{((\state, \act), \state') \mid \transitions(\state, \act)(\state') > 0}$: \Pone chooses the action in states of $\states$, and \Ptwo chooses the transitions in states of $\states\times\actions$.

	\Pone has no winning strategy for $\Safety(\reach)$ in this deterministic zero-sum game.
	We show the contrapositive: assume that \Pone has a winning strategy for $\Safety(\reach)$ in $\game$.
	Then, this strategy would be surely winning in $\game$, and thus with probability~$1$, in $\mdp$ for $\Safety(\reach)$.
	So there exists a strategy $\strat\in\strats{\mdp}$ such that $\prob{\strat}{\mdp}{\Reach(\reach)} = 0$.
	This contradicts the hypothesis.

	Therefore, \Pone has no winning strategy for $\Safety(\reach)$ in $\game$.
	As reachability games are determined, this means that \Ptwo has a strategy that ensures $\Reach(\reach)$.
	Classical results on reachability games (using the \emph{attractor decomposition}~\cite{Bloem.Chatterjee.ea:2018}) imply that \Ptwo even has such a strategy ensuring that no state is visited twice along the path before reaching a state in $\reach$.
	Hence, \Ptwo has a strategy ensuring that \Pone plays at most $\card{\states}$ times before visiting $\reach$.

	This means that, in the original MDP, no matter the strategy, there is a path from $\initState$ to~$\reach$ of length at most $\card{\states}$.
	For pure strategies, there are at most $\card{\states}$ probabilistic transitions along this path, so such a path has probability at least~$\leastProb_\mdp^{\card{\states}}$.
	As pure strategies suffice for reachability objectives in MDPs, this result extends to all strategies.
\end{proof}

We can show such a result for POMDPs, both for sets of states and sets of belief supports.
To make sense of this statement, we extend reachability objectives to deal with (sets of) belief supports.
For $B\subseteq \powerSetNonEmpty{\states}$, let $\sigEvents_B \subseteq \signals^*$ be the set of observable histories $\act_1\sig_1\ldots\act_n\sig_n$ such that $\beliefUpd^*(\set{\initState}, \act_1\sig_1\ldots\act_n\sig_n) \in B$.
We define $\Reach(B)$ (resp.\ $\Buchi(B)$) to be the set of plays inducing a belief support in $B$ at least once (resp.\ infinitely often).
For $\beliefSupp\in\powerSetNonEmpty{\states}$, we write $\Reach(\beliefSupp)$ and $\Buchi(\beliefSupp)$ for $\Reach(\set{\beliefSupp})$ and $\Buchi(\set{\beliefSupp})$.
Note that the proof of the following Lemma~\ref{lem:pomdpReach} uses properties of the belief-support MDP $\beliefMDP$ proved in Appendix~\ref{app:beliefSupportMDP}.

\begin{lemma} \label{lem:pomdpReach}
	Let $\pomdp = \pomdpFull$ be a POMDP.
	\begin{itemize}
		\item Let $B\subseteq \powerSetNonEmpty{\states}$ be a set of belief supports.
		If for all $\strat\in\strats{\pomdp}$, $\prob{\strat}{\pomdp}{\Reach(B)} > 0$,
		then for all $\strat\in\strats{\pomdp}$, $\prob{\strat}{\pomdp}{\Reach^{\le {2^{\card{\states}}-1}}(B)} \ge \leastProb^{2^{\card{\states}}-1}_\pomdp$.
		\item Let $\reach\subseteq \states$ be a set of states.
		If for all $\strat\in\strats{\pomdp}$, $\prob{\strat}{\pomdp}{\Reach(\reach)} > 0$,
		then for all $\strat\in\strats{\pomdp}$, $\prob{\strat}{\pomdp}{\Reach^{\le {2^{\card{\states}}-1}}(\reach)} \ge \leastProb^{2^{\card{\states}}-1}_\pomdp$.
	\end{itemize}
\end{lemma}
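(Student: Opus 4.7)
The plan is to mimic the proof of Lemma~\ref{lem:mdpReach}, but played on the space of belief supports rather than on MDP states, and to extract the probability estimate from the POMDP directly rather than from the belief-support MDP.

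For Part~1, I would introduce a deterministic two-player reachability game $\game_B$ whose positions are $\powerSetNonEmpty{\states}$: from $\beliefSupp$, \Pone picks an action $\act\in\actions$ and then \Ptwo picks a signal $\sig$ with $\beliefUpd(\beliefSupp,\act,\sig)\neq\emptyset$, leading to the new position $\beliefUpd(\beliefSupp,\act,\sig)$. The hypothesis rules out any \Pone winning strategy for $\Safety(\powerSetNonEmpty{\states}\setminus B)$ from $\{\initState\}$: from such a safety strategy one builds a pure observation-based POMDP strategy (recompute the belief support from the observable history and play the prescribed \Pone action) that keeps the belief support out of $B$ along every play, contradicting $\prob{\strat}{\pomdp}{\Reach(B)}>0$. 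Determinacy of finite reachability games then yields a \Ptwo strategy $\tau$ forcing $\Reach(B)$ within at most $|\powerSetNonEmpty{\states}|=2^{\card{\states}}-1$ rounds from $\{\initState\}$.

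Now fix a pure POMDP strategy $\strat$ and simulate $\strat$ (as \Pone) against $\tau$: this produces a specific observable prefix $\act_1\sig_1\ldots\act_n\sig_n$ with $n\le 2^{\card{\states}}-1$ and $\beliefUpdStar(\{\initState\},\act_1\sig_1\ldots\act_n\sig_n)\in B$. Since this belief support is non-empty, there is a state trajectory $\initState=\state_0,\state_1,\ldots,\state_n$ with $\transitions(\state_{i-1},\act_i)(\sig_i,\state_i)>0$ for every $i$; each such transition probability is at least $\leastProb_\pomdp$, and as $\strat$ is pure, the history $\hist=\initState\act_1\sig_1\state_1\ldots\act_n\sig_n\state_n$ satisfies $\prob{\strat}{\pomdp}{\Cyl(\hist)}\ge\leastProb_\pomdp^n$. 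Since $\Cyl(\hist)\subseteq \Reach^{\le n}(B)$, this gives the claimed bound for every pure $\strat$; the extension to randomized strategies follows from Theorem~\ref{thm:pureSuffice} applied to the complement of $\Reach^{\le 2^{\card{\states}}-1}(B)$, which transfers the infimum over all strategies to the infimum over pure ones.

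For Part~2, I would reduce to Part~1 by taking $B_\reach=\{\beliefSupp\in\powerSetNonEmpty{\states}\mid \beliefSupp\cap\reach\neq\emptyset\}$: since any play visiting $\reach$ induces a belief support in $B_\reach$ at that step, the hypothesis transfers ($\prob{\strat}{\pomdp}{\Reach(B_\reach)}\ge\prob{\strat}{\pomdp}{\Reach(\reach)}>0$). Redoing the above argument, the observable prefix produced against $\tau$ now ends in a belief support meeting $\reach$, and the state trajectory witnessing non-emptiness can be chosen with $\state_n\in\reach$; the corresponding $\Cyl(\hist)$ then sits inside $\Reach^{\le n}(\reach)$ and the same probability estimate applies. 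The main subtlety in both parts, and what motivates the whole setup, is that applying Lemma~\ref{lem:mdpReach} directly to the belief-support MDP $\beliefMDP$ would only give a bound in terms of $\leastProb_{\beliefMDP}$, which has no clean relation to $\leastProb_\pomdp$; playing the game on belief supports while reading the probability off an explicit state-level POMDP trajectory is precisely what bridges this gap.
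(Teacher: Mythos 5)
Your proof is correct and follows essentially the same route as the paper: what you build by hand as a deterministic reachability game on belief supports, the paper obtains by composing Lemma~\ref{lem:bijectionStrategies} (transfer of positive reachability between $\pomdp$ and $\beliefMDP$) with Lemma~\ref{lem:mdpReach} (adversarial resolution of the probabilistic transitions plus the attractor bound), and then, exactly as you do, it reads the bound $\leastProb_\pomdp^{2^{\card{\states}}-1}$ off a witness history of length at most $2^{\card{\states}}-1$ in the POMDP itself and extends from pure to arbitrary strategies via Theorem~\ref{thm:pureSuffice}. Your closing remark about why the estimate must be computed at the POMDP level rather than through $\leastProb_{\beliefMDP}$ is precisely why the paper only uses the qualitative (positive-probability, bounded-horizon) conclusion of Lemma~\ref{lem:mdpReach} before recomputing the probability in $\pomdp$.
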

\begin{proof}
	We first prove the claim about belief supports.
	Let $B\subseteq \powerSetNonEmpty{\states}$.
	Assume that for all $\strat\in\strats{\pomdp}$, $\prob{\strat}{\pomdp}{\Reach(B)} > 0$.
	Then, by Lemma~\ref{lem:bijectionStrategies}, for all $\strat_\beliefUpd\in\strats{\beliefMDP}$, $\prob{\strat_\beliefUpd}{\beliefMDP}{\Reach(B)} > 0$.
	As we now work with an MDP with $\card{\powerSetNonEmpty{\states}} = 2^{\card{\states}}-1$ states, we can use Lemma~\ref{lem:mdpReach} and obtain that for all $\strat_\beliefUpd\in\strats{\beliefMDP}$, $\prob{\strat_\beliefUpd}{\beliefMDP}{\Reach^{\le {2^{\card{\states}}-1}}(B)} > 0$.
	Going back to $\pomdp$ with Lemma~\ref{lem:bijectionStrategies}, we have that for all $\strat\in\strats{\pomdp}$, $\prob{\strat}{\pomdp}{\Reach^{\le {2^{\card{\states}}-1}}(B)} > 0$.
	Hence, for all strategies, there is a path of length at most $2^{\card{\states}}-1$ reaching $B$.
	For pure strategies, such a path has probability at least $\leastProb_\pomdp^{{2^{\card{\states}}-1}}$ to happen, as there are at most $2^{\card{\states}}-1$ random transitions along this path.
	Hence, for all \emph{pure} strategies $\strat\in\stratsPure{\pomdp}$, $\prob{\strat}{\pomdp}{\Reach^{\le {2^{\card{\states}}-1}}(B)} \ge \leastProb_\pomdp^{{2^{\card{\states}}-1}}$.
	By Theorem~\ref{thm:pureSuffice}, we deduce that for all (even non-pure) strategies $\strat\in\strats{\pomdp}$, $\prob{\strat}{\pomdp}{\Reach^{\le {2^{\card{\states}}-1}}(B)} \ge \leastProb_\pomdp^{{2^{\card{\states}}-1}}$.

	For the second claim, let $\reach\subseteq \states$ be a set of states.
	Assume that for all $\strat\in\strats{\pomdp}$, $\prob{\strat}{\pomdp}{\Reach(\reach)} > 0$.
	Let $B_\reach = \{\beliefSupp\in\powerSetNonEmpty{\states} \mid \beliefSupp\cap\reach \neq \emptyset \}$ be the set of belief supports containing a state in $\reach$.
	As the event ``visiting $\reach$'' implies the event ``visiting a belief support in~$B_\reach$'', we also have that for all $\strat\in\strats{\pomdp}$, $\prob{\strat}{\pomdp}{\Reach(B_\reach)} > 0$.
	By the first property, for all strategies $\strat\in\strats{\pomdp}$, $\prob{\strat}{\pomdp}{\Reach^{\le {2^{\card{\states}}-1}}(B_\reach)} > 0$.
	This means once again that for all strategies, there is a path of length at most $2^{\card{\states}}-1$ that reaches $B_\reach$, so that reaches~$\reach$.
	We conclude in the same way as the first property:
	for all \emph{pure} strategies $\strat\in\stratsPure{\pomdp}$, $\prob{\strat}{\pomdp}{\Reach^{\le {2^{\card{\states}}-1}}(\reach)} \ge \leastProb_\pomdp^{{2^{\card{\states}}-1}}$; so, by Theorem~\ref{thm:pureSuffice}, for all strategies $\strat\in\strats{\pomdp}$, $\prob{\strat}{\pomdp}{\Reach^{\le {2^{\card{\states}}-1}}(\reach)} \ge \leastProb_\pomdp^{2^{\card{\states}}-1}$.
\end{proof}

As a corollary, we obtain that in POMDPs,
if a safety objective has value~$1$, then there is actually an almost-sure strategy for the safety objective.
This contrasts with reachability objectives, for which there are POMDPs that have value~$1$ but no almost-sure strategy.
For reachability, the computational complexity of the two problems are widely different: the existence of an almost-sure strategy is EXPTIME-complete~\cite{Baier.Bertrand.ea:2008}, but deciding value~$1$ is undecidable~\cite{Gimbert.Oualhadj:2010}.

\begin{corollary} \label{cor:value1Safety}
	Let $\pomdp = \pomdpFull$ be a POMDP and $\safe \subseteq \states$ be a set of states.
	We have that
	\begin{equation*}
		\exists \strat\in\strats{\pomdp}, \prob{\strat}{\pomdp}{\Safety(\safe)} = 1
		\Longleftrightarrow
		\sup_{\strat\in\strats{\pomdp}} \prob{\strat}{\pomdp}{\Safety(\safe)} = 1.
	\end{equation*}
\end{corollary}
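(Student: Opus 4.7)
The forward implication is immediate: if some $\strat$ satisfies $\prob{\strat}{\pomdp}{\Safety(\safe)} = 1$, then the supremum over all strategies is $1$. My plan is thus to focus on the reverse direction, which I would establish by contraposition. Assume that no strategy is almost sure for $\Safety(\safe)$; the goal is then to show that $\sup_{\strat} \prob{\strat}{\pomdp}{\Safety(\safe)} < 1$, and in fact strictly bounded away from~$1$ by an explicit constant.

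The first step would be a simple rewriting: "no almost-sure safety strategy" means that for every $\strat \in \strats{\pomdp}$, $\prob{\strat}{\pomdp}{\Safety(\safe)} < 1$, which, since $\Reach(\safe)$ and $\Safety(\safe)$ are complementary events on plays, is equivalent to $\prob{\strat}{\pomdp}{\Reach(\safe)} > 0$ for every $\strat$. This is precisely the hypothesis of the second item of Lemma~\ref{lem:pomdpReach} applied with $\reach = \safe$.

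The key step is then to invoke that lemma to obtain a uniform quantitative bound: for every $\strat \in \strats{\pomdp}$,
\[
	\prob{\strat}{\pomdp}{\Reach^{\le k}(\safe)} \ge \leastProb_\pomdp^{k},
\]
where $k = 2^{\card{\states}}-1$. Since $\Safety(\safe)$ is included in the complement of $\Reach^{\le k}(\safe)$, this gives $\prob{\strat}{\pomdp}{\Safety(\safe)} \le 1 - \leastProb_\pomdp^{k}$ for every $\strat$, so taking the supremum yields $\sup_{\strat} \prob{\strat}{\pomdp}{\Safety(\safe)} \le 1 - \leastProb_\pomdp^{k} < 1$, completing the contrapositive.

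I do not anticipate a real obstacle here, since Lemma~\ref{lem:pomdpReach} already packages the finitary quantitative statement in exactly the form needed; the proof essentially amounts to observing that the negation of almost-sure safety is the hypothesis of that lemma. The only conceptual point worth stressing is why the analogous argument fails for reachability, as mentioned before the corollary: "no almost-sure reachability strategy" would require a uniform lower bound on $\prob{\strat}{\pomdp}{\Safety(\reach)}$, which cannot be derived by a pigeonhole-style reasoning on the finitely many belief supports, and indeed does not hold in general.
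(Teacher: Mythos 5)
Your proposal is correct and follows essentially the same route as the paper: contraposition, rewriting the absence of an almost-sure safety strategy as $\prob{\strat}{\pomdp}{\Reach(\safe)} > 0$ for all $\strat$, and then invoking the second item of Lemma~\ref{lem:pomdpReach} to get the uniform bound $\leastProb_\pomdp^{2^{\card{\states}}-1}$, which bounds every safety probability away from~$1$. The paper's proof is identical in substance (it just names the uniform constant $\alpha$ rather than writing it explicitly), so there is nothing to add.
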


\begin{proof}
	The implication $\Longrightarrow$ is trivial; we focus on the other implication.
	We prove the contrapositive.
	Assume that there is no almost-sure strategy, i.e., that for all strategies $\strat$, we have $\prob{\strat}{\pomdp}{\Safety(\safe)} < 1$.
	In other words, this means that for all strategies $\strat\in\strats{\pomdp}$, we have $\prob{\strat}{\pomdp}{\Reach(\safe)} > 0$.
	By Lemma~\ref{lem:pomdpReach}, there is thus $\alpha > 0$ such that for all strategies $\strat\in\strats{\pomdp}$, we have $\prob{\strat}{\pomdp}{\Reach^{\le 2^{\card{\states}}-1}(\safe)} \ge \alpha$.
	As $\prob{\strat}{\pomdp}{\Reach(\safe)} \ge \prob{\strat}{\pomdp}{\Reach^{\le 2^{\card{\states}}-1}(\safe)}$, we have that for all strategies $\strat\in\strats{\pomdp}$, $\prob{\strat}{\pomdp}{\Reach(\safe)} \ge \alpha$.
	Hence, for all strategies $\strat\in\strats{\pomdp}$, $\prob{\strat}{\pomdp}{\Safety(\safe)} \le 1 - \alpha$, so $\sup_{\strat\in\strats{\pomdp}} \prob{\strat}{\pomdp}{\Safety(\safe)} < 1$.
\end{proof}

\section{Additional details for Section~\ref{sec:weaklyRevealing}~: (un)decidability of parity objectives in weakly revealing POMDPs}
\label{app:weaklyRevealing}

This section is devoted to the proofs of statements about weakly revealing POMDPs with parity objectives from Section~\ref{sec:weaklyRevealing}.
We restate and prove the soundness of the analysis of the belief-support MDP.

\soundness*
\begin{proof}
	As $\strat_\beliefUpd$ is pure and memoryless, we assume it is a function $\powerSetNonEmpty{\states} \to \actions$.
	Consider strategy $\liftStrat$: it is a pure strategy in $\pomdp$ with exponentially many memory states, as it simply looks at the current belief support and plays as $\strat_\beliefUpd$ would.
	By Lemma~\ref{lem:bijectionStrategies}, $\liftStrat$ and $\strat_\beliefUpd$ reach the same belief supports with a positive probability.

	We first prove a correspondence between the sets of belief supports reached almost surely by $\liftStrat$ and $\strat_\beliefUpd$, in the other direction than the one of Lemma~\ref{lem:bijectionStrategiesAS}.
	Using that $\pomdp$ is weakly revealing and that the strategies only use finite memory, we show that for all $B\subseteq \powerSetNonEmpty{\states}$,
	\begin{align} \label{eq:almostSureMemoryless}
		\prob{\strat_\beliefUpd}{\beliefMDP}{\Buchi(B)} = 1 \Longrightarrow \prob{\liftStrat}{\pomdp}{\Buchi(B)} = 1.
	\end{align}
	Let $B\subseteq \powerSetNonEmpty{\states}$ be such that $\prob{\strat_\beliefUpd}{\beliefMDP}{\Buchi(B)} = 1$.
	Let $\states_\infty = \{\state\in\states \mid \prob{\liftStrat}{\pomdp}{\Buchi(\{\state\})} > 0\}$.
	Due to $\pomdp$ being weakly revealing, $\states_\infty$ is non-empty and $\prob{\liftStrat}{\pomdp}{\Buchi(\states_\infty)} = 1$.
	Every $\{\state\}$ with $\state\in\states_\infty$ is reached positively by~$\strat_\beliefUpd$ (Lemma~\ref{lem:bijectionStrategies}), so $\liftStrat$ still reaches an element of $B$ with non-zero probability from these singleton beliefs.
	Note that whenever a $\{\state\}$ with $\state\in\states_\infty$ is reached by $\liftStrat$, the probability to reach $B$ afterwards is independent from the past: this is due to belief support $\{\state\}$ describing precisely the current belief (the probability to be in $\state$ is~$1$) and $\liftStrat$ being memoryless w.r.t.\ belief supports.
	Hence, infinitely many visits to singleton belief supports $\{\state\}$ with $\state\in\states_\infty$ are done by $\liftStrat$, and the probability to reach $B$ afterwards is lower-bounded by a uniform value.
	We conclude that $\prob{\strat}{\pomdp}{\Buchi(B)} = 1$.

	We now consider the set of end components $\ECs_{\strat_\beliefUpd} = \{U = (\ecStates_U, \ecActions_U) \mid \prob{\strat_\beliefUpd}{\beliefMDP}{\inf(\play) = U} > 0\}$ that strategy~$\strat_\beliefUpd$ can exactly end up in in $\beliefMDP$.
	Set $\ECs_{\strat_\beliefUpd}$ is non-empty by Theorem~\ref{thm:ec}.
	Each such end component $\ec\in\ECs_{\strat_\beliefUpd}$
	\begin{itemize}
		\item contains a belief singleton $\set{\state_U}$ for some $\state_U\in\states$: otherwise, using that $\pomdp$ is weakly revealing, this contradicts Lemma~\ref{lem:bijectionStrategiesAS};
		\item has an even maximal priority w.r.t.\ $\priMDP$, due to $\strat_\beliefUpd$ being almost sure for $\Parity(\priMDP)$ and $U$ being an end component of $\strat_\beliefUpd$ with positive probability.
	\end{itemize}
	The second property implies that each end component $U\in\ECs_{\strat_\beliefUpd}$ is such that $\max \set{\pri(\state) \mid \state\in\beliefSupp\in \ecStates_\ec}$ is even. Let $\state_U^{\max}$ be a state in some belief support of $\ec$ achieving this maximal priority in $U$.

	Let us consider the effect of the strategy $\liftStrat$ in $\pomdp$.
	First, observe that this strategy almost surely reaches a belief support $\set{\state}$ for some $U\in\ECs_{\strat_\beliefUpd}$ and $\state\in\ecStates_U$, which follows from~\eqref{eq:almostSureMemoryless}.

	We fix an end component $\ec = (\ecStates_\ec, \ecActions_\ec)\in\ECs_{\strat_\beliefUpd}$.
	Assume $\liftStrat$ has reached a belief support $\{\state\}$ with $\state\in\ecStates_\ec$.
	Using again~\eqref{eq:almostSureMemoryless}, we know that such singleton belief supports are almost surely visited infinitely often.
	Following strategy~$\liftStrat$, by definition of the belief-support MDP, there is a non-zero probability to reach~$\state_U^{\max}$ from every singleton belief support $\set{\state}$ with $\state\in\ecStates_\ec$.
	Due to $\liftStrat$ being memoryless w.r.t.\ belief supports, this probability is the same after each visit to such a $\set{\state}$.
	Hence, state~$\state_U^{\max}$ has infinitely often a lower-bounded probability to be visited.
	As these probabilities are independent,~$\state_U^{\max}$ is visited infinitely often.
	To conclude, observe that once a belief support in $U$ is reached, $\liftStrat$ only visits states in the belief supports of~$U$ and visits $\state_U^{\max}$ infinitely often.
	The strategy $\liftStrat$ is therefore almost sure for $\Parity(\pri)$ in~$\pomdp$.
\end{proof}

We now show an example illustrating that revelations may take exponentially many steps to occur with positive probability in weakly revealing POMDPs.
More generally, this example also shows that bounds for belief supports in Lemma~\ref{lem:pomdpReach} are tight, even for weakly revealing POMDPs.

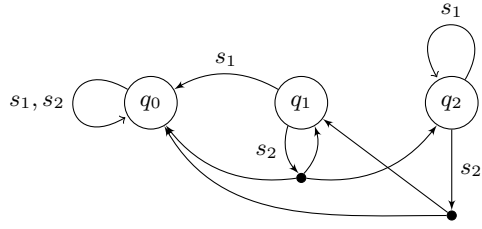
\begin{figure}[t]
	\centering
	\begin{tikzpicture}[every node/.style={font=\small,inner sep=1pt}]
		\draw ($(0,0)$) node[rond] (q0) {$\state_0$};
		\draw ($(q0)+(2,0)$) node[rond] (q1) {$\state_1$};
		\draw ($(q1)+(0,-1)$) node[dot] (q1s2) {};
		\draw ($(q1)+(2,0)$) node[rond] (q2) {$\state_2$};
		\draw ($(q2)+(0,-1.5)$) node[dot] (q2s2) {};

		\draw (q0) edge[-latex',out=150,in=210,loop] node[left=2pt] {$\sig_1, \sig_2$} (q0);
		\draw (q1) edge[-latex',out=150,in=30] node[above=2pt] {$\sig_1$} (q0);
		\draw (q1) edge[-latex',bend right] node[left=2pt] {$\sig_2$} (q1s2);
		\draw (q1s2) edge[-latex',bend left] (q0);
		\draw (q1s2) edge[-latex',bend right] (q1);
		\draw (q1s2) edge[-latex',bend right] (q2);
		\draw (q2) edge[-latex',out=60,in=120,loop] node[above=2pt] {$\sig_1$} (q2);
		\draw (q2) edge[-latex'] node[right=2pt] {$\sig_2$} (q2s2);
		\draw (q2s2) edge[-latex',out=180,in=-60] (q0);
		\draw (q2s2) edge[-latex'] (q1);
	\end{tikzpicture}%
	\begin{tikzpicture}[every node/.style={font=\small,inner sep=1pt}]

	\end{tikzpicture}
	\caption{The POMDP from \Cref{ex:expLowerBound} (with $n = 2$): a partially observable Markov chain which is weakly revealing, and such that a revelation takes at best exponentially many steps from the initial state $\state_\mathsf{init}$.
		The initial state $\state_\mathsf{init}$ is not represented, it simply moves to any other state in a non-observable way.
		Precise probabilities are omitted.
		One transition of each type appears: when the signal has a lower index than the state (e.g., $\sig_1$ from $\state_2$), the probability mass stays in the state; when it matches the state (e.g., $\sig_2$ from $\state_2$), the probability mass is spread over the smaller states; when it is greater (e.g., $\sig_2$ from $\state_1$), the probability mass is spread over all states.
		The shortest revealing sequence from $\state_\mathsf{init}$ is $\set{\state_\mathsf{init}} \to \set{\state_0, \state_1, \state_2} \xrightarrow{\sig_1} \set{\state_0, \state_2} \xrightarrow{\sig_2} \set{\state_0, \state_1} \xrightarrow{\sig_1} \set{\state_0}$.}
	\label{fig:expLowerBound}
\end{figure}

\begin{example} \label{ex:expLowerBound}
	Let $\pomdp = (\states, \actions, \signals, \transitions, \state_\mathsf{init})$ be a POMDP with $\states = \{\state_\mathsf{init}, \state_0, \ldots, \state_n\}$, $\actions = \{a\}$, and $\signals = \{\sig_1, \ldots, \sig_n\}$.
	We depict the construction for $n = 2$ in Figure~\ref{fig:expLowerBound}.
	As $\card{\actions} = 1$, it is actually a \emph{partially observable Markov chain}.
	The transition function is defined as follows:
	\begin{itemize}
		\item $\transitions(\state_\mathsf{init}, a)(\sig, \state_i) > 0$ for all $\sig\in\signals$, $i\in\{0, \ldots, n\}$,
		\item $\transitions(\state_i, a)(\sig_j, \state_k) > 0$ if and only if ($i = k = 0$) or ($i > j$ and $i = k$) or ($i = j$ and $i > k$) or ($i < j$).
	\end{itemize}

	Here is how a play happens: first, the probability mass is spread from $\state_\mathsf{init}$ to all the other states.
	The state $\state_0$ is absorbing: there will always be some probability mass in state $\state_0$, so a revelation can only happen in this state.
	Here is how the other states with index $i\in\set{1, \ldots, n}$ behave:
	\begin{itemize}
		\item if a signal $\sig_j$ with index $j < i$ is seen, then any probability mass already in $\state_i$ simply remains in $\state_i$;
		\item if the signal $\sig_i$ is seen, then any probability mass in $\state_i$ is spread out over the states $\state_k$ with $k < i$.
		\item if a signal $\sig_j$ with index $j > i$ is seen, then any probability mass in $\state_i$ is spread out over all states in $\set{\state_0, \ldots, \state_n}$.
	\end{itemize}

	To reason on this POMDP, we associate every belief support with a number that we read in binary on the belief support: we assume that a belief support $\beliefSupp\in\powerSetNonEmpty{\states}$ is associated with number $f(\beliefSupp) = \sum_{\state_i\in\beliefSupp, i\ge 1} 2^{i-1}$ (in particular, $f(\set{\state_0}) = 0$).
	The initial belief (after one step) is number $2^n - 1 = f(\set{\state_0, \ldots, \state_n})$.
	Let us study how the number $f(\beliefSupp)$ evolves given the signals seen:
	\begin{itemize}
		\item for $j = \min \set{i\ge 1\mid \state_i\in\beliefSupp}$, we have $f(\beliefUpd(\beliefSupp, \act\sig_j)) = f(\beliefSupp) - 1$: indeed, state $\state_j$ in the belief support becomes $\{\state_0, \ldots, \state_{j-1}\}$, while other states remain the same;
		\item for $j < \min \set{i\ge 1\mid \state_i\in\beliefSupp}$, we have $f(\beliefUpd(\beliefSupp, \act\sig_j)) = f(\beliefSupp)$: the belief support remains exactly the same;
		\item for $j > \min \set{i\ge 1\mid \state_i\in\beliefSupp}$, we have $f(\beliefUpd(\beliefSupp, \act\sig_j)) = 2^n - 1$, as a state $\state_i$ with $i < j$ is such that $\beliefUpd(\set{\state_i}, \act\sig_j) = \set{\state_0, \ldots, \state_n}$.
	\end{itemize}

	Hence, the shortest path towards a revelation from belief support $\set{\state_0, \ldots, \state_n}$ has length $2^n - 1$.

	As mentioned in Remark~\ref{rem:memoryLowerBound}, one can show an exponential lower bound on the memory that strategies need to play almost surely in weakly revealing POMDPs.
	To do so, we slightly modify the POMDP $\pomdp$ used above.
	We assume all states have priority~$1$, and add two sink states $\state_\top$ and $\state_\bot$ with priorities respectively $0$ and $1$.
	Add a second action $\choice$ such that $\choice$ goes to $\state_\top$ from state $\state_0$ and to $\state_\bot$ from any other state.
	There is an almost-sure strategy for the CoB\"uchi objective: play action $a$ until it is certain that the current state is $\state_0$, and then play $\choice$.
	This strategy has an exponential size, and by the above analysis, any smaller strategy cannot ensure that $\state_\top$ is reached almost surely.
\end{example}

We note that the soundness of the belief-support MDP holds for the CoB\"uchi objective for general POMDPs, without any revealing hypothesis.

\begin{lemma} \label{lem:soundForCoBuchi}
	Let $\pomdp = \pomdpFull$ be a POMDP with a CoB\"uchi objective specified by priority function $\pri\colon \states\to\set{0,1}$, and let $\beliefMDP$ be its belief-support MDP with priority function~$\priMDP$.
	If there is an almost-sure strategy for $\Parity(\priMDP)$ in $\beliefMDP$, then there is an almost-sure strategy for $\Parity(\pri)$ in $\pomdp$.
\end{lemma}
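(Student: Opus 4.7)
The plan is to use the natural lifting $\liftStrat$ as the almost-sure strategy in $\pomdp$ and to exploit a key observation specific to the CoB\"uchi setting: a belief support $\beliefSupp$ with $\priMDP(\beliefSupp) = 0$ has, by the max semantics, \emph{all} its states of priority~$0$. So as soon as the belief support becomes priority-$0$, the actual state visited is priority-$0$ as well. It thus suffices to show that, under $\liftStrat$, almost surely only priority-$0$ belief supports are visited from some point onwards.

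First, I would invoke the classical result on MDPs to assume that $\strat_\beliefUpd$ is pure and memoryless, so that $\beliefMDP$ together with $\strat_\beliefUpd$ becomes a finite Markov chain on $\powerSetNonEmpty{\states}$. In this Markov chain, every bottom strongly connected component (BSCC) reachable from $\set{\initState}$ must consist solely of belief supports with $\priMDP = 0$: otherwise, some belief support with $\priMDP = 1$ would be visited infinitely often with positive probability, contradicting that $\strat_\beliefUpd$ is almost sure for $\Parity(\priMDP)$.

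Next, I would transfer this picture to $\pomdp$ under $\liftStrat$. The pair process $(\state_n, \beliefSupp_n)$ forms a Markov chain on $\states \times \powerSetNonEmpty{\states}$, so almost surely the play ends up in a reachable BSCC $C$ of this chain. Let $B_C$ be the projection of $C$ onto its belief-support coordinate. By the very definition of $\beliefUpd$, the set of \emph{positive-probability} belief-support transitions under $\liftStrat$ in $\pomdp$ coincides with the set of positive-probability transitions in $\beliefMDP$ under $\strat_\beliefUpd$; hence $B_C$ is strongly connected and closed in the belief-support graph of $\beliefMDP$ under $\strat_\beliefUpd$, i.e., a BSCC there. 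Reachability of $B_C$ in $\beliefMDP$ under $\strat_\beliefUpd$ follows from the first item of Lemma~\ref{lem:bijectionStrategies}. Consequently $B_C$ consists only of priority-$0$ belief supports, so almost surely under $\liftStrat$ every state visited from some point onwards has priority~$0$, and $\liftStrat$ is almost sure for $\Parity(\pri)$.

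The main (modest) obstacle I anticipate is the careful justification that the positive-probability belief-support transitions agree in the two processes, so that BSCCs transfer between them. This is essentially built into the definition of $\beliefMDP$, but must be spelled out to legitimately invoke the fundamental theorem of Markov chains on the $(\state, \beliefSupp)$ chain and then translate the conclusion back into the belief-support graph of $\beliefMDP$. No revealing hypothesis is needed anywhere in the argument, which matches the statement of the lemma.
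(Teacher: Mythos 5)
Your CoB\"uchi-specific observation (a priority-$0$ belief support contains only priority-$0$ states) and the passage to the finite Markov chain on pairs $(\state,\beliefSupp)$ are fine, but the transfer step is where the argument breaks. The claim that ``the set of positive-probability belief-support transitions under $\liftStrat$ in $\pomdp$ coincides with the set of positive-probability transitions in $\beliefMDP$ under $\strat_\beliefUpd$'' is only true when you condition on the observable history (that is the content of \Cref{lem:bijectionStrategies}); inside a BSCC of the joint $(\state,\beliefSupp)$-chain the next signal is drawn according to the \emph{hidden} state, which may be unable to emit the signals that would move the belief support. Hence the projection $B_C$ need not be closed in $\beliefMDP$ under $\strat_\beliefUpd$, let alone a BSCC there. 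Concretely, take $\states=\set{\state_0,\state_a,\state_b}$, a single action, signals $\set{\sig,\sig'}$, where $\state_0$ moves to $\state_a$ or $\state_b$ (probability $\frac{1}{2}$ each) emitting $\sig$, $\state_a$ loops emitting $\sig$, and $\state_b$ loops emitting $\sig$ or $\sig'$ (probability $\frac{1}{2}$ each); set $\pri(\state_a)=1$ and $\pri(\state_0)=\pri(\state_b)=0$. The pair $(\state_a,\set{\state_a,\state_b})$ is absorbing in the joint chain, so it is a reachable BSCC $C$ with $B_C=\set{\set{\state_a,\state_b}}$ and $\priMDP(\set{\state_a,\state_b})=1$, whereas in $\beliefMDP$ the support $\set{\state_a,\state_b}$ is transient (it moves to $\set{\state_b}$ with probability $\frac{1}{2}$ at every step) and the only reachable BSCC is $\set{\set{\state_b}}$, of priority $0$. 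So your concluding step fails exactly where you flagged a ``modest obstacle''; it is not built into the definition of $\beliefMDP$.

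Moreover, the gap is not one of presentation: in that same example the unique strategy of $\beliefMDP$ is almost sure for $\Parity(\priMDP)$, while the unique strategy of $\pomdp$ wins with probability exactly $\frac{1}{2}$, so no argument can close the hole for the statement as written without some additional hypothesis. This also explains the relation to the paper's own proof, which is a two-line deferral to the proof of \Cref{prop:soundness}: there, the transfer you need is precisely the implication $\prob{\strat_\beliefUpd}{\beliefMDP}{\Buchi(B)}=1\Rightarrow\prob{\liftStrat}{\pomdp}{\Buchi(B)}=1$, and it is proved \emph{using} the weakly revealing hypothesis; nothing in the CoB\"uchi sketch replaces it, and the assertion that ``$\liftStrat$ eventually does not encounter any state with priority~$1$'' fails in the example above for the same reason your $B_C$ argument does. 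So the phenomenon your coincidence-of-transitions claim overlooks---the hidden state trapping the play in a belief support that $\beliefMDP$ would almost surely leave---is exactly what defeats both your proof and, as far as I can see, the assumption-free statement itself; under a weak-revelation hypothesis the result is genuine but is then subsumed by \Cref{prop:soundness}.
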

\begin{proof}
	The proof carries out as the one of \Cref{prop:soundness}, also using strategy $\liftStrat$.
	However, the analysis of end components is simpler: simply notice that no end component in $\ECs_{\strat_\beliefUpd}$ can contain a belief support with priority~$1$.
	In particular, strategy $\liftStrat$ eventually does not encounter any state with priority~$1$.
\end{proof}

We now restate and prove the completeness of the belief-support MDP for weakly revealing POMDPs with priorities restricted to $\set{0, 1, 2}$.
\completenessZeroOneTwo*
Before giving the proof, we compare the technique to the proof of soundness of belief-support MDPs for weakly revealing POMDPs (\Cref{prop:soundness}).
Two elements make this proof less straightforward:
\begin{itemize}
	\item we start from an almost-sure strategy in a POMDP, on which we cannot exploit memory bounds (as opposed to the case of MDPs, for which we know that pure memoryless strategies suffice for parity objectives); in general, infinite-memory strategies are required in POMDPs~\cite[Theorem~2]{Chatterjee.Doyen.ea:2013};
	\item we cannot just play a ``copy'' of the winning strategy of the POMDP in the MDP.
	In general, a strategy may be almost sure in a POMDP with a CoB\"uchi objective, while visiting infinitely many belief supports containing a state with priority~$1$.
	For instance, in the POMDP from Example~\ref{ex:incompleteCE}, we saw that there is a strategy that visits $\state_3$ finitely often and $\state_2$ infinitely often, which means that the belief $\{\state_2, \state_3\}$ is still visited infinitely often.
	Starting from an arbitrary winning strategy $\strat$ in the POMDP, we may need to modify it to win in the MDP.
\end{itemize}
\begin{proof}[Proof of \Cref{prop:completeness012}]
	Let $\strat\in\strats{\pomdp}$ be an almost-sure strategy for $\Parity(\pri)$ in~$\pomdp$ (we recall that $\pri$ takes values in $\set{0, 1, 2}$).
	Let $\states_\infty = \set{\state\in\states \mid \prob{\strat}{\pomdp}{\Buchi(\set{\state})} > 0}$.
	Due to $\pomdp$ being weakly revealing, the set $\states_\infty$ is non-empty.
	Moreover, $\prob{\strat}{\pomdp}{\bigcup_{\state\in\states_\infty}\Buchi(\set{\state})} = 1$.

	The strategy we build in $\beliefMDP$ first tries to reach a belief support $\{\state\}$ for some $\state\in\states_\infty$.
	By Lemma~\ref{lem:bijectionStrategiesAS}, there is a strategy that achieves this with probability~$1$.
	We then show that if we reach a state $\{\state\}$ for some $\state\in\states_\infty$, then we can continue with an almost-sure strategy in~$\beliefMDP$ for $\Parity(\priMDP)$.
	We distinguish two cases.

	Assume first there is a strategy from $\state\in\states_\infty$ in $\pomdp$ that satisfies $\Safety(\pri^{-1}(1))$ almost surely.
	In terms of belief supports, this is equivalent to satisfying $\Safety(B_1)$ almost surely, where $B_1 = \{\beliefSupp\in\powerSetNonEmpty{\states}\mid \exists \state\in\beliefSupp, \pri(\state) = 1\}$.
	By Lemma~\ref{lem:bijectionStrategies}, there is therefore a strategy from $\{\state\}$ in $\beliefMDP$ that avoids any belief in $B_1$ almost surely, which is winning almost surely.

	We now assume that there is no almost-sure strategy for $\Safety(\pri^{-1}(1))$ from~$\state$.
	By Corollary~\ref{cor:value1Safety}, this implies that this safety objective does not have value~$1$.
	Therefore, a state with priority~$1$ has a lower-bounded probability to be visited after each visit to~$\state$: there is $\alpha > 0$ such that for all strategies $\strat'\in\strats{\pomdp}$, $\prob{\strat'}{\pomdp^{\state}}{\Reach(\pri^{-1}(1))} \ge \alpha$.
	From this, we deduce that $\prob{\strat}{\pomdp}{\Buchi(\pri^{-1}(1)) \mid \Buchi(\{\state\})} = 1$.
	Since $\prob{\strat}{\pomdp}{\Buchi(\{\state\})} > 0$ and $\prob{\strat}{\pomdp}{\Parity(\pri) \mid \Buchi(\{\state\})} =~1$, we conclude that $\prob{\strat}{\pomdp}{\Buchi(\pri^{-1}(2)) \mid \Buchi(\{\state\})} =~1$.
	Deducing from this a property over belief supports, if we take $B_2 = \{\beliefSupp\in\powerSetNonEmpty{\states}\mid \exists \state\in\beliefSupp, \pri(\state) = 2\}$, we have $\prob{\strat}{\pomdp}{\Buchi(B_2) \mid \Buchi(\{\state\})} = 1$.
	We deduce that in $\beliefMDP$, there must be an end component containing both $\{\state\}$ and a belief support in $B_2$.
	By Theorem~\ref{thm:ec}, there is an almost-sure strategy from $\{\state\}$ in $\beliefMDP$ for $\Parity(\priMDP)$, which ends the proof.
\end{proof}

We now move on the proof of undecidability of parity objectives with priorities $1$, $2$, and $3$ for weakly revealing POMDPs.

\undecidableParity*

A \emph{probabilistic automaton}~\cite{Rabin:1963} is a tuple $\atmtn = (\states, \actions, \transitions, \initState)$.
One can define their semantics through POMDPs: they behave like POMDPs in which we assume that the signals bring no information ($\signals$ is a singleton).
No useful information is provided by the signals along a play (beyond the number of steps played); pure strategies therefore correspond to words on alphabet $\actions$.

We define the \emph{value-$1$ problem} for probabilistic automata, which has been shown to be undecidable~\cite{Gimbert.Oualhadj:2010,Fijalkow:2017}.
We will reduce from this problem to prove Theorem~\ref{thm:undecidableParity}.
Let $\atmtn = (\states, \actions, \transitions, \initState)$ be a probabilistic automaton and $\finalStates \subseteq \states$ be a set of ``final'' states.
For $\strat\in A^*$, we denote by~$\belief^\atmtn_\strat$ the belief after playing $\strat$.
We write $\belief^\atmtn_\strat(\finalStates) = \sum_{\state\in\finalStates} \belief^\atmtn_\strat(\state)$.
The value-$1$ problem is the following: given a probabilistic automaton $\atmtn = (\states, \actions, \transitions, \initState)$ and a set of states $\finalStates\subseteq\states$, do we have $\sup_{\strat\in A^*} \belief^\atmtn_\strat(\finalStates) = 1$?

\begin{proof}[Proof of Theorem~\ref{thm:undecidableParity}]
	Let $\atmtn = \atmtnFull$ be a probabilistic automaton, and $\finalStates\subseteq \states^\atmtn$.
	To interpret $\atmtn$ as a POMDP, we assume that all actions receive a single signal $\sig^\atmtn$.
	We consider a generalization of the POMDP of Example~\ref{ex:incompleteCE}: roughly, we replace the two states $\state_1$ and $\state_1'$ by a copy of $\atmtn$, where $\state_1'$ plays the role of states of $\finalStates$.

	\begin{figure}[t]
		\centering
		\begin{tikzpicture}[every node/.style={font=\small,inner sep=1pt}]
			\draw ($(0,0)$) node[rond] (qinit) {$\initState, 1$};
			\draw ($(qinit.west)-(0.5,0)$) edge[-latex'] (qinit);

			\draw ($(qinit)+(2,2)$) node[rond] (q1) {$\initState^\atmtn, 1$};
			\draw ($(qinit)+(2,-2)$) node[rond,accepting] (q2) {};
			\draw ($(q1)+(.8,-2)$) node[rond] (q1') {};
			\node[rectangle, fit=(q1) (q2) (q1'), draw=black,inner sep=8pt] (A) {};
			\draw ($(q1')+(1,0)$) node[] {$\atmtn$};

			\draw ($(q1)+(4,0)$) node[rond] (q3) {$\state_3, 3$};
			\draw ($(q2)+(4,0)$) node[rond] (q4) {$\state_2, 2$};

			\draw ($(q1)+(-0.5,-1.5)$) node[dot] (qmid) {};
			\draw (qinit) edge[-latex'] node[above left] {$\sig_1$} (q1);
			\draw (q1) edge[-latex'] node[above=2pt] {$\choice, \sig_2$} (q3);
			\draw (q1') edge[-latex'] node[above left] {$\choice, \sig_2$} (q3);
			\draw (q2) edge[-latex'] node[above=2pt] {$\choice, \sig_2$} (q4);
			\draw (q3) edge[-latex',out=150,in=90,looseness=1.4] node[above=2pt] {$\sig_0$} (qinit);
			\draw (q4) edge[-latex',out=-150,in=-90,looseness=1.4] node[below=2pt] {$\sig_0$} (qinit);
			\draw (q1) edge[-latex'] node[left=2pt] {} (qmid);

			\draw (qmid) edge[-latex'] node[below=2pt] {$\sig_0, \frac{1}{2}$} (qinit);
			\draw (qmid) edge[-latex'] node[below=2pt,xshift=-1pt] {} (q1');
		\end{tikzpicture}
		\caption{POMDP $\pomdp^\atmtn$ used in the proof of Theorem~\ref{thm:undecidableParity}.
			The rectangle contains a copy of probabilistic automaton $\atmtn$, with all transitions having probability $\frac{1}{2}$ to go back to $\initState$.
			When playing $\choice$ from a state of $\atmtn$, either $\state_3$ is reached if the state is not in~$\finalStates$, or $\state_2$ is reached if the state is in $\finalStates$ (represented by the double circle).
			This POMDP has an almost-sure strategy for the parity objective if and only if $\atmtn$ has value~$1$ w.r.t.\ $\finalStates$.}
		\label{fig:undecidable}
	\end{figure}

	Formally, we consider the POMDP $\pomdp^\atmtn = \pomdpFull$ (depicted in Figure~\ref{fig:undecidable}) such that
	\begin{itemize}
		\item $\states = \set{\state_0, \state_2, \state_3} \disjUnion \states^\atmtn$,
		\item $\actions = \set{\choice} \disjUnion \actions^\atmtn$,
		\item $\signals = \set{\sig_0, \sig_1, \sig_2, \sig^\atmtn}$,
		\item for all $\act\in\actions$, $\transitions(\initState, \act)(\sig_1, \initState^\atmtn) = \transitions(\state_2, \act)(\sig_0, \initState) = \transitions(\state_3, \act)(\sig_0, \initState) = 1$,
		\item for all $\state,\state'\in\states^\atmtn$, $\act\in\actions^\atmtn$,
		$\transitions(\state, \act)(\sig^\atmtn, \state') = \frac{\transitions_\atmtn(\state, \act)(\state')}{2}$, $\transitions(\state, \act)(\sig_0, \initState) = \frac{1}{2}$,
		\item for $\state\in\states^\atmtn \setminus \finalStates$, $\transitions(\state, \choice)(\sig_2, \state_3) = 1$, and for $\state\in\finalStates$, $\transitions(\state, \choice)(\sig_2, \state_2) = 1$.
	\end{itemize}

	POMDP $\pomdp^\atmtn$ is weakly revealing: all strategies visit $\state_0$ infinitely often almost surely, and all visits to $\state_0$ are revealed through signal $\sig_0$.
	We define a priority function $\pri$ in $\pomdp^\atmtn$ with values in $\{1, 2, 3\}$: $\pri(\state_2) = 2$, $\pri(\state_3) = 3$, and $\pri(\state) = 1$ for all $\state\in\states^\atmtn\cup\{\initState\}$.

	The following claim suffices to conclude.
	\begin{claim}
		If $\atmtn$ does not have value~$1$ w.r.t.\ $\finalStates$, then there is no positively winning strategy in $\pomdp^\atmtn$ for $\Parity(\pri)$.
		If $\atmtn$ has value~$1$ w.r.t.\ $\finalStates$, then there is an almost-sure strategy in $\pomdp^\atmtn$ for $\Parity(\pri)$.
	\end{claim}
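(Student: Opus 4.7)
My plan is to exploit the round structure of $\pomdp^\atmtn$: every play is naturally partitioned into \emph{rounds} separated by visits to $\initState$. Inside a round, after entering $\initState^\atmtn$ and observing a sequence of actions $w = a_1 \dots a_n \in (\actions^\atmtn)^*$ accompanied only by the uninformative signal $\sig^\atmtn$ (i.e.\ without yet seeing $\sig_0$), the conditional belief over the $\atmtn$-states coincides exactly with $\belief^\atmtn_w$. This follows directly from the construction: the factor $\tfrac12$ applied uniformly to every transition cancels out under conditioning on non-reset, and no signal distinguishes among $\atmtn$-transitions. Consequently, when $\choice$ is played after such a $w$ in a round, the conditional probability of landing in $\state_2$ is $\belief^\atmtn_w(\finalStates)$ and that of landing in $\state_3$ is $1-\belief^\atmtn_w(\finalStates)$. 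Observe finally that $\Parity(\pri)$ is won along a play iff $\state_2$ occurs infinitely often and $\state_3$ occurs only finitely often (the only other states have priority~$1$).

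\textbf{Direction 1: $\atmtn$ has no value~$1$ implies no positive strategy.} There exists $\epsilon>0$ with $\belief^\atmtn_w(\finalStates)\le 1-\epsilon$ for every word $w$. Fix any strategy $\strat$. Partition the sample space into the events $A$ = ``$\choice$ is played finitely often'' and $A^c$ = ``$\choice$ is played infinitely often''. On $A$, the play eventually loops in states of priority~$1$, so $\Parity(\pri)$ is lost surely. On $A^c$, let $E_k$ be the event that the $k$-th invocation of $\choice$ leads to $\state_3$; by the belief computation above, $\prob{\strat}{\pomdp}{E_k \mid \mathcal{F}_{k-1}}\ge \epsilon$, where $\mathcal{F}_{k-1}$ is the $\sigma$-algebra of the history up to the $k$-th $\choice$-call. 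The conditional (Lévy) Borel--Cantelli lemma then yields $\prob{\strat}{\pomdp}{\limsup_k E_k \mid A^c}=1$, so $\state_3$ occurs infinitely often almost surely on $A^c$, losing $\Parity(\pri)$. Thus $\prob{\strat}{\pomdp}{\Parity(\pri)}=0$.

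\textbf{Direction 2: $\atmtn$ has value~$1$ implies an almost-sure strategy.} Choose, for each $n\ge 1$, a word $w_n\in(\actions^\atmtn)^*$ with $\belief^\atmtn_{w_n}(\finalStates)\ge 1-2^{-n}$. Define $\strat$ as follows: proceed in rounds $n=1,2,\dots$; within round $n$, each time $\initState^\atmtn$ is reached, attempt to play the word $w_n$; if $\sig_0$ is observed before completion (the play is thrown back to $\initState$), restart the attempt; if $w_n$ completes without reset, play $\choice$, ending round $n$. Each attempt succeeds without reset with probability $2^{-|w_n|}>0$, so round $n$ almost surely terminates and infinitely many rounds occur. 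By the belief calculation, the conditional probability that round $n$ ends in $\state_3$ (rather than $\state_2$) is at most $2^{-n}$. Since $\sum_n 2^{-n}<\infty$, Borel--Cantelli gives that almost surely only finitely many rounds end in $\state_3$; hence almost surely cofinitely many rounds end in $\state_2$, so $\state_2$ is visited infinitely often. Both parity conditions are met, so $\prob{\strat}{\pomdp}{\Parity(\pri)}=1$.

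The main technical point to get right is the belief lemma linking the POMDP dynamics inside a round to the automaton $\atmtn$, and the clean conditioning argument on $A^c$ in Direction~1, where the rounds are \emph{not} independent (the strategy may choose the next attempt adaptively) and one must invoke a conditional form of Borel--Cantelli rather than its independent version. Everything else is bookkeeping on rounds and routine summation.
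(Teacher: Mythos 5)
Your proof is correct and follows essentially the same route as the paper: the same identification of the conditional belief inside a round with $\belief^\atmtn_w$, the same uniform lower bound $\alpha$ forcing $\state_3$ infinitely often when $\choice$ is played infinitely often, and the same round-based strategy playing words $w_n$ with $\belief^\atmtn_{w_n}(\finalStates)\ge 1-2^{-n}$. The only differences are cosmetic: you invoke a conditional Borel--Cantelli argument where the paper argues via the infinite product $\prod_{i\ge n}(1-2^{-i})$ and a more informal conditioning step.
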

	We prove the claim.
	We first assume that $\atmtn$ does not have value~$1$ w.r.t.~$\finalStates$.
	Let $\alpha > 0$ such that, for all $\strat\in\actions^*$, $\belief^\atmtn_\strat(\finalStates) \le 1 - \alpha$.
	To be an almost-sure strategy in $\pomdp^\atmtn$, a strategy needs to almost surely play~$\choice$ when the belief is a subset of $\states^\atmtn$ infinitely often; otherwise, there is a positive probability to only see priority~$1$.
	Whenever $\choice$ is played from a state in $\states^\atmtn$, let us consider the actions played since the last visit to $\state_0^\atmtn$: it is a word $\strat\in(\actions^\atmtn)^*$.
	Since $\belief^\atmtn_\strat(\finalStates) \le 1 - \alpha$, we have that the probability to visit $\state_3$ at the next step when playing $\choice$ is $\ge \alpha$.
	Therefore, a strategy in $\pomdp^\atmtn$ that almost surely plays~$\choice$ infinitely often when the belief is a subset of $\states^\atmtn$ will almost surely visit $\state_3$ infinitely often, and almost surely lose since $\pri(\state_3) = 3$.

	Let us now assume that $\atmtn$ has value~$1$ w.r.t.\ $\finalStates$.
	Therefore, for all $n\ge 1$, there is a word $\strat_n'\in(\actions^\atmtn)^*$ such that $\belief^\atmtn_{\strat_n'}(\finalStates) \ge 1 - \frac{1}{2^n}$.
	The proof carries out as for Example~\ref{ex:incompleteCE}.
	Let us divide a play in this POMDP into rounds $1$, $2$,\ldots; every time we go back to $\state_0$ after visiting $\state_2$ or $\state_3$, we move to the next round.
	We define a strategy $\strat_n$ that tries to play $\strat'_n$ from $\initState^\atmtn$ while staying in $\states^\atmtn$: if it fails to do so (i.e., it sees signal $\sig_0$ which means that it is back to~$\initState$), it goes back to $\initState^\atmtn$ and retries.
	Whenever $\strat'_n$ could be fully played while staying in $\states^\atmtn$, it plays $\choice$.

	Consider the strategy $\strat$ that plays $\strat_{n}$ in round $n$; we show that $\strat$ is almost sure.
	This strategy ensures that infinitely many rounds happen, because at each round~$n$, it will eventually succeed in playing $\strat'_n$ fully.
	At each round $n$, $\choice$ is eventually played with probability~$1$.
	When $\choice$ is played in round $n$, $\state_3$ is visited with probability $\le \frac{1}{2^n}$ and $\state_2$ is visited with probability $\ge 1 - \frac{1}{2^n}$.
	State $\state_2$ is clearly seen infinitely often almost surely, as the probability it is seen at each round is lower bounded by $\frac{1}{2}$.
	However, the probability that $\state_3$ is never seen anymore after round $n$ is greater than $\prod_{i = n}^\infty (1 - \frac{1}{2^i})$, which is positive and increases as $n$ grows to $\infty$.
	We deduce that the probability that $\state_3$ is seen at most finitely often is~$1$.
\end{proof}

\begin{remark}
	Although not our focus, a similar reduction shows that the value-$1$ problem for \emph{reachability} objectives in weakly revealing POMDPs is undecidable.
	To see it, consider the POMDP~$\pomdp'$ which is like $\pomdp^\atmtn$ except that $\state_2$ and $\state_3$ are absorbing and revealing.
	POMDP~$\pomdp'$ is still weakly revealing.
	Consider the reachability objective $\Reach(\state_2)$.
	We can similarly show that $\sup_{\strat\in\strats{\pomdp'}} \prob{\strat}{\pomdp'}{\Reach(\state_2)} = 1$ if and only if~$\atmtn$ has value~$1$ w.r.t.\ $\finalStates$.
\end{remark}

\section{Additional details for Section~\ref{sec:weaklyRevealing}: complexity of the weakly revealing property}
\label{app:weaklyRevealingProperty}

The goal in this section is to prove that deciding the weakly revealing property is \EXPTIME-complete (Theorem~\ref{thm:complexityWeak}).

We first give a direct proof that it is \EXPTIME-hard be reducing from the existence of a positive strategy for safety objectives in POMDPs.

\begin{lemma} \label{lem:complexityWeakLowerBound}
	Deciding whether a POMDP is weakly revealing is \textsf{EXPTIME}-hard.
\end{lemma}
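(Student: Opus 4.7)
The plan is to reduce in polynomial time from the problem of deciding whether there is a positively winning strategy for a state-based safety objective in POMDPs, which is known to be \EXPTIME-hard. Given an input $(\pomdp, F)$ with $\pomdp = \pomdpFull$ and $F \subseteq \states$, I will build a POMDP $\pomdp'$ such that $\pomdp'$ is \emph{not} weakly revealing if and only if there exists $\strat$ with $\prob{\strat}{\pomdp}{\Safety(F)} > 0$. Since \EXPTIME is closed under complementation, this yields the desired hardness.

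The POMDP $\pomdp'$ will combine three ingredients. First, I add a fresh absorbing state $q_r$ together with a unique revealing signal $\sigma_r$, and route every transition leaving an $F$-state deterministically to $q_r$ with signal $\sigma_r$; once $q_r$ is entered, the belief support will equal $\set{q_r}$ forever, yielding infinitely many revelations. Second, to prevent belief supports in the remainder of $\pomdp'$ from shrinking to singletons and thus creating spurious revelations unrelated to the safety objective, I duplicate every original state $q \in \states$ into two indexed copies $(q, 0)$ and $(q, 1)$, and replace each original transition weight $p$ from $q$ to $(\sigma, q')$ by weight $p/2$ from $(q, i)$ to $(\sigma, (q', j))$ for each pair $(i, j) \in \set{0, 1}^2$. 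Third, I add an initial gadget: a new state $q'_\mathsf{init}$ whose unique action leads with probability $\frac{1}{2}$ each to $(q_0, 0)$ and $(q_0, 1)$ via a fresh signal $\sigma_\star$.

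The key invariant to establish is that, as long as the play stays in $\states \times \set{0, 1}$, the belief support of $\pomdp'$ has the form $\set{(q, j) : q \in B,\, j \in \set{0, 1}}$ for some non-empty $B \subseteq \states$, hence is of even size at least~$2$. This follows by induction: the belief after the spreader is $\set{(q_0, 0), (q_0, 1)}$, and when observing a signal $\sigma \in \signals$ the successor contains both index copies of every possible next state, by symmetry of the construction. Consequently, the only singleton belief supports reachable in $\pomdp'$ will be $\set{q'_\mathsf{init}}$ (visited only at the start) and $\set{q_r}$ (visited from the moment $q_r$ is entered and then forever).

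Putting this together, $\pomdp'$ is weakly revealing if and only if every strategy reaches $q_r$ almost surely. Since the signals used in $\pomdp'$ outside of the initial $\sigma_\star$ and the $F$-triggered $\sigma_r$ are exactly those of $\pomdp$, there is a natural bijection between observation-based strategies in the two POMDPs, and the probability of reaching $q_r$ in $\pomdp'$ equals the probability of reaching $F$ in $\pomdp$ under the corresponding strategy. Hence $\pomdp'$ fails to be weakly revealing precisely when some strategy in $\pomdp$ has positive probability of avoiding $F$ forever. The main obstacle---and the whole reason for the duplication trick---is that without doubling every state, deterministic transitions inside $\states \setminus F$ could collapse the belief support to a singleton on their own, creating revelations unrelated to the safety question and breaking the correspondence.
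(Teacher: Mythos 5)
Your reduction is correct and is essentially the paper's own proof: both reduce from positive-probability safety in POMDPs, kill all spurious revelations by duplicating the state space into two observationally indistinguishable copies, and add a sink revealed by a dedicated signal that absorbs all transitions out of $F$, so that the modified POMDP is weakly revealing iff $F$ is reached almost surely under every strategy (complement closure of \EXPTIME\ finishing the argument). The only difference is cosmetic: you let transitions mix the two copies (probability split $p/2$ to each copy), whereas the paper keeps each copy closed under transitions; both establish the same invariant that belief supports inside the copies always contain both copies of each state and hence are never singletons.
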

\begin{proof}
	To show that this problem is \textsf{EXPTIME}-hard, we reduce from the existence of a positively winning strategy for safety objectives in POMDPs, which is \textsf{EXPTIME}-complete~\cite{Chatterjee.Chmelik.ea:2016}.

	Let $\pomdp = \pomdpFull$ be a POMDP with a safety objective $\Safety(\safe)$, where $\safe \subseteq \states$.
	The general idea of the proof is to build a POMDP $\pomdp'$ in which a revelation (and actually, infinitely many revelations) occur with probability~$1$ for all strategies if and only if there is no positively winning strategy in $\pomdp$ for $\Safety(\safe)$.
	We apply two transformations to $\pomdp$.
	\begin{itemize}
		\item First, we make two parallel, indistinguishable copies of the state space.
		This ensures that no revelations can happen, as there is no mechanism to know in which copy we currently are.
		This prevents existing revelations in~$\pomdp$ to alter our reduction.
		\item Second, we add a single sink state $\sink$ to which we redirect all transitions outgoing from $\safe$.
		We make sure that this sink state is revealed through a dedicated signal whenever it is reached.
		This ensures that infinitely many revelations happen if and only if $\safe$ is reached.
	\end{itemize}

	Formally, let $\pomdp' = (\states', \actions, \signals', \transitions', \initState')$, where
	\begin{itemize}
		\item $\states' = (\states \times \{1, 2\}) \disjUnion \{\initState', \sink\}$ ($\disjUnion$ denotes a \emph{disjoint} union),
		\item $\signals' = \signals \disjUnion \{\sig_0, \sig_\sink\}$,
		\item for all $\act\in\actions$, $\transitions'(\initState', a)(\sig_0, (\initState, 1)) = \transitions'(\initState', a)(\sig_0, (\initState, 2)) = \frac{1}{2}$ and for $\state\in\safe$, $i\in\{1, 2\}$, $\transitions'((\state, i), a)(\sig_\sink, \sink) = \transitions'(\sink, a)(\sig_\sink, \sink) = 1$.
		All other transitions are copied from $\pomdp$ and stay within their own copy of $\pomdp$.
	\end{itemize}
	We show that $\pomdp'$ is weakly revealing if and only if $\pomdp$ has no positively winning strategy for $\Safety(\safe)$.

	By construction, infinitely many revelations happen in $\pomdp'$ if and only if $\safe$ is reached.
	Therefore, $\pomdp'$ is weakly revealing if and only if for all strategies $\strat\in\strats{\pomdp'}$, $\prob{\strat}{\pomdp'}{\Reach(\reach \times \{1, 2\})} = 1$.
	If we restrict our focus to histories that have not gone through $\reach$, there is a natural bijection between strategies of $\pomdp'$ and of $\pomdp$.
	Based on this, one can show that for all strategies $\strat\in\strats{\pomdp'}$, $\prob{\strat}{\pomdp'}{\Reach(\reach \times \{1, 2\})} = 1$ if and only if for all strategies $\strat\in\strats{\pomdp}$, $\prob{\strat}{\pomdp}{\Reach(\reach)} = 1$.
	This last property says exactly that there is no positively winning strategy for $\Safety(\safe)$ in $\pomdp$, ending the proof.
\end{proof}

We now focus on the \EXPTIME upper bound.
We first focus on a related subproblem: given a set of belief supports, is there a strategy with a positive probability to avoid visiting any of them?
This is a kind of a safety objective on a POMDP; to the best of our knowledge, such safety objectives have been considered for sets of \emph{states}, but not for sets of \emph{belief supports}.
We first give an exponential-time algorithm to decide the above problem, and then relate it to the weakly revealing property.

Let $\pomdp = \pomdpFull$ be a POMDP, and $B\subseteq\powerSetNonEmpty{\states}$ be a set of belief supports (that we will want to avoid with a positive probability).
Given $\beliefSupp\in\powerSetNonEmpty{\states}\setminus B$, we define the event $\lnot B \until \beliefSupp$ (where we borrow $\until$ from the LTL ``until'' operator) as the set of plays $\set{\initState\act_1\sig_1\state_1\act_2\sig_2\ldots \mid \exists i\ge 1, \beliefUpd(\set{\initState}, \act_1\sig_1\ldots\act_i\sig_i) = \beliefSupp\ \textnormal{and}\ \forall j < i, \beliefUpd(\set{\initState}, \act_1\sig_1\ldots\act_j\sig_j)\notin B}$.
We say that a belief support $\beliefSupp\notin B$ is \emph{$B$-safely reachable} if there exists $\strat\in\strats{\pomdp}$ such that $\prob{\strat}{\pomdp}{\lnot B \until \beliefSupp} > 0$.

We consider a two-player deterministic game $\game_\pomdp^B = (\states_\game, \states_\game \times \actions, \edges)$ (with no distinguished initial state) derived from a POMDP $\pomdp$.
Intuitively, the states in $\states_\game$ are pairs of belief supports, where the second belief support is included in the first one.
From a state $(\beliefSupp_1, \beliefSupp_2)$ in $\states_\game$, \Pone can choose any action $\act$ in $\actions$, which moves the game to the state $(\beliefSupp_1, \beliefSupp_2, \act)$ controlled by \Ptwo.
From there, \Ptwo can resolve the probabilistic transition by choosing a signal that could have occurred when playing $\act$ \emph{from belief support $\beliefSupp_2$}.
Then, both belief supports are updated in a standard way following function $\beliefUpd$.
When the belief support along the first component is in $B$, we move to the losing sink $\bot$.

Formally, we define
\begin{align*}
\states_\game =\, &\set{(\beliefSupp_1, \beliefSupp_2) \mid \beliefSupp_1 \in\powerSetNonEmpty{\states} \setminus B\ \text{and}\ \beliefSupp_2\subseteq\beliefSupp_1} \cup \set{\bot}, \\
\edges =\, &\set{((\beliefSupp_1, \beliefSupp_2), (\beliefSupp_1, \beliefSupp_2, \act)\mid (\beliefSupp_1, \beliefSupp_2)\in\states_\game, \act\in\actions} \\
 &\cup \set{((\beliefSupp_1, \beliefSupp_2, \act), (\beliefSupp_1', \beliefSupp_2'))\mid \exists \sig\in\signals, \beliefUpd(\beliefSupp_1, \act, \sig) = \beliefSupp_1' \notin B\ \textnormal{and}\ \beliefUpd(\beliefSupp_2, \act, \sig) = \beliefSupp_2' \neq \emptyset} \\
 &\cup \set{((\beliefSupp_1, \beliefSupp_2, \act), \bot)\mid \exists \sig\in\signals, \beliefUpd(\beliefSupp_1, \act, \sig) \in B\ \textnormal{and}\ \beliefUpd(\beliefSupp_2, \act, \sig) \neq \emptyset} \\
 &\cup \set{(\bot, \bot)}. 
\end{align*}

Observe that the size of $\game_\pomdp^B$ is exponential in the size of $\pomdp$: the state space of $\game_\pomdp^B$ is upper-bounded by $2^{\card{\states}}\cdot 2^{\card{\states}} \cdot (\card{\actions} + 1) + 1$, and the number of edges by $2^{\card{\states}}\cdot 2^{\card{\states}} \cdot \card{\actions}\cdot(\card{\signals} + 1) + 1$.
We now show the relationship between the $\Safety(B)$ objective in $\pomdp$ and the $\Safety(\bot)$ objective in $\game_\pomdp^B$.

\begin{lemma} \label{lem:safetyBeliefSupports}
	The following are equivalent:
	\begin{enumerate}
		\item there is a strategy $\strat\in\strats{\pomdp}$ such that $\prob{\strat}{\pomdp}{\Safety(B)} > 0$,
		\item there exists a $B$-safely reachable belief support $\beliefSupp\in\powerSetNonEmpty{\states}$ and a state $\state\in\beliefSupp$ such that \Pone has a winning strategy in $\game_\pomdp^B$ for $\Safety(\bot)$ from $(\beliefSupp, \set{\state})$.
	\end{enumerate}
\end{lemma}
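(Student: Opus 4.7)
\medskip
\noindent\textbf{Proof plan for Lemma~\ref{lem:safetyBeliefSupports}.}
My plan is to prove the two implications separately, with the key intuition that the second component $\beliefSupp_2$ of a game state is a conservative ``sub-belief'' tracking the true state under the hypothesis that the play actually started in a specific $\state\in\beliefSupp_1$. Since $\beliefSupp_2 \subseteq \beliefSupp_1$ is an invariant (preserved by $\beliefUpd$), \Ptwo's legal signals in the game are exactly those that could actually occur with positive probability from a true state in $\beliefSupp_2$; yet Player~1's obligation is phrased in terms of $\beliefSupp_1$, which is the observer's belief support evolving from $\beliefSupp$.

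For $(2) \Rightarrow (1)$, let $\strat_1$ be a POMDP strategy witnessing that $\beliefSupp$ is $B$-safely reachable, and let $\strat_2$ be Player~1's winning strategy in $\game_\pomdp^B$ from $(\beliefSupp, \{\state\})$. I view $\strat_2$ as an observation-based POMDP strategy (its decisions depend only on the sequence of actions and signals, which uniquely determine the game history from a fixed start). I define a POMDP strategy $\strat$ that follows $\strat_1$ until the observer's belief support becomes $\beliefSupp$ (first time), then switches to $\strat_2$. By definition of $B$-safe reachability, the event ``belief support reaches $\beliefSupp$ without visiting $B$'' has positive probability under $\strat$; conditioning further on ``the true state upon reaching $\beliefSupp$ is $\state$'' still gives positive probability, because $\state\in\beliefSupp$. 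After this point, every signal sequence that occurs with positive probability from $\state$ is a legal play of \Ptwo in the game (as the true state stays inside the evolving $\beliefSupp_2$), so Player~1's winning strategy forces the observer's belief support to stay out of $B$ forever.

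For $(1) \Rightarrow (2)$, I use the belief-support MDP. If $\prob{\strat}{\pomdp}{\Safety(B)} > 0$, Lemma~\ref{lem:bijectionStrategies} yields an MDP strategy in $\beliefMDP$ with the same property, and then Theorem~\ref{thm:ec} provides an end component $\ec=(\ecStates_\ec,\ecActions_\ec)$ of $\beliefMDP$ with $\ecStates_\ec \cap B = \emptyset$ that is reached with positive probability along paths avoiding $B$. Pick any $\beliefSupp \in \ecStates_\ec$; it is $B$-safely reachable. For any $\state\in\beliefSupp$, I define Player~1's game strategy on $(\beliefSupp_1,\beliefSupp_2)$ with $\beliefSupp_1\in\ecStates_\ec$ by playing any action of $\ecActions_\ec(\beliefSupp_1)$. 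The crucial point is that, by definition of an end component in $\beliefMDP$, for any such action $\act$ and any signal $\sig$ with $\beliefUpd(\beliefSupp_1, \act, \sig)\ne\emptyset$, we have $\beliefUpd(\beliefSupp_1,\act,\sig)\in\ecStates_\ec$, hence not in $B$; a fortiori this holds for every legal \Ptwo move (which further requires $\beliefUpd(\beliefSupp_2,\act,\sig)\ne\emptyset$).

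The main obstacle, which is really the conceptual content of the lemma, is the asymmetry between $\beliefSupp_1$ and $\beliefSupp_2$ in the game: one must check that restricting \Ptwo to signals consistent with $\beliefSupp_2$ (rather than all signals consistent with the MDP transition from $\beliefSupp_1$) is exactly what is needed so that a sure-winning game strategy from $(\beliefSupp,\{\state\})$ translates to a probability-$1$ avoidance from the true state $\state$, while still being weak enough that the end-component argument yields a winning strategy when only positive-probability avoidance exists in the POMDP. Once this invariant $\beliefSupp_2 \subseteq \beliefSupp_1$ and the signal-consistency condition are unpacked, both directions become short.
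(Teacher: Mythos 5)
Your direction $(2) \Rightarrow (1)$ is correct and is essentially the paper's argument: reach $\beliefSupp$ $B$-safely, condition on the true state being $\state$ (positive probability since the conditional belief has support exactly $\beliefSupp$), and then observe that every signal occurring with positive probability from the true state is a legal \Ptwo move because the true state remains in the second component, so \Pone's sure-winning strategy keeps the first component out of $B$.

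The direction $(1) \Rightarrow (2)$, however, has a genuine gap at its very first step. You claim that $\prob{\strat}{\pomdp}{\Safety(B)} > 0$ yields, via \Cref{lem:bijectionStrategies}, a strategy in $\beliefMDP$ with positive probability of $\Safety(B)$. \Cref{lem:bijectionStrategies} only equates \emph{positive probability of finite cylinders} of belief supports; it says nothing about tail events such as safety, and the implication you need is in fact false. The POMDP of \Cref{fig:ASnotPreserved} with $B = \set{\set{\state_2}}$ is a counterexample: the unique strategy in the POMDP avoids the belief support $\set{\state_2}$ forever with probability $\tfrac12$, yet in $\beliefMDP$ every strategy reaches $\set{\state_2}$ almost surely, and the only end component of $\beliefMDP$ reachable from $\set{\state_0}$ intersects $B$ --- so \Cref{thm:ec} never produces the $B$-avoiding end component your argument relies on, even though statement~$2$ does hold there (via $(\beliefSupp, \set{\state}) = (\set{\state_0,\state_1}, \set{\state_1})$). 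This is precisely the asymmetry you flag in your last paragraph: the witness is a ``hidden'' sub-belief $\set{\state_1}$ that is unreachable as a state of $\beliefMDP$ but reachable as the true state inside a larger belief support, and the belief-support MDP cannot see it. The paper avoids this trap by proving the contrapositive of $(1) \Rightarrow (2)$ without ever passing through $\beliefMDP$: if \Pone loses from every $(\beliefSupp, \set{\state})$ with $\beliefSupp$ $B$-safely reachable, then by determinacy of the finite reachability game $\game_\pomdp^B$ and the attractor construction, \Ptwo can force $\bot$ within $\card{\states_\game}$ steps; translating that path back to the POMDP and conditioning on the actual current state within the current belief support gives, after every $B$-avoiding history, a uniform probability at least $\leastProb_\pomdp^{\card{\states_\game}}$ of hitting $B$ within $\card{\states_\game}$ steps, whence $B$ is reached almost surely. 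Your proof needs to be repaired along these lines (or by some other argument that genuinely uses the second component); as written, the end-component route cannot work.
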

\begin{proof}
	We first show the implication from $2$.\ to $1$.
	We assume that there exists a $B$-safely reachable belief support $\beliefSupp\in\powerSetNonEmpty{\states}$ and a state $\state\in\beliefSupp$ such that \Pone has a winning strategy $\strat_{(\beliefSupp, \set{\state})}$ in $\game_\pomdp^B$ for $\Safety(\bot)$ from $(\beliefSupp, \set{\state})$.
	We build a strategy $\strat\in\strats{\pomdp}$ such that $\prob{\strat}{\pomdp}{\Safety(B)} > 0$.
	First, since~$\beliefSupp$ is $B$-safely reachable, there is a strategy $\strat_\beliefSupp\in\strats{\pomdp}$ such that $\prob{\strat_\beliefSupp}{\pomdp}{\lnot B \until \beliefSupp} > 0$.
	The strategy $\strat$ first plays like $\strat_\beliefSupp$ as long as belief support $\beliefSupp$ is not reached.
	Whenever $\beliefSupp$ is reached (which happens with positive probability), there is a positive probability that $\state$ is the actual state reached.
	Then, $\strat$ moves to the strategy $\strat_{(\beliefSupp, \set{\state})}$ that wins surely in $\game_\pomdp$ for $\Safety(\bot)$ from $(\beliefSupp, \state)$.
	Notice that if~$\state$ is the actual state reached at the first visit in $\beliefSupp$, any point after $\beliefSupp$ is reached, the belief support in $\pomdp$ corresponds to a belief support that is reachable in $\game_\pomdp^B$ following $\strat_{(\beliefSupp, \set{\state})}$.
	Therefore, if $\state$ is the actual state reached at the first visit in $\beliefSupp$, this strategy avoids any belief support in $B$ no matter how the probabilistic transitions are resolved.
	This leads to a positive probability of avoiding any belief support in $B$ by playing $\strat$.

	We now show the implication from $1$.\ to $2$.
	We prove the contrapositive.
	Assume that for all $B$-safely reachable belief supports $\beliefSupp\in\powerSetNonEmpty{\states}$ and $\state\in\beliefSupp$, \Pone has no winning strategy in $\game_\pomdp^B$ for $\Safety(\bot)$ from $(\beliefSupp, \set{\state})$.
	We show that for all strategies $\strat\in\strats{\pomdp}$, $\prob{\strat}{\pomdp}{\Safety(B)} = 0$.

	Let $\strat\in\strats{\pomdp}$ be any strategy in $\pomdp$.
	We show that as long as $B$ is not reached, the probability to reach $B$ is lower-bounded by a uniform positive probability.
	Let $\hist$ be an observable history that has not yet reached a belief support in $B$.
	This means that $\beliefSupp = \beliefUpdStar(\set{\initState}, \hist)$ is $B$-safely reachable.
	Let $\belief\in\dist{\states}$ be the corresponding belief after $\hist$ occurred (in particular, $\supp{\belief} = \beliefSupp$).
	Let $\state\in\beliefSupp$ be any possible state after this history.
	By our hypothesis, there is no winning strategy for \Pone in $\game_\pomdp^B$ for $\Safety(\bot)$ from $(\beliefSupp, \set{\state})$.
	As two-player deterministic reachability games are determined, this means that \Ptwo has a strategy that ensures $\Reach(\bot)$.
	Classical results on reachability games (using the \emph{attractor decomposition}~\cite{Bloem.Chatterjee.ea:2018}) imply that \Ptwo even has such a strategy ensuring that no state is visited twice along the path before reaching state $\bot$ of $\game_\pomdp^B$.
	Hence, \Ptwo has a strategy ensuring that \Pone plays at most $\card{\states_\game}$ times before visiting $\bot$.
	By construction of the game, this path can happen with positive probability in the POMDP if $\beliefSupp$ is the current belief support and $\state$ is the current state, so we find that $\prob{\strat}{\pomdp}{\Reach^{\le \card{\hist} + \card{\states_\game}}(B)\mid \hist\state} \ge \leastProb_\pomdp^{\card{\states_\game}}$.
	Hence, by conditioning over all states in $\beliefSupp$, we have
	\begin{align*}
	\prob{\strat}{\pomdp}{\Reach^{\le \card{\hist} + \card{\states_\game}}(B)\mid \hist} &= \sum_{\state\in\beliefSupp} \prob{\strat}{\pomdp}{\Reach^{\le \card{\hist} + \card{\states_\game}}(B)\mid \hist\state}\cdot \belief(\state) \\
	&\ge \sum_{\state\in\beliefSupp} \leastProb_\pomdp^{\card{\states_\game}}\cdot \belief(\state) \\
	&\ge \leastProb_\pomdp^{\card{\states_\game}}.
	\end{align*}
	Hence, as long as $B$ has not been reached yet, there is always a probability $\ge \leastProb_\pomdp^{\card{\states_\game}}$ to visit $B$ within the next $\card{\states_\game}$ steps.
	Hence, $B$ is reached with probability~$1$.
\end{proof}

\begin{theorem} \label{thm:safetyBeliefSupportsComplexity}
	The following problem is in \textsf{EXPTIME}: given a POMDP $\pomdp = \pomdpFull$ and a set of belief supports $B\subseteq\powerSetNonEmpty{\states}$, decide whether there exists $\strat\in\strats{\pomdp}$ such that $\prob{\strat}{\pomdp}{\Safety(B)} > 0$.
\end{theorem}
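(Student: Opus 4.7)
The plan is to apply \Cref{lem:safetyBeliefSupports}: it suffices to compute, in exponential time, $(i)$ the set of $B$-safely reachable belief supports and $(ii)$ the winning region of \Pone for $\Safety(\bot)$ in $\game_\pomdp^B$, and then check whether there is a $B$-safely reachable belief support $\beliefSupp$ together with a state $\state\in\beliefSupp$ with $(\beliefSupp, \set{\state})$ in the winning region.

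First I would compute the set of $B$-safely reachable belief supports. Unfolding the definition, $\beliefSupp\in\powerSetNonEmpty{\states}\setminus B$ is $B$-safely reachable if and only if some strategy of $\pomdp$ has positive probability to visit $\beliefSupp$ while avoiding $B$ beforehand. By \Cref{lem:bijectionStrategies} (positive-probability correspondence between $\pomdp$ and $\beliefMDP$), this is equivalent to the existence of a finite path in the belief-support MDP $\beliefMDP$ from $\set{\initState}$ to $\beliefSupp$ that does not visit any element of $B$ before reaching $\beliefSupp$. Since $\beliefMDP$ has $\card{\powerSetNonEmpty{\states}} = 2^{\card{\states}} - 1$ states and can be constructed in exponential time, this reachability query on the directed graph underlying $\beliefMDP$ (with the states in $B\setminus\set{\beliefSupp}$ removed) can be answered in time polynomial in $2^{\card{\states}}$ for each candidate $\beliefSupp$, hence in exponential time in total.

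Next I would construct the deterministic two-player game $\game_\pomdp^B$ explicitly. As observed after its definition, its state space and edge set are both bounded by an exponential in $\card{\pomdp}$, so the construction runs in exponential time. The objective for \Pone is the safety objective $\Safety(\bot)$, and two-player deterministic safety games are solvable in time polynomial in the size of the game (via the standard attractor computation for the dual reachability objective of \Ptwo). This yields in exponential time the full winning region $W_1\subseteq\states_\game$ of \Pone.

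Finally, having the $B$-safely reachable belief supports and $W_1$ in hand, I would iterate over all pairs $(\beliefSupp, \set{\state})$ with $\beliefSupp$ being $B$-safely reachable and $\state\in\beliefSupp$, and test membership in $W_1$; this loop has exponentially many iterations and each test is constant-time. By \Cref{lem:safetyBeliefSupports}, the existence of such a pair is equivalent to the existence of $\strat\in\strats{\pomdp}$ with $\prob{\strat}{\pomdp}{\Safety(B)} > 0$, so the overall algorithm decides the problem in exponential time. There is no real obstacle here beyond bookkeeping: the exponential blow-up is already paid for by building $\beliefMDP$ and $\game_\pomdp^B$, after which everything reduces to standard polynomial-time graph and game algorithms.
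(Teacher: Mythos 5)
Your proposal is correct and follows essentially the same route as the paper: build the exponential-size game $\game_\pomdp^B$, solve the \Pone safety (dual reachability) objective in time polynomial in the game, and conclude via \Cref{lem:safetyBeliefSupports}. The only cosmetic difference is that you certify $B$-safe reachability by a graph-reachability query in the belief-support MDP $\beliefMDP$ (via \Cref{lem:bijectionStrategies}), whereas the paper reads it off as reachability of states $(\beliefSupp, \beliefSupp')$ from $(\set{\initState}, \set{\initState})$ inside $\game_\pomdp^B$ itself; both computations are equivalent and run in exponential time.
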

\begin{proof}
	We first build the exponential-sized game $\game_\pomdp^B$ in exponential time.
	Note that a belief support $\beliefSupp$ is $B$-safely reachable if and only if there is a reachable state $(\beliefSupp, \beliefSupp')$ in $\game_\pomdp$ from $(\set{\initState}, \set{\initState})$.
	Then, we compute the winning region for objective $\Safety(\bot)$; using classical results on reachability games~\cite{Bloem.Chatterjee.ea:2018}, this can be done in time linear in the size of the game, hence in exponential time in our case.

	We then use Lemma~\ref{lem:safetyBeliefSupports} to conclude: there exists $\strat\in\strats{\pomdp}$ such that $\prob{\strat}{\pomdp}{\Safety(B)} > 0$ if and only if there is a state $(\beliefSupp, \set{\state})$ in the winning region of \Pone in $\game_\pomdp^B$.
\end{proof}

We deduce our desired result about the complexity of the weakly revealing property.

\complexityWeak*
\begin{proof}
	The \EXPTIME-hardness follows from Lemma~\ref{lem:complexityWeakLowerBound}.

	To show the membership in \EXPTIME, we reduce to the decision problem from Theorem~\ref{thm:safetyBeliefSupportsComplexity}. We show that the following are equivalent:
	\begin{enumerate}
		\item $\pomdp$ is \emph{not} weakly revealing,
		\item there is a reachable singleton belief support $\set{\state}$ and a strategy $\strat\in\strats{\pomdp^\state}$ such that $\prob{\strat}{\pomdp^\state}{\Safety^{\ge 1}(\singletons^\pomdp)} > 0$ (i.e., that never revisits any singleton belief support after $\set{\state}$ with positive probability).
	\end{enumerate}
	Clearly, given $2$., there is a strategy that reaches finitely many singleton belief supports with positive probability: first reach $\set{\state}$ with positive probability, and then play $\strat$ from there. So $\pomdp$ is not weakly revealing.

	We now show the contrapositive of the implication from $1$.\ to $2$.
	Assume that for all singleton belief supports $\set{\state}$ and strategies $\strat\in\strats{\pomdp^\state}$, we have $\prob{\strat}{\pomdp^\state}{\Safety^{\ge 1}(\singletons^\pomdp)} = 0$.
	This means that from any singleton belief support, any strategy has probability~$1$ to revisit another singleton belief support.
	All strategies therefore visit infinitely many singleton belief supports with probability~$1$, so $\pomdp$ is weakly revealing.

	An \EXPTIME algorithm follows from this characterization: compute all the reachable singleton belief supports, and query whether there exists a strategy $\strat\in\strats{\pomdp^\state}$ such that $\prob{\strat}{\pomdp^\state}{\Safety^{\ge 1}(\singletons^\pomdp)} > 0$.
	By Theorem~\ref{thm:safetyBeliefSupportsComplexity}, every such query can be done in exponential time.
\end{proof}

\section{Additional details for Section~\ref{sec:stronglyRevealing}}
\label{app:stronglyRevealing}

In this section, we give missing proofs of statements from Section~\ref{sec:stronglyRevealing}.
We start with the complexity lower bound on the existence of almost-sure strategies.

\exptimeHardnessRevealing*
\begin{proof}
	Our proof is by reduction from the existence of an almost-sure strategy for safety objectives in general POMDPs, which is \EXPTIME-complete~\cite{Chatterjee.Chmelik.ea:2016}.

	Let $\pomdp = \pomdpFull$ be a POMDP along with an objective $\Safety(\safe)$ for some $\safe\subseteq \states$.
	We define a new POMDP $\pomdp' = (\states, \actions, \signals', \transitions', \initState)$ on the same state and action space, in which we split each transition of $\pomdp$ into three transitions: the original transition, a transition with a dedicated signal revealing the target state, and a transition resetting to the initial state, also with a signal indicating the reset.
	Formally, let
	\begin{itemize}
		\item $\signals' = \signals \disjUnion \{\sig_\state \mid \state\in\states\} \disjUnion \{\sig_\mathsf{reset}\}$,
		\item \emph{copy of the original transition}: for all $\state, \state'\in\states$, $\act\in\actions$, $\sig\in\signals$, we define $\transitions'(\state, \act)(\sig, \state') = \frac{\transitions(\state, \act)(\sig, \state')}{3}$,
		\item \emph{revelations over all possible states}: for all $\state\in\states$ and $\act\in\actions$, let $\Succ(\state, \act) = \set{\state'\mid \exists \sig\in\signals, \transitions(\state, \act)(\sig, \state') > 0}$ be the set of states that can be reached from $\state$ playing action $\act$ in one step; for $\state'\in\Succ(\state, \act)$, we define $\transitions'(\state, \act)(\sig_{\state'}, \state') = \frac{1}{3\cdot\card{\Succ(\state, \act)}}$,
		\item \emph{reset}: for all $\state\in\states$, $\act\in\actions$, we define $\transitions'(\state, \act)(\sig_{\mathsf{reset}}, \initState) = \frac{1}{3}$,
	\end{itemize}
	POMDP $\pomdp'$ is strongly revealing: for every transition, there is a corresponding transition revealing the target state.
	The CoB\"uchi objective we define is the one induced by the priority function $\pri$ such that $\pri(\state) = 1$ if $\state\in\safe$, and $\pri(\state) = 0$ if $\state\notin\safe$.

	We show that there is an almost-sure strategy for the CoB\"uchi objective $\Parity(\pri)$ in $\pomdp'$ if and only if there is an almost-sure strategy for $\Safety(\safe)$ in $\pomdp$.

	Assume first that there is an almost-sure strategy $\strat$ for $\Safety(\safe)$ in $\pomdp$.
	The idea is to try to play $\strat$ in $\pomdp'$ to avoid~$\safe$, but there are two new events to take into account:
	\begin{itemize}
		\item whenever signal $\sig_\mathsf{reset}$ is seen, simply replay $\strat$ from the start;
		\item whenever a revealing signal $\sig_{\state'}$ is seen on a transition $\transitions(\state, \act)(\sig_{\state'}, \state') > 0$, simply assume that the signal seen was actually a signal $\sig\in\signals$ such that $\transitions(\state, \act)(\sig, \state') > 0$ (which exists by construction of $\pomdp'$).
		This way, the assumed belief support is just an overapproximation of the actual singleton belief support.
		As $\strat$ wins almost surely for $\Safety(\safe)$ in $\pomdp$, even the overapproximated belief support will never contain any state in $\safe$.
	\end{itemize}
	The strategy built is actually almost sure for $\Safety(\safe)$ in $\pomdp'$, so it is in particular almost sure for $\Parity(\pri)$.

	Assume now that there is no almost-sure strategy for $\Safety(\safe)$ in $\pomdp$.
	Then, by Lemma~\ref{lem:pomdpReach}, there is $n\in\IN$ and $\alpha > 0$ such that for all strategies $\strat\in\strats{\pomdp}$, $\prob{\strat}{\pomdp}{\Reach^{\le n}(\safe)} \ge \alpha$.

	Consider now any strategy $\strat'$ on $\pomdp'$.
	Almost surely, infinitely many resets (with signal $\sig_\mathsf{reset}$) happen.
	Also almost surely, infinitely often, after each reset, the game lasts for more than $n$ steps without any revelation or reset.
	Under this condition, the probability to visit $\safe$ is at least $\alpha$.
	Hence, there is almost surely and infinitely often a lower-bounded probability to visit $\safe$, so $\safe$ is almost surely visited infinitely often.
	Hence, there is no almost-sure strategy for $\Parity(\pri)$ in $\pomdp$.
\end{proof}

We now show that the analysis of the belief-support MDP is complete for strongly revealing POMDPs.

\strongComplete*
\begin{proof}
	Let $\strat\in\strats{\pomdp}$ be an almost-sure strategy for $\Parity(\pri)$ in $\pomdp$.

	Let $\ECs_{\strat}^\pomdp = \{\ec = (\ecStates, \ecActions) \mid \prob{\strat}{\pomdp}{\inf(\play) = U} > 0\}$ be the end components that $\strat$ can end up in, and visit exactly all their states and actions infinitely often, in the underlying MDP of~$\pomdp$.
	We build an almost-sure strategy in $\beliefMDP$.
	By Lemma~\ref{lem:bijectionStrategiesAS}, there is a strategy on $\beliefMDP$ that almost surely reaches a singleton belief support from some end component in $\ECs_{\strat}^\pomdp$.
	We now define what the strategy does after such a singleton belief is reached.

	Let $\ec = (\ecStates, \ecActions) \in \ECs_{\strat}^\pomdp$.
	As it is a possible end component of $\strat$, $p_\mathsf{max} = \max \pri(\ec)$ is even.
	Let $\state_\mathsf{max}^\ec \in \ecStates$ such that $\pri(\state_\mathsf{max}^\ec) = p_\mathsf{max}^\ec$.
	As an end component is strongly connected, from any state $\state\in \ecStates$, there is a history from $\state$ to~$\state_\mathsf{max}^\ec$.
	We denote it $\hist_\state^\ec = \state_0 \xrightarrow{\act_1} \state_1 \xrightarrow{\act_2} \dots \xrightarrow{\act_n} \state_n$, where $\state_0 = \state$ and $\state_n = \state_\mathsf{max}^\ec$.

	Whenever any belief support singleton $\{\state\}$ for some $\state\in \ecStates$ is reached, we play the following strategy on~$\beliefMDP$:
	\begin{itemize}
		\item we try to achieve exactly the history $\hist_\state^\ec$, hoping for a revelation after \emph{every} action (this exploits that $\pomdp$ is strongly revealing);
		\item if that fails, let $B_{\lnot \ecStates} = \{\beliefSupp\in\powerSetNonEmpty{\pomdp} \mid b \cap (\states \setminus \ecStates) \neq \emptyset\}$, i.e., all the belief supports that indicate that $\ecStates$ may have been left.
		We show below that we can play an almost-sure strategy for $\Safety(B_{\lnot \ecStates})$.
		We play this strategy until we fall back to a singleton belief support in $U$, which happens eventually due to $\pomdp$ being strongly revealing.
	\end{itemize}

	This strategy is almost sure for $\Parity(\priMDP)$: eventually, it reaches a singleton belief support from some end component $\ec$ in $\ECs_{\strat}^\pomdp$.
	Then, it only sees beliefs whose states are all in~$\ec$.
	And whenever a singleton belief support is reached, which happens infinitely often, we have a lower-bounded probability to reach $\state_\mathsf{max}^\ec$, so $\state_\mathsf{max}^\ec$ is reached infinitely often almost surely.
	It remains to show that whenever we deviate from a revelation at every step of a history~$\hist_\state^\ec$, we have an almost-sure strategy for $\Safety(B_{\lnot \ecStates})$.

	Let $\ec = (\ecStates, \ecActions)\in\ECs_{\strat}^\pomdp$, $\state\in\ecStates$, and $\hist_\state^\ec = \state_0 \xrightarrow{\act_1} \state_1 \xrightarrow{\act_2} \dots \xrightarrow{\act_n} \state_n$.
	Let $\beliefSupp\in \powerSetNonEmpty{\states}$ be such that $\transitionsMDP(\{\state_i\}, \act_{i+1})(\beliefSupp) > 0$ (i.e., $\beliefSupp$ is a possible successor of $\{\state_i\}$ along the path $\hist_\state$).
	We show that there is an almost-sure strategy from $\beliefSupp$ for $\Safety(B_{\lnot \ecStates})$ in $\beliefMDP$.

	Assume on the contrary that for all strategies~$\strat'\in\strats{\beliefMDP^\beliefSupp}$, we have $\prob{\strat'}{\beliefMDP^\beliefSupp}{\Reach(B_{\lnot \ecStates})} > 0$.
	Then, by Lemma~\ref{lem:mdpReach}, for all strategies~$\strat'\in\strats{\beliefMDP^\beliefSupp}$, we have $\prob{\strat'}{\beliefMDP^\beliefSupp}{\Reach^{\le 2^{\card{\states}}-1}(B_{\lnot \ecStates})} > 0$.
	Hence, by Lemma~\ref{lem:bijectionStrategies}, for all strategies $\strat\in\strats{\pomdp^\beliefSupp}$, $\prob{\strat}{\pomdp^\beliefSupp}{\Reach^{\le 2^{\card{\states}}-1}(B_{\lnot \ecStates})} > 0$.
	This means that for all strategies, there is a history of length at most $2^{\card{\states}}-1$ that exits $\ecStates$ from $\beliefSupp$.
	For pure strategies, the probability to exit $\ecStates$ is therefore at least $\leastProb_\pomdp^{2^{\card{\states}}-1}$; by Theorem~\ref{thm:pureSuffice}, this extends to all (not only pure) strategies.
	This means that every time~$\set{\state_i}$ is visited and $\act_{i+1}$ is played, there is a lower-bounded probability to exit $R$.
	This contradicts that $U$ is an end component.
\end{proof}

To conclude the missing proofs, we discuss our undecidability result for strongly revealing CoB\"uchi games.
The syntax for games was defined in Section~\ref{sec:games}.

\begin{remark}
	The model of~\cite{Bertrand.Genest.ea:2017} allows for distinct signals for both players, and is therefore slightly more general.
	Our undecidability proof works even when the signals given to both players are the same, which is why we opted for this restriction.
\end{remark}

\undecidableGames*
\begin{proof}
	We reduce from the undecidable \emph{value-$1$ problem for probabilistic automata}.
	This problem was already used in the reduction of Theorem~\ref{thm:undecidableParity}; we refer to Appendix~\ref{app:weaklyRevealing} for a definition and an introduction to the problem.

	Let $\atmtn = \atmtnFull$ be a probabilistic automaton and $\finalStates\subseteq \states^\atmtn$ be a set of final states.
	We build a game $\game^\atmtn$ with a CoB\"uchi objective such that \Pone wins almost surely if and only if $\atmtn$ does not have value~$1$ w.r.t.\ $\finalStates$.

	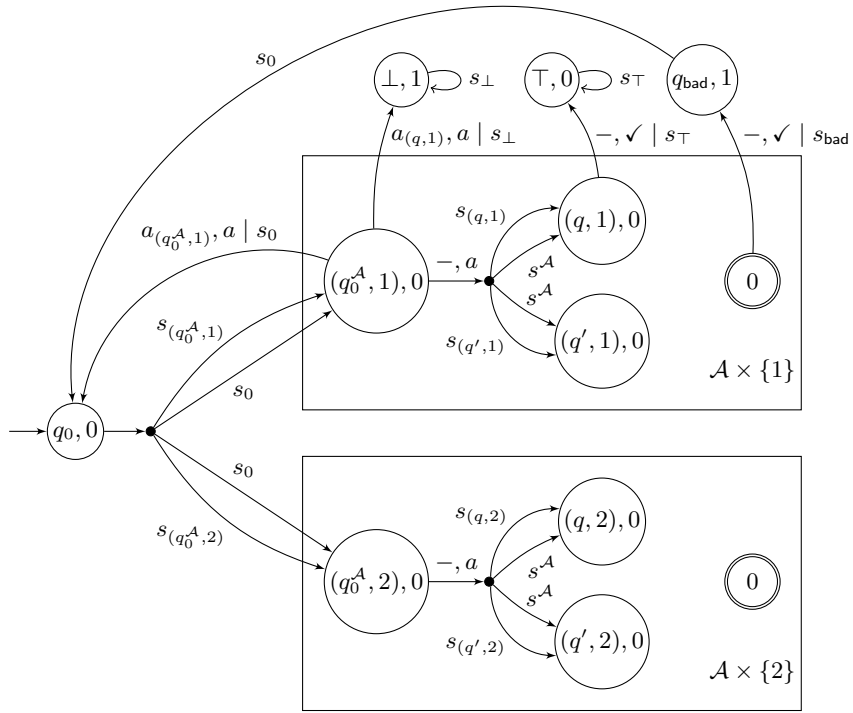
\begin{figure}
		\centering
		\begin{tikzpicture}[every node/.style={font=\small,inner sep=1pt}]
			\draw ($(0,0)$) node[rond] (qinit) {$\initState, 0$};
			\draw ($(qinit.west)-(0.5,0)$) edge[-latex'] (qinit);
			\draw ($(qinit)+(1,0)$) node[dot] (qmid) {};

			\draw ($(qinit)+(4,2)$) node[rond] (q1) {$(\initState^\atmtn, 1), 0$};
			\draw ($(q1)+(1.5,0)$) node[dot] (qmidA) {};
			\draw (q1) edge[-latex'] node[above=2pt] {$-, a$} (qmidA);
			\draw ($(q1)+(3,.8)$) node[rond] (q2) {$(\state, 1), 0$};
			\draw ($(q1)+(3,-.8)$) node[rond] (q1') {$(\state', 1), 0$};
			\draw ($(q1)+(5,0)$) node[rond,accepting] (F) {$0$};
			\draw (qmidA) edge[-latex',bend left=10] node[below right] {$\sig^\atmtn$} (q2);
			\draw (qmidA) edge[-latex',bend right=10] node[above right] {$\sig^\atmtn$} (q1');
			\draw (qmidA) edge[-latex',bend left=45] node[above left] {$\sig_{(\state, 1)}$} (q2);
			\draw (qmidA) edge[-latex',bend right=45] node[below left] {$\sig_{(\state', 1)}$} (q1');
			\node[rectangle, fit=(q1) (q2) (q1') (F), draw=black,inner sep=8pt] (A) {};
			\draw ($(F)+(0,-1.2)$) node[] {$\atmtn\times\{1\}$};

			\draw ($(qinit)+(4,-2)$) node[rond] (q1B) {$(\initState^\atmtn, 2), 0$};
			\draw ($(q1B)+(1.5,0)$) node[dot] (qmidB) {};
			\draw (q1B) edge[-latex'] node[above=2pt] {$-, a$} (qmidB);
			\draw ($(q1B)+(3,.8)$) node[rond] (q2B) {$(\state, 2), 0$};
			\draw ($(q1B)+(3,-.8)$) node[rond] (q1'B) {$(\state', 2), 0$};
			\draw ($(q1B)+(5,0)$) node[rond,accepting] (FB) {$0$};
			\draw (qmidB) edge[-latex',bend left=10] node[below right] {$\sig^\atmtn$} (q2B);
			\draw (qmidB) edge[-latex',bend right=10] node[above right] {$\sig^\atmtn$} (q1'B);
			\draw (qmidB) edge[-latex',bend left=45] node[above left] {$\sig_{(\state, 2)}$} (q2B);
			\draw (qmidB) edge[-latex',bend right=45] node[below left] {$\sig_{(\state', 2)}$} (q1'B);
			\node[rectangle, fit=(q1B) (q2B) (q1'B) (FB), draw=black,inner sep=8pt] (B) {};
			\draw ($(FB)+(0,-1.2)$) node[] {$\atmtn\times\{2\}$};

			\draw ($(A)+(0,2.7)$) node[rond] (top) {$\top, 0$};
			\draw ($(A)+(-2,2.7)$) node[rond] (bot) {$\bot, 1$};
			\draw ($(A)+(2,2.7)$) node[rond] (bad) {$\state_\mathsf{bad}, 1$};
			\draw (top) edge[-latex',loop right] node[right=2pt] {$\sig_\top$} (top);
			\draw (bot) edge[-latex',loop right] node[right=2pt] {$\sig_\bot$} (bot);
			\draw (bad) edge[-latex',bend right=66] node[above left] {$\sig_0$} (qinit);

			\draw (qinit) edge[-latex'] node[above] {} (qmid);
			\draw (qmid) edge[-latex'] node[below=5pt] {$\sig_0$} (q1);
			\draw (qmid) edge[-latex'] node[above=5pt] {$\sig_0$} (q1B);
			\draw (qmid) edge[-latex',bend left=20] node[above left,xshift=1pt,yshift=-1pt] {$\sig_{(\initState^\atmtn, 1)}$} (q1);
			\draw (qmid) edge[-latex',bend right=20] node[below left,xshift=1pt,yshift=-1pt] {$\sig_{(\initState^\atmtn, 2)}$} (q1B);

			\draw (q2) edge[-latex',bend right=15] node[right,yshift=1pt,xshift=1pt] {$-, \checkmark \mid \sig_\top$} (top);
			\draw (F) edge[-latex',bend right=15] node[right,yshift=17pt,xshift=-3pt] {$-, \checkmark\mid \sig_{\mathsf{bad}}$} (bad);

			\draw (q1) edge[-latex',bend right=50] node[above,xshift=14pt,yshift=9pt] {$\act_{(\initState^\atmtn, 1)}, a \mid \sig_0$} (qinit);
			\draw (q1) edge[-latex',bend left=10] node[right,yshift=12pt,xshift=4pt] {$\act_{(\state, 1)}, a \mid \sig_\bot$} (bot);
		\end{tikzpicture}
		\caption{Game $\game^\atmtn$ used in the proof of \Cref{thm:undecidableGames}.
			States with a double circle correspond to states of $\finalStates$.
			Many transitions are not represented, but we illustrate at least one transition of each kind (randomization in the initial state, possible revelations for each state inside $\atmtn\times\{1, 2\}$, right and wrong $\finalStates$-guesses, right and wrong state guesses for \Pone).}
		\label{fig:undecidableGames}
	\end{figure}

	Our construction is illustrated in Figure~\ref{fig:undecidableGames}.
	We briefly give an intuition of the construction before a more formal definition.
	The game happens in a copy of $\atmtn$, in which all states are given priority~$0$.
	\Ptwo picks the letters in $\actions^\atmtn$ to induce transitions in $\atmtn$.
	To see priority~$1$, \Ptwo needs to ``$\finalStates$-guess'', i.e., claim that the current state is in~$\finalStates$ by playing a special action $\checkmark$: if the guess is right, \Ptwo goes to a special state $\state_\mathsf{bad}$ that produces priority~$1$, and the game resets; if the guess is wrong, \Ptwo immediately loses as the next state is a sink state $\top$ with priority~$0$.

	We artificially make the game strongly revealing by adding a special signal which may reveal the state at each transition.
	This makes the game easier for \Ptwo, as it is easier for \Ptwo to reach a state of $\finalStates$ if revelations happen.
	To balance the game, we give to \Pone the power to reset the game by guessing the current state; \Pone has a special action for each state.
	If the guess is correct (which is easy after a revelation), the game resets to the initial state; if the guess is wrong, a sink state $\bot$ with priority~$1$ is reached, which immediately loses for \Pone.
	\Pone may also choose not to guess anything by playing the waiting move $-$.
	To prevent revelations that may be occurring in $\atmtn$ from helping \Pone, we make two copies of $\atmtn$ and start the game randomly in one of the two copies, which are never distinguished except by the added revealing signals (the same trick was used in the proof of Lemma~\ref{lem:complexityWeakLowerBound}).
	This way, \Pone can only guess the current state with certainty after one of the added revealing signals occurred.
	We also give to \Ptwo the power to reset the game by correctly guessing the current state, which prevents a state that cannot reach $\finalStates$ anymore to be attained and revealed, with no way for \Ptwo to ever go back to $\finalStates$.

	Formally, we define the game $\game^\atmtn = (\states, \actions_1, \actions_2, \signals, \transitions, \initState)$ as follows:
	\begin{itemize}
		\item $\states = (\states^\atmtn\times\{1, 2\}) \disjUnion \set{\initState, \state_\mathsf{bad}, \bot, \top}$,
		\item $\actions_1 = \{\act_{(\state, i)}\mid(\state, i)\in\states^\atmtn\times\{1, 2\}\} \disjUnion \{-\}$,
		\item $\actions_2 = \{\act_{(\state, i)}\mid(\state, i)\in\states^\atmtn\times\{1, 2\}\} \disjUnion\actions^\atmtn\disjUnion\{\checkmark\}$,
		\item $\signals = \{\sig^\atmtn, \sig_0, \sig_\mathsf{bad}, \sig_\top, \sig_\bot\} \disjUnion \{\sig_{(\state, i)} \mid (\state, i)\in\states^\atmtn\times\{1, 2\}\}$,
		\item $\top$ and $\bot$ are sinks: for all $\act\in\actions_1\times\actions_2$, $\transitions(\top, \act)(\sig_\top, \top) = \transitions(\bot, \act)(\sig_\bot, \bot) = 1$,
		\item we give the following priorities to states: $\pri(\initState) = 0$, $\pri(\state_\mathsf{bad}) = 1$, $\pri(\top) = 0$, $\pri(\bot) = 1$, and for $(\state, i)\in\states \times \{1, 2\}$, $\pri((\state, i)) = 0$,
		\item after $\state_\mathsf{bad}$, we always move back to $\initState$: for $(\act_1, \act_2)\in\actions_1\times\actions_2$, $\transitions(\state_\mathsf{bad}, (\act_1, \act_2))(\sig_0, \initState) = 1$,
		\item the initial state randomizes over the two copies of the initial state of $\atmtn$ (there is already a positive probability of a revelation): for all $\act\in\actions_1\times\actions_2$, $\transitions(\initState, \act)(\sig_0, (\initState^\atmtn, 1)) = \transitions(\initState, \act)(\sig_0, (\initState^\atmtn, 2)) = \transitions(\initState, \act)(\sig_{(\initState^\atmtn, 1)}, (\initState^\atmtn, 1)) = \transitions(\initState, \act)(\sig_{(\initState^\atmtn, 2)}, (\initState^\atmtn, 2)) = \frac{1}{4}$,
		\item when \Pone plays $-$, transitions in each copy of $\atmtn$ behave like in $\atmtn$, with a positive probability of a revelation at each transition: for $(\state, i)\in\states^\atmtn\times\{1, 2\}$, $\state'\in\states^\atmtn$, and $\act\in\actions_2$, $\transitions((\state, i), (-, \act))(\sig^\atmtn, (\state', i)) = \transitions((\state, i), (-, \act))(\sig_{(\state', i)}, (\state', i)) = \frac{\transitions^\atmtn(\state, \act)(\state')}{2}$,
		\item at each round, players can guess the current state of $\states^\atmtn\times\{1, 2\}$ by playing the corresponding action: a wrong guess ends the game by leading to the sink state that is losing for the player guessing wrong, while a right guess simply resets the game.
		For a state $(\state, i)\in\states^\atmtn\times\{1, 2\}$, we say that action $\act_{(\state, i)}$ is a \emph{right guess}, while an action $\act_{(\state', j)}$ is a \emph{wrong guess} if $\state'\neq\state$ or $j \neq i$.
		Formally, for all $(\state, i)\in\states^\atmtn\times\{1, 2\}$, $\act_1\in\actions_1$, $\act_2\in\actions_2$, we define $\transitions((\state, i), (\act_1, \act_2))(\sig_\bot, \bot) = 1$ if $\act_1$ is a wrong guess, and $\transitions((\state, i), (\act_{(\state', j)}, \act_2))(\sig_\top, \top) = 1$ if $\act_2$ (but not $\act_1$) is a wrong guess.
		For all $(\state, i)\in\states^\atmtn\times\{1, 2\}$, $\act_1\in\actions_1$, $\act_2\in\actions_2$, we define $\transitions((\state, i), (\act_1, \act_2))(\sig_0, \initState) = 1$ if $\act_1$ or $\act_2$ is a right guess (and none is a wrong guess).
		\item If there are no other guesses, \Ptwo can ``$\finalStates$-guess'' whether the current state is in $\finalStates \times \{1, 2\}$ with action $\checkmark$: if it is a right $\finalStates$-guess, then $\state_\mathsf{bad}$ is reached, producing priority~$1$, and the game then resets.
		If it is a wrong $\finalStates$-guess, the next state is $\top$, an immediate win for \Pone.
		Formally, for all $(\state, i)\in\states^\atmtn\times\{1, 2\}$, we define $\transitions((\state, i), (-, \checkmark))(\sig_\mathsf{bad}, \state_\mathsf{bad}) = 1$ if $\state\in\finalStates$, and $\transitions((\state, i), (-, \checkmark))(\sig_\top, \state_\top) = 1$ if $\state\notin\finalStates$.
	\end{itemize}
	Game $\game^\atmtn$ is strongly revealing: every state has a dedicated signal, which is produced with non-zero probability on each incoming transition.
	To win, \Pone needs to avoid ending up in $\bot$ and avoid visiting $\state_\mathsf{bad}$ infinitely often.

	We show the following claim, which suffices to conclude.
	\begin{claim}
		\Pone has an almost-sure strategy for the CoB\"uchi objective of the strongly revealing game $\game^\atmtn$ if and only if $\atmtn$ does not have value~$1$ w.r.t.\ $\finalStates$.
	\end{claim}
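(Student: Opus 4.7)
The proof proceeds by showing the two implications separately, adapting the reduction from the value-$1$ problem for probabilistic automata used in Theorem~\ref{thm:undecidableParity} to the two-player setting.

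For the forward direction (no value~$1$ implies \Pone wins almost surely), I would propose the following strategy $\strat_1$ for \Pone: at each step, if the preceding signal was a revealing signal $\sig_{(\state, i)}$ for some $(\state, i)\in\states^\atmtn\times\{1, 2\}$, play the right-guess action $\act_{(\state, i)}$; otherwise, play the idle action $-$. Since \Pone never plays a wrong guess, the sink $\bot$ is unreachable, and since any revelation triggers a reset via the right-guess rule, the play decomposes into reset-rounds during which no revelation has occurred. Within such a reset-round of length $k$, the state distribution conditional on the history is the uniform mixture over the two copies of $\belief^\atmtn_{a_1\ldots a_k}$, where $a_1,\ldots,a_k$ are \Ptwo's $\atmtn$-actions so far. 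When \Ptwo plays $\checkmark$ while in a reset-round, \Pone is playing $-$, and the play moves to $\state_\mathsf{bad}$ if the current state lies in $\finalStates$ and to $\top$ otherwise; the no-value-$1$ hypothesis yields a uniform $\alpha > 0$ such that the latter event has probability at least $\alpha$. I would then invoke a conditional Borel--Cantelli argument to rule out $\checkmark$ being played infinitely often: on that event $\top$ would be reached almost surely, but $\top$ is absorbing, a contradiction. Hence $\checkmark$ occurs only finitely often almost surely, and so does $\state_\mathsf{bad}$.

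For the backward direction (value~$1$ implies \Pone does not win almost surely), for each $n \ge 1$ I would fix a word $w_n \in (\actions^\atmtn)^*$ with $p_n := \belief^\atmtn_{w_n}(\finalStates) \ge 1 - 2^{-n}$, which exists by value~$1$. I would then propose the strategy $\strat_2$ for \Ptwo: maintain a counter $n$ starting at $1$ and incremented after each play of $\checkmark$; in each reset-round, play the letters of $w_n$ in sequence followed by $\checkmark$, restarting $w_n$ upon any reset by \Pone. Each attempt of $w_n$ completes without any revelation with probability $\ge 1/2^{|w_n|} > 0$, so by independence across attempts $\checkmark$ is played in each super-round almost surely (a super-round being the period between successive increments of $n$). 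The key estimate is that, conditional on $\checkmark$ being played in super-round $n$, the probability of transitioning to $\state_\mathsf{bad}$ is at least $p_n$, uniformly over \Pone's strategies avoiding $\bot$. Granting this, the probability of surviving all super-rounds is at least $\prod_n p_n \ge \prod_n (1 - 2^{-n}) > 0$; on this event $\state_\mathsf{bad}$ is visited infinitely often, so \Pone has no almost-sure strategy.

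The main obstacle is the conditional estimate in the backward direction: \Pone could condition reset decisions on revealed states and avoid resetting in favorable situations, so a direct argument that revelations do not distort the posterior is not immediate. I would handle this via a coupling argument exploiting the two indistinguishable copies of $\atmtn$ in the initial randomization: \Pone's action depends only on the sequence of signals and not on the underlying copy, so pairing up histories across copies should show that the conditional distribution of the state at the time of $\checkmark$ matches the unconditioned belief $\belief^\atmtn_{w_n}$, yielding the required bound~$p_n$.
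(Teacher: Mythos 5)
Your forward direction (no value~$1$ $\Rightarrow$ \Pone wins almost surely) is essentially the paper's argument: the same strategy (right-guess immediately after a revelation, otherwise play $-$), the same observation that $\bot$ is then unreachable, and the same lower-bounded-probability argument showing that \Ptwo cannot play $\checkmark$ from the automaton copies infinitely often without almost surely falling into $\top$. Your claim that, conditional on a revelation-free reset-round, the state distribution is the mixture over the two copies of $\belief^\atmtn_{a_1\ldots a_k}$ is correct because the revealing signal is emitted with probability exactly $\tfrac{1}{2}$ independently of the target state, so the conditioning is unbiased. This half is fine.

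The backward direction has a genuine gap, and it is exactly at the point you flag as the ``main obstacle.'' Your \Ptwo strategy only restarts $w_n$ ``upon any reset by \Pone,'' i.e.\ it never uses \Ptwo's own guessing actions $\act_{(\state,i)}$. Consider a value-$1$ automaton $\atmtn$ with a reachable state $\state^*$ from which $\finalStates$ is unreachable (standard value-$1$ examples have such dead states), and the \Pone strategy that right-guesses after every revelation \emph{except} revelations of $\state^*$, after which it plays $-$ forever. With probability bounded away from zero per attempt of $w_n$, the signal $\sig_{(\state^*,i)}$ occurs; \Pone then declines to reset, your \Ptwo finishes $w_n$ from the now-known dead state and plays $\checkmark$, and the play moves to $\top$. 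Over infinitely many super-rounds this happens almost surely, so your \Ptwo does not witness the failure of almost-sure winning. Your proposed fix—a coupling across the two copies—cannot repair this: the two-copy trick only prevents \Pone from guessing \emph{without} a revelation; once a revelation has occurred and been ignored, the posterior is a Dirac on the revealed state for both players, and no pairing of histories makes the distribution at $\checkmark$-time equal to $\belief^\atmtn_{w_n}$. The paper's construction gives \Ptwo the reset-by-guessing power precisely for this reason, and its \Ptwo strategy plays $\act_{(\state,i)}$ itself after \emph{every} revelation $\sig_{(\state,i)}$, so that $\checkmark$ is only ever played after a revelation-free execution of $w_n$, where the conditional distribution really is $\belief^\atmtn_{w_n}$ and the success probability is at least $1-2^{-n}$. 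Adding this self-reset rule to your \Ptwo strategy (and noting, as the paper does, that a guess by \Pone not preceded by a revelation is wrong with probability at least $\tfrac{1}{2}$ and hence reaches $\bot$ with positive probability) closes the gap.
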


	We prove the claim.
	Assume first that $\atmtn$ has value~$1$ w.r.t.~$\finalStates$.
	We show that \Ptwo has a positively winning strategy.
	As $\atmtn$ has value~$1$ w.r.t.\ $\finalStates$, there is a sequence $\strat_1, \strat_2, \ldots$ of words in $(\actions^\atmtn)^*$ such that $\belief^\atmtn_{\strat_i}(\finalStates) \ge 1 - \frac{1}{2^i}$ (notation $\belief^\atmtn_{\strat_i}$ was defined in Appendix~\ref{app:weaklyRevealing}).
	To define the strategy of \Ptwo, we split a play into rounds.
	The game starts at round $1$.
	At round $i$, \Ptwo tries to play word $\strat_i$ fully in a copy of $\atmtn$ without any revelation (i.e., a signal $\sig_{(\state, i)}$ for some $(\state, i)\in\states^\atmtn\times\{1, 2\}$); if such a revelation $\sig_{(\state, i)}$ happens, \Ptwo plays the corresponding action $\act_{(\state, i)}$ to reset the game and tries to play $\strat_i$ again.
	Once $\strat_i$ can be played fully without a revelation (which happens eventually with probability~$1$), \Ptwo plays $\checkmark$, which goes to $\state_\mathsf{bad}$ with probability $\ge 1 - \frac{1}{2^i}$.
	The strategy then moves over to round $i+1$.
	Assuming \Pone only plays $-$, this strategy wins for \Ptwo with probability greater than the infinite product $\prod_{i \ge 1} (1 - \frac{1}{2^i})$, which is a positive number.
	If \Pone tries a guess when in a copy of $\atmtn$, either a revelation just happened, in which case \Ptwo was resetting the game anyway, or the probability of a wrong guess for \Pone is at least $\frac{1}{2}$, which immediately wins the game with positive probability for \Ptwo.

	Assume now that $\atmtn$ does not have value~$1$ w.r.t.\ $\finalStates$.
	This means that there is $\alpha > 0$ such that for all finite words $\strat\in(\actions^\atmtn)^*$, $\belief^\atmtn_\strat(\finalStates) \le 1 - \alpha$.
	We define a strategy of \Pone: play a right guess whenever a revelation happens in $\states^\atmtn\times\{1,2\}$, and otherwise always play $-$.
	We show that this strategy is almost sure for \Pone.
	Observe that this strategy guarantees that $\bot$ is never reached and that the game resets infinitely often (unless the winning state $\top$ is reached due to a mistake of \Ptwo).
	To win, \Ptwo needs to visit $\state_\mathsf{bad}$ infinitely often, which requires to play~$\checkmark$ infinitely often while in a state of $\finalStates\times\{1, 2\}$.
	After every reset, two things can happen when the play moves into $\states^\atmtn\times\{1, 2\}$:
	\begin{itemize}
		\item either a revelation happens, in which case \Pone immediately resets the game;
		\item or \Ptwo plays some word $\strat\in(\actions^\atmtn)^*$ without any revelation and then attempts to play $\checkmark$.
		This reaches $\top$ with probability $\ge \alpha$.
	\end{itemize}
	Either \Ptwo plays $\checkmark$ only finitely often, in which case priority~$1$ is seen at most finitely often, or \Ptwo attempts $\checkmark$ infinitely often from a non-revealed state of $\states^\atmtn\times\{1, 2\}$, which eventually leads to $\top$ almost surely.
	In both cases, \Pone wins almost surely.
\end{proof}

\end{document}